\newcommand{\R}{\mathbb R}
\DeclareMathOperator*{\argmin}{arg\,min}
\theoremstyle{thmstyleone}%
\newtheorem{theorem}{Theorem}
\newtheorem{proposition}[theorem]{Proposition}%
\newtheorem{lemma}[theorem]{Lemma}
\theoremstyle{thmstyletwo}%
\newtheorem{remark}{Remark}%
\theoremstyle{thmstylethree}%
\newtheorem{definition}{Definition}%
\begin{document}

\title[Article Title]{Gradient Projection onto Historical Descent Directions for Communication-Efficient Federated Learning

}


\author*[1]{\fnm{Arnaud} \sur{Descours}}\email{arnaud.descours@univ-lyon1.fr}
\equalcont{These authors contributed equally to this work.}

\author[2]{\fnm{Léonard} \sur{Deroose}}\email{leonard.deroose@inria.fr}
\equalcont{These authors contributed equally to this work.}

\author[2]{\fnm{Jan} \sur{Ramon}}\email{jan.ramon@inria.fr}

\affil*[1]{\orgdiv{ISFA}, \orgname{UCBL}, \orgaddress{\city{Lyon}, \country{France}}}

\affil[2]{\orgdiv{INRIA}, \orgaddress{\city{Lille}, \country{France}}}



\abstract{
Federated Learning (FL) enables decentralized model training across multiple clients while optionally preserving data privacy. However, communication efficiency remains a critical bottleneck, particularly for large-scale models.
In this work, we introduce two complementary algorithms: \texttt{ProjFL}, designed for unbiased compressors, and \texttt{ProjFL+EF}, tailored for biased compressors through an Error Feedback mechanism. Both methods rely on projecting local gradients onto a shared client–server subspace spanned by historical descent directions, enabling efficient information exchange with minimal communication overhead.
We establish convergence guarantees for both algorithms under strongly convex, convex, and non-convex settings. Empirical evaluations on standard FL classification benchmarks with deep neural networks show that \texttt{ProjFL} and \texttt{ProjFL+EF} achieve accuracy comparable to existing baselines while substantially reducing communication costs.
}

\keywords{Optimization, Neural Networks, Federated Learning, Communication cost}



\maketitle

\section{Introduction}\label{sec1}

In recent years, Federated Learning (FL) \cite{MAL-083} has emerged as a promising paradigm for training Machine Learning (ML) models across distributed data owners, without requiring direct access to the underlying data. This framework is particularly attractive in a landscape where organizations are increasingly concerned about data privacy, regulatory constraints (GDPR, AI Act), and the rising costs of centralized data infrastructure.
FL is a setting where multiple entities, called clients, collaborate in solving a ML problem, under the coordination of a central server. 
Mathematically, various ML tasks can be formulated as the optimization problem: 
\begin{equation}\label{eq.Minimization_pb}
\text{Minimize } f(w) = \frac{1}{M}\sum_{i=1}^M f_i(w)\  \text{ over }\ \mathbb{R}^d,
\end{equation}
where the vector $w$ represents the parameters of a statistical model and $f_i:\R^d\to\R$ is the local objective function associated with client $i\in[M]$.
Typically, in ML, the functions $f_i$ take the form
\begin{equation*}
f_i(w) = \mathbb{E}_{(x,y)\sim \pi_i}\left[\mathsf{L}(y, h_w(x))\right],
\end{equation*}
where, in a supervised learning task, $h_w: \mathcal{X} \to \mathbb{R}$ is the prediction function (\textit{e.g.}, a neural network), and $\mathsf{L}: \mathbb{R} \times \mathbb{R} \to \mathbb{R}_+$ a loss function. The data distribution $\pi_i$ is the underlying data distribution of client $i$, and is typically unknown. In practice, $\pi_i$ may be defined as a discrete distribution over the local dataset of client $i$, leading to empirical risk minimization. 
This motivates the use of either deterministic gradient descent (GD) or, more commonly, to stochastic gradient descent (SGD), which is preferred for its computational efficiency. 

The communication cost is a major bottleneck in FL, especially for optimization of models with many parameters. Two main strategies have been explored to mitigate this issue:
\begin{enumerate}
    \item \textbf{Local training}, which consists of reducing the communication frequency. Clients perform multiple local updates before sending their weights to the server for averaging.
    
    \item \textbf{Gradient compression}, where compressed information is sent instead of full-dimensional gradients.

\end{enumerate}

The first strategy involves performing multiple local updates (\textit{e.g.}, several SGD steps) before each communication round with the server, thereby reducing communication frequency. This classical approach, known as \texttt{FedAvg} \cite{pmlr-v54-mcmahan17a}, has inspired several extensions, such as \texttt{SCAFFOLD} \cite{pmlr-v119-karimireddy20a} and \texttt{FedPAGE} \cite{zhao2021fedpage}.
The second strategy can be broadly divided into two subcategories.
The first focuses on designing and analyzing various types of compression operators (see, \textit{e.g.}, \cite[Table 1]{xu2020compressed} and the analysis of \cite{philippenko2023compressed}).
The second aims to develop algorithms that remain effective under general compression schemes-or at least under broad classes of them. For example, several methods incorporate mechanisms to track and correct the compression error \cite{pmlr-v97-karimireddy19a,NEURIPS2021_231141b3}.
In this work, we focus on the second strategy and propose a new algorithmic approach for gradient compression in FL.

\subsection{Main Contributions} Our contributions are summarized as follows:

\begin{itemize}
\item \textbf{Algorithm Design:} We propose \texttt{ProjFL} (Algorithm~\ref{fed-avg-proj-var2}), a novel federated learning method designed for unbiased compressors, and \texttt{ProjFL+EF} (Algorithm~\ref{fed-avg-proj-EF}), an extension of the approach tailored for biased compressors that incorporates Error Feedback. Both methods project gradients onto a shared client–server subspace.

\item \textbf{Theoretical Guarantees:} Under standard assumptions, we establish: \textit{(i)} a linear convergence rate to a noise ball when optimizing strongly-convex objectives  (item 1 of Theorems~\ref{thm:conv-sgd} and~\ref{thm:EF_strong_conv});  \textit{(ii)} a $O(1/T)$ convergence rate towards a noise ball around the minimum of convex objectives (item 2 of Theorems~\ref{thm:conv-sgd} and~\ref{thm:EF_strong_conv}); and \textit{(iii)} a $O(1/T)$ convergence rate towards near-stationary points for smooth non-convex objectives (item 3 of Theorems~\ref{thm:conv-sgd} and~\ref{thm:EF_strong_conv}). 
\item \textbf{Empirical Validation:} We evaluate our methods on large-scale neural network models (LeNet-5, ResNet-20). Our methods exhibit accuracy and stability comparable to those of state-of-the-art baselines: it converges quickly without requiring careful hyperparameter tuning. Furthermore, it achieves up to a $8$×  reduction in communication cost compared to existing methods.  

\end{itemize}
\paragraph{Outline.} The rest of the paper is organized as follows. Section~\ref{sec:setting} introduces the formal problem setting and reviews related work. Section~\ref{sec:improvements} presents our proposed methods, which are theoretically analyzed in Section~\ref{sec:theor_results}. Numerical experiments are reported in Section~\ref{sec:numerics_main}. We conclude with future research directions in Section~\ref{sec:conclusion}.

\section{Setting and State of the Art}
\label{sec:setting}

To solve the optimization problem~\eqref{eq.Minimization_pb} under communication constraints, a classical algorithm is \texttt{FedAvg} with compression (see Algorithm~\ref{alg:fedavg_compr} in Appendix~\ref{sec:appx_num_exp}), which can be written as:
\begin{equation}
\label{eq.update_FedAvg}
w_{t+1} = w_t - \frac{\eta}{M} \sum_{i=1}^M \mathcal{C}(g_t^i),
\end{equation}
where $g_t^i$ is the stochastic gradient computed by client $i$ at iteration $t \in \mathbb{N}$, and $\mathcal{C}: \mathbb{R}^d \to \mathbb{R}^d$ is a compression operator. Each $g_t^i$ is assumed to be an unbiased estimator of the true gradient, \textit{i.e.}, $\mathbb{E}[g_t^i \mid w_t] = \nabla f_i(w_t)$.

For ease of exposition and analysis, we treat $\mathcal{C}$ as a mapping from $\mathbb{R}^d$ to $\mathbb{R}^d$, although in practical implementations: \textit{(i)} the model parameters are typically updated layer-wise;
\textit{(ii)} several compression techniques require adaptations to the standard update rule~\eqref{eq.update_FedAvg}. 
We now review commonly used classes of compressors.

\subsection{Compressors}

\paragraph{Quantization-based Compressors.}
These methods reduce communication by lowering the bit precision of the transmitted gradient values. Two common approaches are:

\begin{itemize}
    \item \textbf{Precision reduction:} Directly reducing the bitwidth of floating-point values. For instance, 8-bit quantization \cite{dettmers20168bit} or even 1-bit representations \cite{seide20141} are commonly used.
    
    \item \textbf{Dictionary-based methods:} Gradients are quantized via a shared codebook. For example, QSGD \cite{NIPS2017_6c340f25} defines the compressed vector $\mathcal{C}(g)$ componentwise as:
    \begin{equation}\label{eq.dictionary-based}
    \mathcal{C}(g)_j = \|g\|_2 \cdot \operatorname{sgn}(g_j) \cdot \xi_j(g, s), \quad j \in [d],
    \end{equation}
    where $s$ is the number of quantization levels, and the random variable $\xi_j(g, s)$ satisfies
    \[
    \xi_j(g, s) =
    \begin{cases}
    \frac{\ell}{s} & \text{with probability } 1+\ell - \frac{|g_j|}{\|g\|_2}s, \\
    \frac{\ell + 1}{s} & \text{otherwise},
    \end{cases}
    \]
    with $\ell\in\mathbb N$ chosen so that $\frac{|g_j|}{\|g\|_2} \in \left[\frac{\ell}{s}, \frac{\ell+1}{s}\right]$.

    In this scheme, clients send the tuple $(\|g\|_2, \sigma, \zeta)$, where $\sigma$ contains the signs of $g$ and $\zeta$ the quantized coefficients.

\end{itemize}
For additional examples, including low-rank quantization methods, we refer the reader to \cite{xu2020compressed, philippenko2023compressed}.

\paragraph{Sparsification-based Compressors.}
These methods reduce communication by enforcing sparsity, \textit{i.e.}, zeroing out a large fraction of the gradient coordinates:

\begin{itemize}
    \item \textbf{Rand-$k$:} Uniformly selects $k$ coordinates to retain:
    \[
    \mathcal{C}(g) = \frac{d}{k} \sum_{j \in S} g_j e_j,
    \]
    where $S$ is a random subset of $[d]$ with cardinality $k$, and $\{e_j\}_{j=1}^d$ is the canonical basis of $\mathbb{R}^d$. The scaling factor $d/k$ ensures unbiasedness.

    In practice, sparsification is typically applied layer-wise, and $k$ may be set as a fixed proportion of non-zero coordinates.

    \item \textbf{Top-$k$:} Retains the $k$ largest-magnitude components:
    \[
    \mathcal{C}(g) = \sum_{i=d-k+1}^d g_{(i)} e_{(i)},
    \]
    where coordinates are sorted by absolute value: $|g_{(1)}| \le \cdots \le |g_{(d)}|$. Variants such as Threshold-$v$ \cite{Dutta_Bergou_Abdelmoniem_Ho_Sahu_Canini_Kalnis_2020} retain all components exceeding a fixed threshold.
\end{itemize}

At the implementation level, a sparse vector is typically represented using two components: one vector containing the values of the selected elements of $g$, and another containing the ordered indices of the nonzero elements of $\mathcal{C}(g)$.
Quantization and sparsification methods can be combined: for example, one can first apply a \textbf{Top}-$k$ compressor and then quantize the $k$ nonzero components to reduce their precision.
In the context of analyzing the convergence of~\eqref{eq.update_FedAvg}, it is useful to distinguish compressors based on their bias properties. A compressor $\mathcal{C}: \R^d \to \R^d$ is said to be unbiased if for all $g \in \R^d$, $\mathbb{E}[\mathcal{C}(g)] = g$ (see Assumption~\ref{as:compressor} for details); otherwise, it is biased (see Assumption~\ref{as:compressorB}). Examples of unbiased compressors include the dictionary-based method defined in~\eqref{eq.dictionary-based} and \textbf{Rand}-$k$, whereas \textbf{Top}-$k$ is biased. We refer to \cite{khirirat2018distributed,10.1093/imaiai/iaab006} for comprehensive discussions on unbiased compressors.
Unbiased compressors enjoy appealing theoretical guarantees: for example, in gradient descent, one can establish convergence to a neighborhood of the optimum under unbiased compression when optimizing convex objectives \cite{khirirat2018distributed}. In contrast, analyzing biased compressors is more challenging. In fact, it has been shown that no general convergence guarantees can be obtained for biased compression, even in the strongly convex setting \cite[Section 5.2]{JMLR:v24:21-1548}.
Nevertheless, in practice, biased compressors such as \textbf{Top}-$k$ or its variants often outperform unbiased, randomized alternatives \cite{7835789,aji2017sparse}. To better understand and stabilize these biased methods, several new algorithmic frameworks have been proposed, which we now review.

\subsection{Algorithm Design}

A widely adopted mechanism in communication-efficient optimization is Error Feedback (EF), where each client stores the accumulated compression error and reinjects it in the next update \cite{seide20141,pmlr-v97-karimireddy19a}. This approach enables convergence guarantees even when using biased compressors, see \cite{JMLR:v21:19-748,stich2018sparsified,NEURIPS2018_31445061,JMLR:v24:21-1548}. Moreover, EF can also be applied on the server side, especially in cross-device FL scenarios \cite{pmlr-v97-tang19d}.

To improve theoretical guarantees and stability, the \texttt{EF21} method was proposed in \cite{NEURIPS2021_231141b3}. Rather than applying compression directly to gradients, \texttt{EF21} compresses the difference between the current stochastic gradient and the previous descent direction. This design has inspired a series of extensions  \cite{fatkhullin2021ef21,makarenko2022adaptive,pmlr-v202-gruntkowska23a,NEURIPS2023_f0b1515b}.

Another influential approach is \texttt{DIANA} \cite{Mishchenko27092024}, which introduces a shared memory vector maintained by both clients and the server. Compression is applied to the deviation between this memory and the local stochastic gradient. While the original method includes a proximal step to address regularized problems, a simplified version without this step can be used in the absence of regularization. Further developments and analyses around \texttt{DIANA} can be found in \cite{horvath2023stochastic,pmlr-v190-condat22a,pmlr-v108-gorbunov20a}.



\section{Two New Algorithms\label{sec:improvements}}

In this section, we introduce the algorithms evaluated in this work, specifically designed for FL under gradient compression.

\subsection{Algorithm~\ref{fed-avg-proj-var2}: \texttt{ProjFL}}

Our first contribution is the \texttt{ProjFL} algorithm, which we now describe in detail.  
This algorithm leverages the fact that both the server and each client $i$ have access to the local descent direction $\mathsf{D}_t^i$ at iteration $t$.  
At the next iteration, instead of compressing the full stochastic gradient $g_{t+1}^i$, client $i$ projects this gradient onto the one-dimensional subspace generated by $\bar{\mathsf{D}}_t^i$, the average of the previous descent directions $\mathsf{D}_t^i$. This yields the decomposition  
$
g_{t+1}^i = \alpha_{t+1}^i \bar{\mathsf{D}}_t^i + (g_{t+1}^i)^\perp$,
where $(g_{t+1}^i)^\perp$ is orthogonal to $\bar{\mathsf{D}}_t^i$, \textit{i.e.}, $\bar{\mathsf{D}}_t^i \cdot (g_{t+1}^i)^\perp = 0$ (see line 6 of Algorithm~\ref{fed-avg-proj-var2}).  
The scalar $\alpha_{t+1}^i$ thus captures the entire component of the gradient in the known direction, enabling a highly compact representation.
Instead of compressing the full gradient, the client compresses only the orthogonal component $(g_{t+1}^i)^\perp$ and transmits the pair $(\alpha_{t+1}^i, \mathsf{M}_{t+1}^i)$ to the server, where $\mathsf{M}_{t+1}^i$ is the compressed version of $(g_{t+1}^i)^\perp$.

Two extreme cases help illustrate the benefits of this approach:  
\begin{itemize}
  \item If $g_{t+1}^i$ is nearly aligned with $\bar{\mathsf{D}}_t^i$, then $\alpha_{t+1}^i$ captures nearly all the gradient information, and the compressed component is negligible.  
  \item Conversely, if $g_{t+1}^i$ is orthogonal to $\bar{\mathsf{D}}_t^i$, then $\alpha_{t+1}^i = 0$, and only the orthogonal part is transmitted, avoiding any misleading bias from projecting in the wrong direction.
\end{itemize}
Figure~\ref{fig:drawing_EF21_ProjFedAvg} illustrates the key differences between Algorithm~\ref{fed-avg-proj-var2} and \texttt{EF21}.

\begin{algorithm}
\caption{\texttt{ProjFL}}\label{fed-avg-proj-var2}
\begin{algorithmic}[1]
\State \textbf{Initialization:} $w_0\in\R^d$, $\mathsf D_0^i=0$. Number of previous directions of descent to consider $K\in\mathbb N^*$.
\For{$t=0,\dots, T$}
\For{Each client $i$}
\State Receive $w_t$.
\State Compute Stochastic Gradient $g_{t+1}^i$.
\State\label{l6_a} $\bar{\mathsf D}_t^i = \frac1K\sum_{k=0}^{K-1}\mathsf D_{t-k}^i$\Comment{with obvious adaptation\footnotemark when $t<K-1$}
\State $\alpha_{t+1}^i = \frac{g_{t+1}^i\cdot\bar{\mathsf D}_t^i}{\|\bar{\mathsf D}_t^i\|_2^2}$ such that $g_{t+1}^i = \alpha_{t+1}^i\bar{\mathsf D}_t^i+(g_{t+1}^i)^\perp$ satisfying $\bar{\mathsf D}_t^i\cdot (g_{t+1}^i)^\perp=0$. 
\State $\mathsf M_{t+1}^i = \mathcal C((g_{t+1}^i)^\perp)$
\State $\mathsf D_{t+1}^i = \alpha_{t+1}^i\bar{\mathsf D}_t^i +\mathsf M_{t+1}^i$ \Comment{Update the descent direction}
\State Send $(\alpha_{t+1}^i, \mathsf M_{t+1}^i)$ to the Central Server.
\EndFor
\State \textbf{Central Server:}
\State $\bar{\mathsf D}_t^i = \frac1K\sum_{k=0}^{K-1}\mathsf D_{t-k}^i$ \Comment{Same computation as line \ref{l6_a}}
\State $\mathsf D_{t+1}^i = \alpha_{t+1}^i\bar{\mathsf D}_t^i +\mathsf M_{t+1}^i$. \Comment{Update the descent direction}
\State $w_{t+1} = w_t -\frac \eta M\sum_{i=1}^M\mathsf D_{t+1}^i$. 
\EndFor
\end{algorithmic}
\end{algorithm}
\footnotetext{If $t<K-1$, then $\bar{\mathsf D}_t^i$ is defined as $\bar{\mathsf D}_t^i = \frac{1}{t+1}\sum_{k=0}^{t}D_{t-k}^i$. }
\begin{figure}
    \centering
    \begin{tikzpicture}[dot/.style={circle,inner sep=1pt,fill,label={#1},name=#1},
  extended line/.style={shorten >=-#1,shorten <=-#1},
  extended line/.default=1cm]
 \coordinate (A) at (0,0);
 \coordinate (grad) at (4.5,3);
 \coordinate (D) at (2,0.5); 
 \draw [-stealth] (A) -- (grad) node[pos=1.08, font=\small]{$g_{t+1}^i$} ;
 \draw [-stealth] (A) -- (D) node[below ,font=\small]{$\mathsf D_t^i$} ;
 \draw [-stealth,red] (D) -- (grad) node[midway, below right, font=\small] {\textcolor{red}{$g_{t+1}^i- \mathsf D_t^i$}}; 
 \draw[stealth-,blue] (grad) --  node[midway,right]{\textcolor{blue}{$(g_t^i)^\perp$}} ($(A)!(grad)!(D)$) node[below right]{\textcolor{black}{$\alpha_{t+1}^i \mathsf D_t^i$}};     
 \draw [-stealth, dashed] (D) -- ($(A)!(grad)!(D)$);
\end{tikzpicture}
    \caption{Comparison between \texttt{EF21} and \texttt{ProjFL}: In \texttt{EF21}, client $i$ sends the compressed difference $\mathcal{C}(\textcolor{red}{g_{t+1}^i - \mathsf{D}_t^i})$, while in \texttt{ProjFL}, the message is $\mathcal{C}(\textcolor{blue}{(g_t^i)^\perp})$. }
    \label{fig:drawing_EF21_ProjFedAvg}
\end{figure}

\subsection{Algorithm~\ref{fed-avg-proj-EF}: \texttt{ProjFL+EF}}

We now present an extension of Algorithm~\ref{fed-avg-proj-var2} that incorporates the Error Feedback mechanism.  
In Algorithm~\ref{fed-avg-proj-EF}, we modify Algorithm~\ref{fed-avg-proj-var2} to explicitly account for the compression error.  
More precisely, after computing the decomposition \( g_{t+1}^i = \alpha_{t+1}^i \bar{\mathsf D}_t^i + (g_{t+1}^i)^\perp \), each client adds the previous compression error \( e_t^i \) to the orthogonal component.  
The message sent to the server is then \( (\eta\alpha_{t+1}^i, \mathcal{C}(\eta(g_{t+1}^i)^\perp + e_t^i)) \).

\begin{algorithm}
\caption{\texttt{ProjFL+EF}} \label{fed-avg-proj-EF}
\begin{algorithmic}[1]
\State \textbf{Initialization:} $w_0\in \R^d$,  $\mathsf D_0^i=e_t^i=0$,  $K\in\mathbb N^*$. 
\For{$t=0,\dots, T$}
\For{each client $i$}
\State Receive $w_t$.
\State Compute Stochastic Gradient $g_{t+1}^i$.  
\State\label{l6_b} $\bar{\mathsf D}_t^i = \frac1K\sum_{k=0}^{K-1}\mathsf D_{t-k}^i$\Comment{with obvious adaptation when $t<K-1$ (see Alg. \ref{fed-avg-proj-var2})}
\State  $\alpha_{t+1}^i = \frac{g_{t+1}^i\cdot\bar{\mathsf D}_t^i}{\|\bar{\mathsf D}_t^i\|_2^2}$ such that $g_{t+1}^i = \alpha_{t+1}^i\bar{\mathsf D}_t^i+(g_{t+1}^i)^\perp$ satisfying $\bar{\mathsf D}_t^i\cdot (g_{t+1}^i)^\perp=0$.
\State $\mathsf M_{t+1}^i = \mathcal C(\eta(g_{t+1}^i)^\perp+e_t^i)$
\State$\mathsf D_{t+1}^i = \eta\alpha_{t+1}^i\bar{\mathsf D}_t^i +\mathsf M_{t+1}^i$ \Comment{Update the descent direction}
\State$e_{t+1}^i = \eta(g_{t+1}^i)^\perp+e_t^i - \mathsf M_{t+1}^i$ \Comment{Update compression error}
\State Send $(\eta\alpha_{t+1}^i, \mathsf M_{t+1}^i)$ to the Central Server.
\EndFor
\State \textbf{Central Server:}
\State $\bar{\mathsf D}_t^i = \frac1K\sum_{k=0}^{K-1}\mathsf D_{t-k}^i$ \Comment{Same computation as line \ref{l6_b}}
\State  $\mathsf D_{t+1}^i = \eta\alpha_{t+1}^i\bar{\mathsf D}_t^i +\mathsf M_{t+1}^i$. \Comment{Update the descent direction}
\State $w_{t+1} = w_t -\frac 1M\sum_{i=1}^M\mathsf D_{t+1}^i$. 
\EndFor
\end{algorithmic}
\end{algorithm}

\section{Theoretical convergence results\label{sec:theor_results} }

Let us first introduce some definitions. 

\begin{definition}[$\mu$-strongly convex functions]\label{def:mu-strong_conv}
A function $f:\R^d\to\R$ is called strongly convex if there exists $\mu>0$ such that for all $x,y\in\R^d$, 
$$f(y)\ge f(x)+\langle\nabla f(x),y-x\rangle+\frac\mu 2\|x-y\|^2.$$     
\end{definition}

\begin{definition}[$L$-smooth functions]\label{def:L-smooth}
A function $f:\R^d\to\R$ is called $L$-smooth if $\nabla f$ is $L$-Lipschitz continuous, \textit{i.e.}, for all $x,y\in\R^d$, \begin{equation}\label{ineg:defLsmooth}
\|\nabla f(x)-\nabla f(y)\|\le L\|x-y\|.
\end{equation}
\end{definition}

\subsection{On Algorithm~\ref{fed-avg-proj-var2}: \texttt{ProjFL}}
We give below convergence results on \texttt{ProjFL} (Algorithm~\ref{fed-avg-proj-var2}). 
Let us introduce the following assumptions: 
\begin{enumerate}[label=\textsc{h\arabic*}, ref=\textsc{h\arabic*}]
\item\label{as:compressor} There exists an i.i.d. sequence of compressors $(\mathcal C_t^i)_{i\in[M],t\ge1}$ where for any $i\in[M]$ and $t\ge0$, $\mathcal C_{t+1}^i:\R^d\to\R$ is the compressor employed by client $i$ at iteration $t$. For convenience, we simply write $\mathcal C$. We assume that for all $w\in\R^d$, $\mathbb E\big[\mathcal C(w)\big] = w$  and $\mathbb E\big[\|\mathcal C(w)\|^2\big]\le \beta \|w\|^2$ for some $\beta\ge1$.\footnote{Note that since $\mathbb E[\|X-\mathbb E[X]\|_2^2]=\mathbb E[\|X\|_2^2]-\|\mathbb E[X]\|_2^2$, Assumption~\ref{as:compressor} implies $\mathbb E[\|\mathcal C(w)-w\|_2^2]\le (\beta-1)\|w\|_2^2$.}  
\item\label{as:bounded_g_diss} There exist $a,b>0$ such that for all $w\in\R^d, \frac1M\sum_{i=1}^M\|\nabla f_i(w)\|^2\le a+b\|\nabla f(w)\|^2$.  
\item\label{as:stoc_gradient} There exists an i.i.d. sequence of random vector fields $(\xi_t^i)_{i\in[M],t\ge1}$ such that for any $i\in[M]$ and  $t\ge0$,  $g_{t+1}^i=\nabla f_i(w_t)+\xi_{t+1}^i(w_t)$ and for any $w\in\R^d$, $\mathbb E[\xi_t^i(w)] = 0$ and $\mathbb E[\|\xi_t^i(w)\|^2]\le\sigma^2$ for some $\sigma>0$. Moreover, the sequences $(\xi_t^i)_{i\in[M],t\ge1}$ and $(\mathcal C_t^i)_{i\in[M],t\ge1}$ are independent.  
\end{enumerate} 
We now discuss the role and implications of these assumptions.  
\begin{itemize}
    \item Assumption~\ref{as:compressor} restricts our analysis to \emph{unbiased} compression operators. Crucially, since Algorithm~\ref{fed-avg-proj-var2} lacks an error feedback mechanism, convergence guarantees cannot be extended to biased compressors in this framework.  For biased compressors, see Section~\ref{sec:theory_EF}.  

    \item Assumption~\ref{as:bounded_g_diss} formalizes the \emph{bounded gradient dissimilarity} condition, which quantifies data heterogeneity across clients by controlling the deviation between local gradients $\nabla f_i(w)$ and their global average $\nabla f(w)$.  

    \item Assumption~\ref{as:stoc_gradient} bounds the variance of stochastic gradient noise arising from mini-batch sampling. Notably, this noise intensity may vary both across clients (due to differences in local loss landscapes) and across parameter regions (\textit{e.g.}, near optima versus high-curvature regions).  
\end{itemize}
 
In the following theorem---whose proof is given in Appendix~\ref{sec:proof_conv_sgd}---we prove a linear convergence rate for Algorithm~\ref{fed-avg-proj-var2} towards a neighborhood of the optimal point. 
\begin{theorem}\label{thm:conv-sgd}
Assume~\ref{as:compressor}-\ref{as:bounded_g_diss}-\ref{as:stoc_gradient}. Then the following holds: 
\begin{enumerate}
    \item\label{thm:noEF:itemSconv} if each $f_i$ is $\mu$-strongly convex and $L$-smooth (see Definitions~\ref{def:mu-strong_conv} and~\ref{def:L-smooth}), then, for any learning rate $0\le \eta\le (1+b\frac{\beta-1}{M})^{-1}\frac{2}{\mu+L}$, the sequence $(w_t)_{t\ge0}$ generated by Algorithm~\ref{fed-avg-proj-var2} satisfies, for any $t\in\mathbb N$,
\begin{equation}\label{eq_conv_rate_sgd}
\mathbb E[\|w_{t}-w^*\|^2] \le \Big(1-2\eta\frac{\mu L}{\mu+L}\Big)^t\mathbb E[\|w_{0}-w^*\|^2] + \eta\frac{\mu+L}{2\mu LM}(a(\beta-1)+\beta\sigma^2), 
\end{equation}
where $w^*=\argmin_{\R^d} f$ is the unique global minimizer. 
\item\label{thm:noEF:itemconv} if each $f_i$ is  convex and $L$-smooth, and 
$f$ admits minimizers\footnote{This holds, for instance, if $f$ is coercive, i.e., $|f(x)|\to+\infty$ as $\|x\|\to+\infty$.} then, for any learning rate $0<\eta\le\frac{1}{2L}(1+b\frac{\beta-1}{M})^{-1}$,  the sequence $(w_t)_{t\ge0}$ generated by Algorithm~\ref{fed-avg-proj-var2} satisfies, for any $T\ge1$,
$$\mathbb E[f(w^{\mathrm{out}})] -f^* \le \frac{1}{(T+1)\eta}\mathbb E[ \|w_0-w^*\|^2] + \frac{\eta}{M}(a(\beta-1)+\beta\sigma^2),$$
where the output $w^{\mathrm{out}}$ is sampled uniformly at random in $\{w_t\}_{t=0}^T$, and $w^*\in\argmin_{\R^d} f$ is any minimizer.  
\item\label{thm:noEF:itemnoconv} if each $f_i$ is  $L$-smooth and bounded from below, then, for any learning rate $0< \eta\le\frac1L(1+b\frac{\beta-1}{M})^{-1}$, the sequence $(w_t)_{t\ge 0}$ generated by Algorithm~\ref{fed-avg-proj-var2} satisfies, for any $T\ge1$,
\begin{align*}
\mathbb E[\|\nabla f(w^{\mathrm{out}})\|^2]\le \frac{2}{(T+1)\eta}\mathbb E[f(w_0)-f^*]+\frac {L\eta}{M}(a(\beta-1)+\beta\sigma^2),
\end{align*}
where the output $w^{\mathrm{out}}$ is sampled uniformly at random in $\{w_t\}_{t=0}^T$ and $f^*=\inf_{\mathbb R^d} f$.
\end{enumerate}
\end{theorem}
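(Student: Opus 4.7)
The plan is to reduce the analysis to a standard SGD argument on the averaged direction $\bar g_t := \frac{1}{M}\sum_{i=1}^M \mathsf D_{t+1}^i$, by showing that the unbiasedness of the compressor makes $\bar g_t$ an unbiased estimator of $\nabla f(w_t)$ with a controllable variance. Let $\mathcal F_t$ denote the filtration containing all randomness up to iteration $t$; in particular $w_t$ and $\bar{\mathsf D}_t^i$ are $\mathcal F_t$-measurable, and $\alpha_{t+1}^i$ is a deterministic function of $(\mathcal F_t, g_{t+1}^i)$. Rewriting
\[\mathsf D_{t+1}^i = \alpha_{t+1}^i \bar{\mathsf D}_t^i + \mathcal C\bigl((g_{t+1}^i)^\perp\bigr) = g_{t+1}^i + \bigl[\mathcal C\bigl((g_{t+1}^i)^\perp\bigr) - (g_{t+1}^i)^\perp\bigr],\]
and using \ref{as:compressor} together with the independence between $\mathcal C$ and $\xi$ from \ref{as:stoc_gradient}, I obtain $\mathbb E[\mathsf D_{t+1}^i\mid\mathcal F_t]=\nabla f_i(w_t)$ and therefore $\mathbb E[\bar g_t\mid\mathcal F_t]=\nabla f(w_t)$. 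So $w_{t+1}=w_t-\eta\bar g_t$ is an unbiased SGD step, and only the second moment remains to be controlled.

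The second step is the bound
\[\mathbb E[\|\bar g_t\|^2\mid\mathcal F_t] \le \Bigl(1+\tfrac{b(\beta-1)}{M}\Bigr)\|\nabla f(w_t)\|^2 + \tfrac{(\beta-1)a + \beta\sigma^2}{M}.\]
To derive it, I decompose $\mathsf D_{t+1}^i-\nabla f_i(w_t) = [\mathcal C((g_{t+1}^i)^\perp)-(g_{t+1}^i)^\perp] + [g_{t+1}^i-\nabla f_i(w_t)]$. Conditioning first on $(\mathcal F_t, g_{t+1}^i)$ kills the cross term and bounds the compression variance by $(\beta-1)\|(g_{t+1}^i)^\perp\|^2 \le (\beta-1)\|g_{t+1}^i\|^2$, while \ref{as:stoc_gradient} gives $\mathbb E[\|g_{t+1}^i\|^2\mid\mathcal F_t]\le\|\nabla f_i(w_t)\|^2+\sigma^2$ and handles the second piece. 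Independence across clients then gives $\mathbb E[\|\bar g_t-\nabla f(w_t)\|^2\mid\mathcal F_t] = \frac1{M^2}\sum_i\mathbb E[\|\mathsf D_{t+1}^i-\nabla f_i(w_t)\|^2\mid\mathcal F_t]$, and \ref{as:bounded_g_diss} converts $\sum_i\|\nabla f_i(w_t)\|^2$ into $a+b\|\nabla f(w_t)\|^2$, yielding the stated bound.

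With these two ingredients each item follows from a classical SGD template applied to $w_{t+1}=w_t-\eta\bar g_t$. For item~\ref{thm:noEF:itemSconv}, expand $\|w_{t+1}-w^*\|^2$, take conditional expectation, and use the strongly-convex / $L$-smooth coercivity $\langle\nabla f(w_t),w_t-w^*\rangle\ge\frac{\mu L}{\mu+L}\|w_t-w^*\|^2+\frac{1}{\mu+L}\|\nabla f(w_t)\|^2$; the step-size $\eta\le\frac{2}{\mu+L}(1+b(\beta-1)/M)^{-1}$ is exactly what absorbs the $\|\nabla f(w_t)\|^2$ coefficient coming from the variance, leaving the contraction factor $1-\frac{2\eta\mu L}{\mu+L}$ plus a constant noise term; summing the geometric series gives~\eqref{eq_conv_rate_sgd}. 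For item~\ref{thm:noEF:itemconv}, use $\langle\nabla f(w_t),w_t-w^*\rangle\ge f(w_t)-f^*$ and $\|\nabla f(w_t)\|^2\le 2L(f(w_t)-f^*)$; the tighter condition $\eta\le\frac{1}{2L}(1+b(\beta-1)/M)^{-1}$ leaves $-\eta(f(w_t)-f^*)$ on the right-hand side, so telescoping and averaging over $t=0,\dots,T$ yield the $O(1/T)$ rate. For item~\ref{thm:noEF:itemnoconv}, apply $L$-smoothness directly to $f$ to get $\mathbb E[f(w_{t+1})\mid\mathcal F_t]\le f(w_t)-\eta\|\nabla f(w_t)\|^2+\frac{L\eta^2}{2}\mathbb E[\|\bar g_t\|^2\mid\mathcal F_t]$; the condition $\eta\le\frac1L(1+b(\beta-1)/M)^{-1}$ leaves $-\frac\eta2\|\nabla f(w_t)\|^2$, and telescoping with $f\ge f^*$ gives the stationary-point bound.

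The main obstacle lies in the first two steps, \emph{i.e.}, verifying that the projection-based construction of $\mathsf D_{t+1}^i$ behaves in expectation like the pure stochastic gradient $g_{t+1}^i$, with variance inflated only by the compression factor acting on the orthogonal component. It is crucial that $\bar{\mathsf D}_t^i$ be $\mathcal F_t$-measurable so that the ``projected'' piece $\alpha_{t+1}^i\bar{\mathsf D}_t^i$ carries no extra randomness beyond that already present in $g_{t+1}^i$; without this property the method would be biased and the SGD template would not close. Once unbiasedness and the variance bound are established, the three items reduce to standard SGD computations, with the step-size thresholds chosen precisely to cancel the $(1+b(\beta-1)/M)\|\nabla f(w_t)\|^2$ factor.
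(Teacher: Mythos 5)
Your proposal is correct and follows essentially the same route as the paper's proof: establish conditional unbiasedness of $\mathsf D_{t+1}^i$, bound its variance by $(\beta-1)\|\nabla f_i(w_t)\|^2+\beta\sigma^2$ per client, convert via \ref{as:bounded_g_diss}, and then run the standard SGD templates (Nesterov's coercivity inequality, convexity plus $\|\nabla f\|^2\le 2L(f-f^*)$, and the descent lemma) with exactly the stated step-size thresholds. The only cosmetic difference is that you bound the compression term via $\mathbb E[\|\mathcal C(w)-w\|^2]\le(\beta-1)\|w\|^2$ applied to $(g_{t+1}^i)^\perp$ together with $\|(g_{t+1}^i)^\perp\|\le\|g_{t+1}^i\|$, whereas the paper uses Pythagoras and the second-moment bound $\mathbb E[\|\mathcal C(w)\|^2]\le\beta\|w\|^2$ directly; these are trivially equivalent (the paper's own footnote to \ref{as:compressor} records this) and yield the identical constant.
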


\begin{remark}
Since $f(w_t) - f^* \le \frac{L}{2} \|w_t - w^*\|^2$ for any $L$-smooth function, item~\ref{thm:noEF:itemSconv} yields
$$
\mathbb{E}[f(w_t)] - f^* \le \frac{L}{2}\Big(1 - 2\eta\frac{\mu L}{\mu + L}\Big)^t \mathbb{E}[\|w_0 - w^*\|^2]
+ \eta\frac{\mu + L}{4\mu M}\big(a(\beta - 1) + \beta\sigma^2\big).
$$
\end{remark}

The first term on the right-hand side of~\eqref{eq_conv_rate_sgd} reflects the initial distance to the optimum.  
The second term is a variance term corresponding to the (squared) radius of the neighborhood around the optimal point to which the algorithm converges.  

This result also provides theoretical support for the common practice of decreasing the learning rate during training.  
A useful heuristic is as follows: at initialization, when the term $\mathbb{E}[\|w_0 - w^*\|^2]$ typically dominates, a relatively large learning rate should be used so that this term is quickly reduced thanks to the linear convergence rate.  
As training progresses and the second term on the right-hand side of~\eqref{eq_conv_rate_sgd} becomes dominant, the effective "starting point" is now closer to the optimum, and using a smaller step size becomes beneficial to reduce variance and improve stability.  

Similar comments apply to items~\ref{thm:noEF:itemconv} and~\ref{thm:noEF:itemnoconv}.  
Note also that when $\beta = 1$ (\textit{i.e.}, no compression), we recover the standard noise terms on the right-hand side of each inequality.  

Finally, note that in many practical applications, these convergence rates may coexist in the following sense: even if $f$ is non-convex, it may still be (strongly) convex in a neighborhood of a local minimizer, meaning that locally, the linear convergence rate of item~\ref{thm:noEF:itemSconv} may describe the final phase of training.

\subsection{On Algorithm~\ref{fed-avg-proj-EF}: \texttt{ProjFL+EF}\label{sec:theory_EF} }

Since the Error Feedback mechanism is designed to mitigate the bias introduced by the compressor, we deal here with biased compressors and thus replace Assumption~\ref{as:compressor} by the following 

\begin{enumerate}[label=\textsc{h\arabic*'}, ref=\textsc{h\arabic*'}]
    \item\label{as:compressorB} There exists an i.i.d. sequence of compressors $(\mathcal C_t^i)_{i\in[M],t\ge1}$ where for any $i\in[M]$ and $t\ge0$, $\mathcal C_{t+1}^i:\R^d\to\R$ are the compressors employed by client $i$ at iteration $t$. For convenience, we simply write $\mathcal C$. We assume that there exists $0<\delta\le 1$ such that for all $w\in\mathbb R^d$, $\mathbb E[\|\mathcal C(w)-w\|^2]\le (1-\delta)\|w\|^2$. 
\end{enumerate}
Our result follows the same structure as Theorem~\ref{thm:conv-sgd}, providing convergence rates for the strongly convex, convex, and non-convex settings.
\begin{theorem}\label{thm:EF_strong_conv}
Assume~\ref{as:compressorB}-\ref{as:bounded_g_diss}-\ref{as:stoc_gradient}. Then the following holds: 
\begin{enumerate}
    \item\label{thm:EF_strong_conv:itemSconv} if each $f_i$ is $\mu$-strongly convex and $L$-smooth (see Definitions~\ref{def:mu-strong_conv} and~\ref{def:L-smooth}), then, for any learning rate $0< \eta\le\min\Big(\frac{\delta}{L(4+\delta)}, \frac{\delta}{\sqrt{40(2L+\mu)bL}} \Big)$, the sequence $(w_t)_{t\ge 0}$ generated by Algorithm~\ref{fed-avg-proj-EF} satisfies, for any $T\ge1$, 
\begin{equation*}
\mathbb E[f(w^{\mathrm{out}})]-f^* \le\frac{10}{\eta}\Big(1-\frac{\eta\mu}{2}\Big)^{T+1} \mathbb E[\|w_0-w^*\|^2]+20(2L+\mu)\frac{(1-\delta)\eta^2}{\delta}\left(\frac{2a}{\delta}+\sigma^2\right) + 10\frac{\eta\sigma^2}{M},
\end{equation*}
where the output $w^{\mathrm{out}}\in\{w_t\}_{t=0}^T$ is chosen to be $w_t$ with probability proportional to $(1-\frac{\eta\mu}{2})^{-t}$.
\item\label{thm:EF_strong_conv:itemconv} if each $f_i$ is convex and $L$-smooth, and assume also that $f$ admits minimizers\footnote{This holds, for instance, if $f$ is coercive, i.e., $|f(x)|\to+\infty$ as $\|x\|\to+\infty$.},  then, for any learning rate $0< \eta\le\min\Big(\frac{\delta}{L(4+\delta)}, \frac{\delta}{\sqrt{80b}L} \Big)$ , the sequence $(w_t)_{t\ge 0}$ generated by Algorithm~\ref{fed-avg-proj-EF}  satisfies, for any $T\ge1$, 
\begin{equation*}
\mathbb E[f(w^{\mathrm{out}})]-f^* \le\frac{10}{\eta(T+1)} \mathbb E[\|w_0-w^*\|^2]+40L\frac{2(1-\delta)\eta^2}{\delta}\left(\frac{2a}{\delta}+\sigma^2\right) + 10\frac{\eta\sigma^2}{M},
\end{equation*}
where the output $w^{\mathrm{out}}$ is sampled uniformly at random in $\{w_t\}_{t=0}^T$, and $w^*\in\argmin_{\R^d} f$ is any minimizer.
\item\label{thm:EF_strong_conv:itemnoconv} if each $f_i$ is  $L$-smooth and bounded from below, then, for any learning rate $0< \eta\le\frac{\delta}{4\sqrt{2(b+1)}L}$, the sequence $(w_t)_{t\ge 0}$ generated by Algorithm~\ref{fed-avg-proj-EF} satisfies, for any $T\ge1$,
\begin{align*}
\mathbb E[\|\nabla f(w^{\mathrm{out}})\|^2]\le \frac{8}{(T+1)\eta}\mathbb E[f( w_0)-f^*] +  \frac{8(1-\delta)\eta^2L^2}{\delta}\left(\frac{2a}{\delta}+\sigma^2\right)+\frac{8\eta L\sigma^2}{2M},
\end{align*}
where $w^{\mathrm{out}}$ is sampled uniformly at random in $\{w_t\}_{t=0}^T$ and $f^*=\inf_{\mathbb R^d} f$.
\end{enumerate}
\end{theorem}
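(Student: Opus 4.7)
The plan is to reduce Algorithm~\ref{fed-avg-proj-EF} to a standard SGD recursion on a suitable virtual iterate, in the spirit of the classical error feedback analyses of \cite{stich2018sparsified,JMLR:v24:21-1548}. Using $\eta g_{t+1}^i = \eta\alpha_{t+1}^i\bar{\mathsf D}_t^i + \eta(g_{t+1}^i)^\perp$ and the error update $\eta(g_{t+1}^i)^\perp = e_{t+1}^i - e_t^i + \mathsf M_{t+1}^i$, the server step rewrites as
\begin{equation*}
w_{t+1} = w_t - \eta\bar g_{t+1} + \frac{1}{M}\sum_{i=1}^M(e_{t+1}^i - e_t^i),\qquad \bar g_{t+1} = \frac{1}{M}\sum_i g_{t+1}^i.
\end{equation*}
Introducing the perturbed iterate $\tilde w_t = w_t - \frac{1}{M}\sum_i e_t^i$, this collapses to the clean recursion $\tilde w_{t+1} = \tilde w_t - \eta \bar g_{t+1}$, which I would analyze with standard SGD tools, while controlling the discrepancy $\|\tilde w_t - w_t\|^2 \le \frac{1}{M}\sum_i \|e_t^i\|^2$.

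Next I would establish a geometric recursion on the error. Applying Assumption~\ref{as:compressorB} and Young's inequality with parameter $\rho = \delta/(2(1-\delta))$ gives
\begin{equation*}
\mathbb E[\|e_{t+1}^i\|^2] \le \bigl(1-\tfrac{\delta}{2}\bigr)\mathbb E[\|e_t^i\|^2] + \tfrac{2(1-\delta)}{\delta}\eta^2\mathbb E[\|(g_{t+1}^i)^\perp\|^2].
\end{equation*}
Because $(g_{t+1}^i)^\perp$ is the orthogonal projection of $g_{t+1}^i$ onto a one-dimensional subspace, $\|(g_{t+1}^i)^\perp\|\le\|g_{t+1}^i\|$, and Assumptions~\ref{as:stoc_gradient} and~\ref{as:bounded_g_diss} yield $\frac{1}{M}\sum_i\mathbb E[\|g_{t+1}^i\|^2]\le a+\sigma^2+b\,\mathbb E[\|\nabla f(w_t)\|^2]$. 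Iterating the geometric bound gives an estimate of the form
\begin{equation*}
\frac{1}{M}\sum_{i=1}^M\mathbb E[\|e_t^i\|^2] \le \frac{4(1-\delta)\eta^2}{\delta^2}\Bigl(a+\sigma^2+b\,\overline{\mathcal G}_t\Bigr),
\end{equation*}
where $\overline{\mathcal G}_t$ is a running weighted average of $\mathbb E[\|\nabla f(w_s)\|^2]$ for $s\le t$.

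I would then specialize to each regime by plugging this bound into the one-step SGD inequality for $\tilde w_{t+1}=\tilde w_t-\eta\bar g_{t+1}$ and translating back to $w_t$ via $L$-smoothness. In Item~\ref{thm:EF_strong_conv:itemSconv}, $\mu$-strong convexity and $L$-smoothness of each $f_i$ yield an inequality of the shape $\mathbb E\|\tilde w_{t+1}-w^*\|^2\le(1-\tfrac{\eta\mu}{2})\mathbb E\|\tilde w_t-w^*\|^2-c\eta(\mathbb E[f(w_t)]-f^*)+\eta^2(\cdots)+\tfrac{\eta\sigma^2}{M}$, which I would turn into a bound on $\mathbb E[f(w^{\mathrm{out}})]-f^*$ through the weighted sampling of $w^{\mathrm{out}}$ with probability proportional to $(1-\tfrac{\eta\mu}{2})^{-t}$, as in \cite{JMLR:v24:21-1548}. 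In Item~\ref{thm:EF_strong_conv:itemconv}, the same inequality with $\mu=0$ telescopes and uniform averaging produces the stated $O(1/T)$ rate. In Item~\ref{thm:EF_strong_conv:itemnoconv}, applying $L$-smoothness directly to $\tilde w_{t+1}=\tilde w_t-\eta\bar g_{t+1}$ and relating $\nabla f(\tilde w_t)$ to $\nabla f(w_t)$ via $L$-Lipschitzness of $\nabla f$ and the error bound, then summing and rearranging, yields the $O(1/T)$ rate for $\mathbb E[\|\nabla f(w^{\mathrm{out}})\|^2]$. In each setting, the step-size thresholds are exactly those needed to absorb the $b\,\overline{\mathcal G}_t$ term (which scales like $\eta^2 b L^2/\delta^2$ after using smoothness to replace $\|\nabla f(w_t)\|^2$ by something controlled by the descent) into the strictly positive descent gain.

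The main obstacle will not be the SGD template but the careful coupling between the error-feedback contraction and the projection step: one has to verify that no term depending on $\|\bar{\mathsf D}_t^i\|$ leaks into the error bound (this is precisely what the orthogonality $\bar{\mathsf D}_t^i\cdot(g_{t+1}^i)^\perp=0$ buys, since a one-dimensional projection is $1$-Lipschitz), and then calibrate the Young constants so that the resulting bound on $\frac{1}{M}\sum_i\mathbb E[\|e_t^i\|^2]$ is strictly dominated by the descent gain for every admissible $\eta$. This balancing is what produces the somewhat intricate step-size conditions of the theorem, and constitutes the most delicate part of the argument.
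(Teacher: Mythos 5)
Your proposal is correct and follows essentially the same route as the paper's proof: the perturbed iterate $\tilde w_t = w_t - \frac{1}{M}\sum_i e_t^i$ yielding the clean recursion $\tilde w_{t+1}=\tilde w_t-\eta g_{t+1}$, a geometric contraction bound on $\frac1M\sum_i\mathbb E[\|e_t^i\|^2]$ unrolled into a discounted sum of past gradient norms (this is exactly Lemma~\ref{lem:bound_e_t_projFLEF}, including its weighted-sum version needed for the $(1-\frac{\eta\mu}{2})^{-t}$ sampling), and the three regime-specific SGD analyses with step-size conditions calibrated to absorb the $b$-dependent error term. One fine point to match the stated constants: by applying Young's inequality directly to $\eta(g_{t+1}^i)^\perp+e_t^i$ you multiply the variance $\sigma^2$ by the factor $(1+1/\rho)\approx 2/\delta$, giving $\frac{2}{\delta}(a+\sigma^2)$ where the theorem has $\frac{2a}{\delta}+\sigma^2$; the paper avoids this by first taking the conditional expectation to split off the zero-mean noise $(\xi_{t+1}^i)^\perp$ (so only the deterministic part $\eta(\nabla f_i(w_t))^\perp+e_t^i$ goes through Young), a one-line refinement you should incorporate.
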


\begin{remark}
Since $\|w^{\mathrm{out}} - w^*\|^2 \le \frac{2}{\mu}\big(f(w^{\mathrm{out}}) - f^*\big)$ for any $\mu$-strongly convex function, item~\ref{thm:EF_strong_conv:itemSconv} yields
$$
\mathbb{E}\big[\|w^{\mathrm{out}} - w^*\|^2\big]
\le
\frac{20}{\eta\mu}\Big(1 - \frac{\eta\mu}{2}\Big)^{T+1}\mathbb{E}\big[\|w_0 - w^*\|^2\big]
+ 40(2L + \mu)\frac{(1 - \delta)\eta^2}{\delta\mu}\left(\frac{2a}{\delta} + \sigma^2\right)
+ 20\frac{\eta\sigma^2}{\mu M}.
$$
\end{remark}

The comments made after Theorem~\ref{thm:conv-sgd} remain relevant in this case.  
Again, when there is no compression $(\delta = 1)$, we recover the usual rates toward a SGD neighborhood of the optimal point.  

\textbf{On the proof of Theorem~\ref{thm:EF_strong_conv}.}  
The proof relies on the analysis of the auxiliary sequence $\tilde{w}_t = w_t - e_t$, where $e_t = \frac{1}{M}\sum_i e_t^i$.  
(Note that $e_t$ is never computed in practice; it is introduced solely for mathematical analysis.)  
The sequence $(\tilde{w}_t)_{t \ge 0}$ enjoys the appealing property
$\tilde{w}_{t+1} = \tilde{w}_t - \eta g_{t+1}$, where $g_{t+1} = \frac{1}{M}\sum_i g_{t+1}^i$.  
Since $\mathbb{E}[g_{t+1} \mid \tilde{w}_t] = \nabla f(w_t)$, we almost recover the unbiased case.  
To conclude, it remains to bound the error term $e_t$, which is the purpose of Lemma~\ref{lem:bound_e_t_projFLEF}.  
The detailed proof is provided in Appendix~\ref{appx:proof_thm2}.

\section{Numerical experiments\label{sec:numerics_main}}
In this section, we numerically evaluate the benefits of Algorithms~\ref{fed-avg-proj-var2} and~\ref{fed-avg-proj-EF} to reduce the communication cost in FL, for classification tasks using neural networks.

\subsection{Datasets, models and preprocessing.}

We conduct experiments on MNIST and CIFAR-10 datasets\footnote{\url{https://docs.pytorch.org/vision/main/datasets.html}}.
MNIST contains 70,000 grayscale images of handwritten digits (28×28, 1 channel), split into 60,000 training and 10,000 test samples.
CIFAR-10 comprises 60,000 color images (32×32, 3 channels), with 50,000 for training and 10,000 for testing, across 10 classes.
We use LeNet-$5$ \cite{lecun2015lenet} for MNIST: two convolutional blocks (with $5{\times}5$ kernels, ReLU activations, and $2{\times}2$ max pooling), followed by two fully connected layers with ReLU.
For CIFAR-10, we adopt ResNet-$20$ \cite{He_2016_CVPR} implemented in \cite{Idelbayev18a}.
These models are standard architectures for their respective tasks.
In all settings, $20\%$ of the training set is used for validation.
Images are normalized to $[-1, 1]$, data is shuffled, and samples are evenly distributed across clients, with slight variation due to indivisibility.

\subsection{Experimental methodology.}
To assess the efficiency of our algorithms in solving learning tasks while maintaining reasonable communication costs, we monitor convergence-related metrics—specifically, the loss (cross-entropy in our case), accuracy, and the norm of the loss gradient—as functions of the total number of communicated bits.
Unless otherwise specified, the results reported in this section are based on the total communication cost (including both uplink and downlink). An exception is Figure~\ref{fig:1000c}, which presents results for uplink communication only.
Additional experiments, including uplink or downlink evaluations and further performance metrics, are provided in Appendix~\ref{sec:appx_num_exp}.
We focus on sparsification-based compressors (\textbf{Top-}$k$) and evaluate performance across $M$ clients, where $M \in \{3, 10, 100, 1000\}$.

\subsection{Parameters Selection}
\subsubsection{Hyperparameters\label{subsubsec:hyperparam}}
Our objective is not to surpass state-of-the-art performance, as such results often exhibit strong dependence on hyperparameter configurations and may inadvertently reflect tuning bias. To ensure a fair and transparent comparison, we adopt a consistent tuning strategy: all hyperparameters are optimized without compression (i.e., $\mathcal{C} = \mathrm{Id}$), and the resulting configurations are applied uniformly across all algorithms. The experimental settings is detailed in~\ref{sec:app_exp_details}.

\subsubsection{Algorithm parameters}
All algorithms are run using the same hyperparameter settings.
Before comparing Algorithm~\ref{fed-avg-proj-var2} and Algorithm~\ref{fed-avg-proj-EF} with existing methods, we first need to select the parameters $k$ (used in the \textbf{Top-}$k$ compressor) and $K$ (as defined in Algorithms~\ref{fed-avg-proj-var2} and~\ref{fed-avg-proj-EF}). 
We empirically set these parameters by balancing the trade-offs between convergence speed, communication cost, and stability across both datasets. 
In our experiments, we fix\footnote{$k = 0.01$ means that, in each layer, only the fraction $k$ of components with the largest magnitudes are retained. } $k = 0.01$ and $K = 3$. 
Detailed parameter-tuning results are provided in Figure~\ref{fig:find_best_K} of Appendix~\ref{sec:app_exp_details}.

Furthermore, under our experimental setup, we found that, without further tuning, \texttt{EF21} and \texttt{DIANA} underperformed. A similar behavior was already observed in \cite[Section 2.2]{NEURIPS2023_f0b1515b}. To address this, we introduced an additional parameter $\gamma$ into both methods, which significantly improved their performance (see Algorithms~\ref{alg:ef21} and~\ref{alg:diana} for details). A thorough discussion is provided in Appendix~\ref{sec:appx_num_exp}.

\subsection{Evaluation of Algorithms~\ref{fed-avg-proj-var2} and~\ref{fed-avg-proj-EF}}
We compare our methods to the baselines \texttt{FedAvg} with compression, \texttt{EF}, \texttt{EF21}, and \texttt{DIANA} (see Algorithms~\ref{alg:fedavg_compr},~\ref{fedavg-ef},~\ref{alg:ef21}, and~\ref{alg:diana} in Appendix~\ref{sec:appx_num_exp}). Both training and test losses are reported. 
The results---with an increasing number of clients---are presented in Figure~\ref{fig:3c}, ~\ref{fig:10c}, ~\ref{fig:100c}, ~\ref{fig:1000c} (with further comparisons using different metrics and parameters in Appendix~\ref{sec:appx_num_exp}).

\begin{figure}
    \caption{Comparison of  Algorithms~\ref{fed-avg-proj-var2} and~\ref{fed-avg-proj-EF} with \texttt{FedAvg} with compression, \texttt{EF}, \texttt{EF21}, and \texttt{DIANA} for $M=3$ clients.}
    \label{fig:3c}
    \includegraphics[scale=.39]{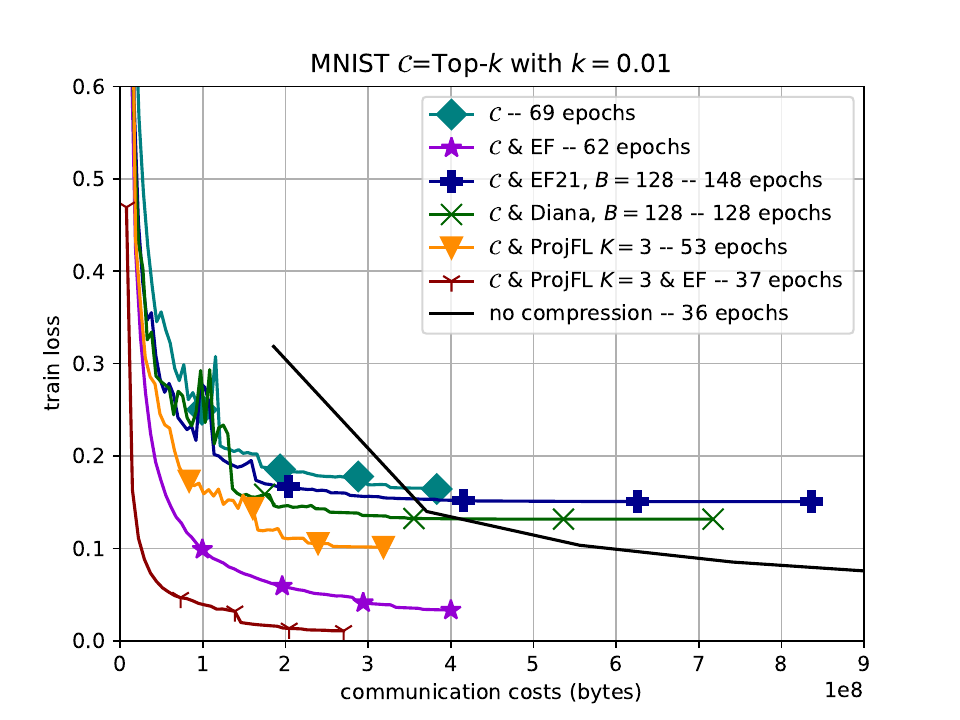} 
    \includegraphics[scale=.39]{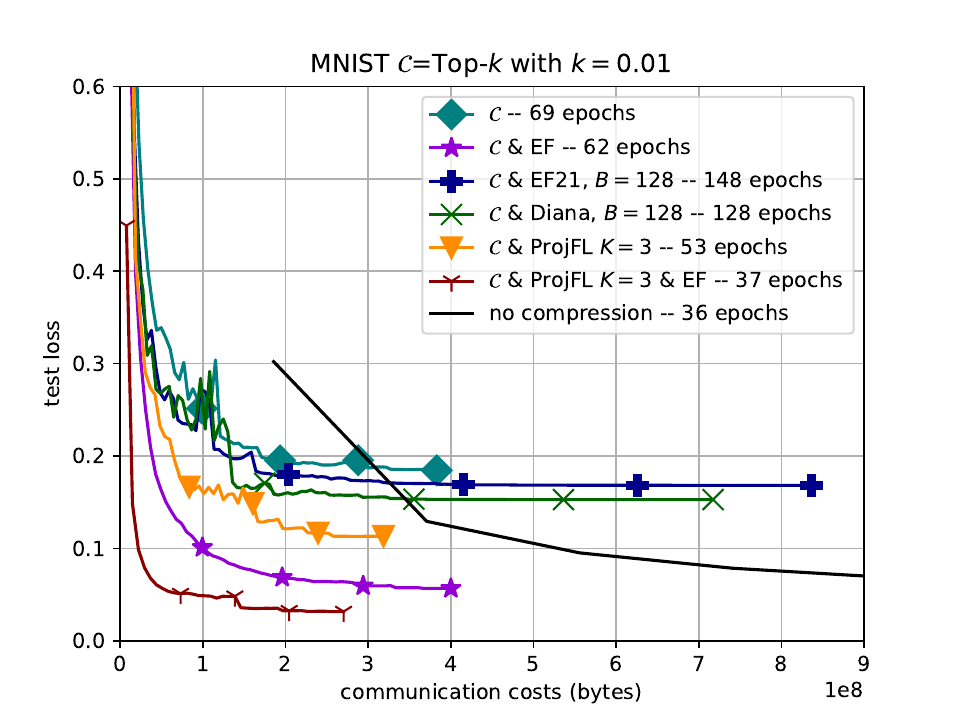} \\
    \includegraphics[scale=.39]{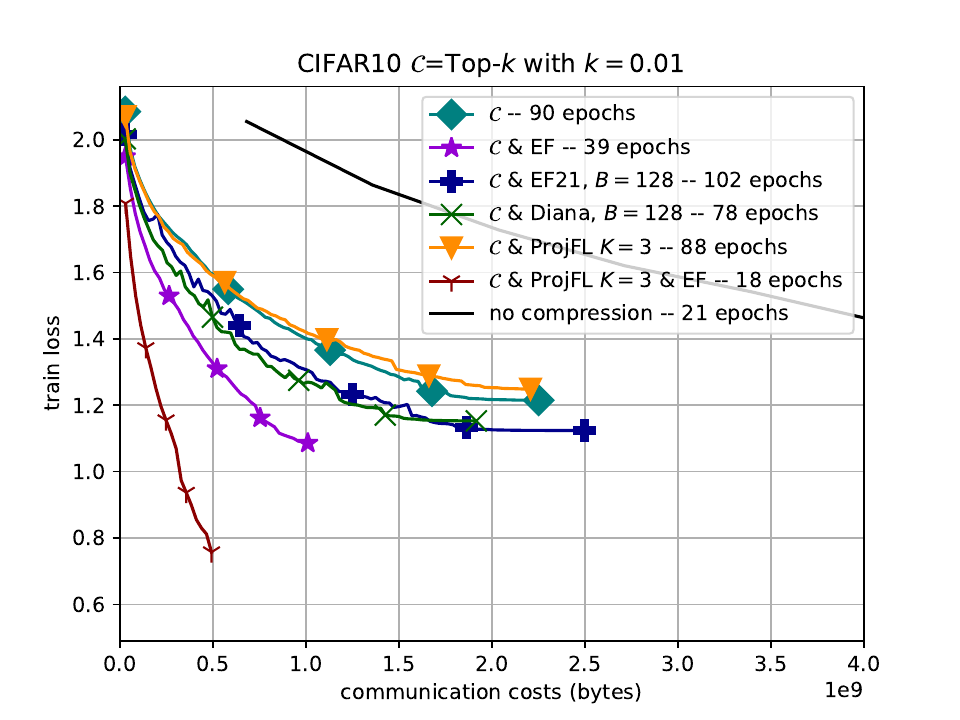} 
    \includegraphics[scale=.39]{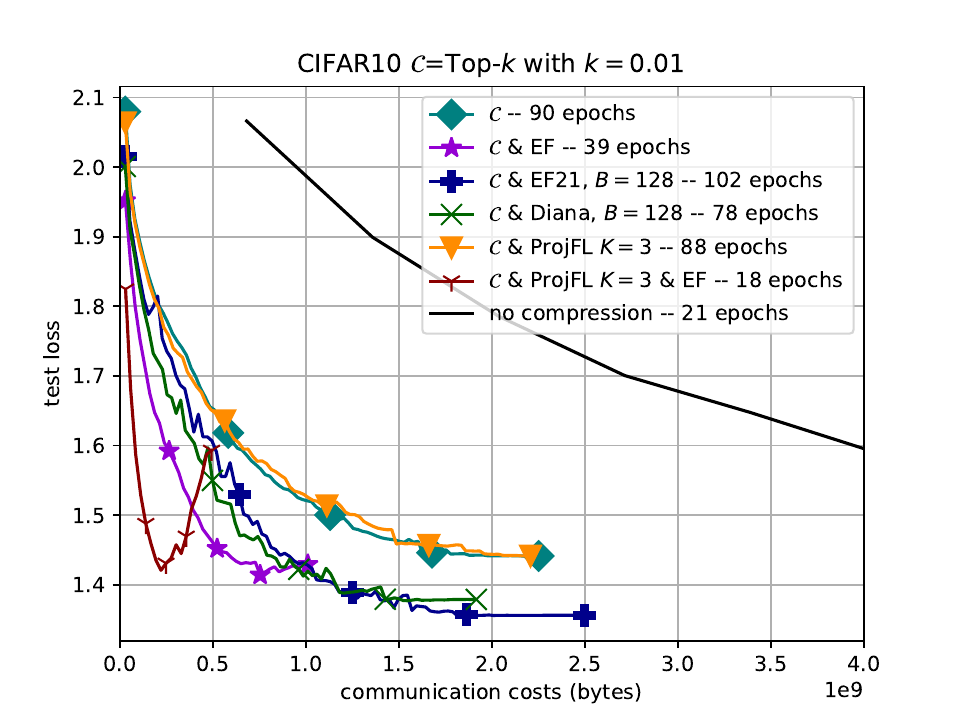} 
\end{figure}

\begin{figure}[H]
    \caption{Comparison of Algorithms \ref{fed-avg-proj-var2} and \ref{fed-avg-proj-EF} with \texttt{FedAvg} with compression, 
    \texttt{EF}, \texttt{EF21}, and \texttt{DIANA} for $M=10$ clients.}
    \label{fig:10c}
    \includegraphics[scale=.39]{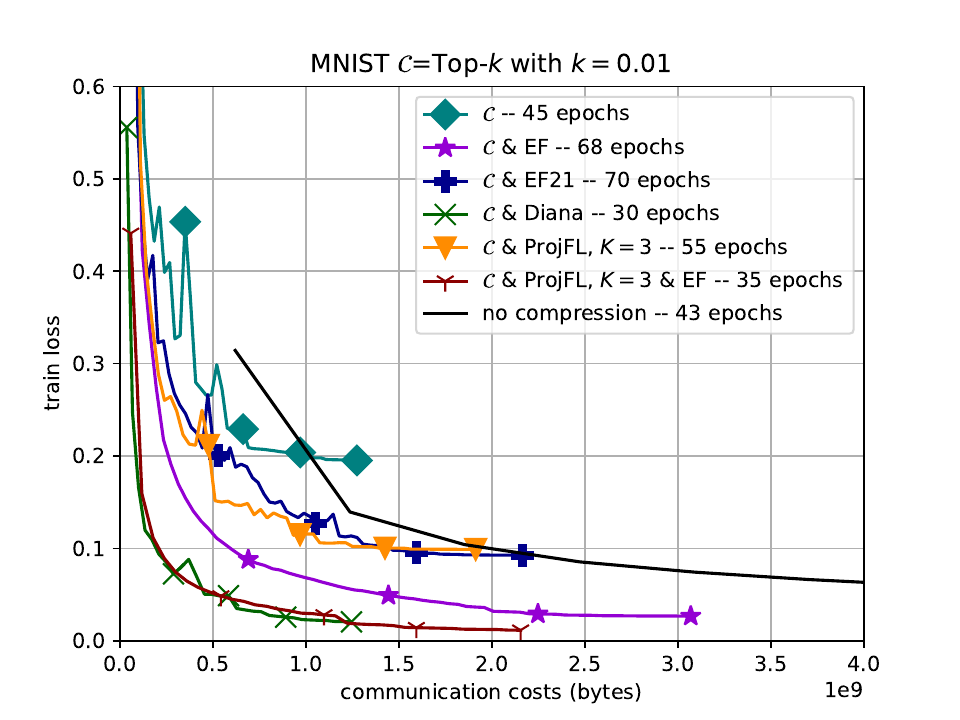} 
    \includegraphics[scale=.39]{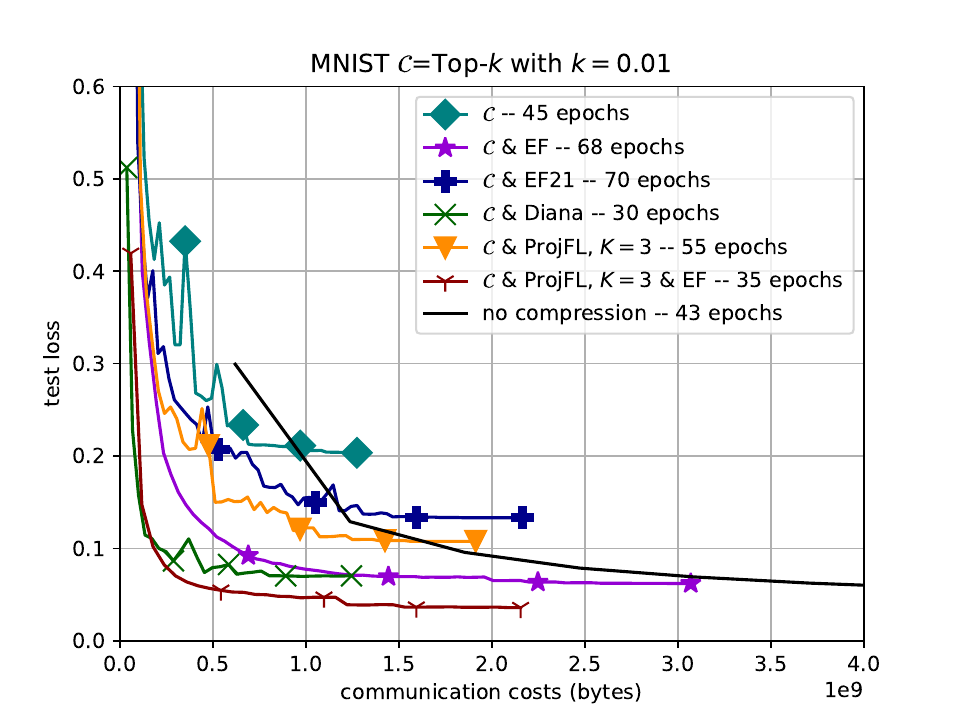} \\
    \includegraphics[scale=.39]{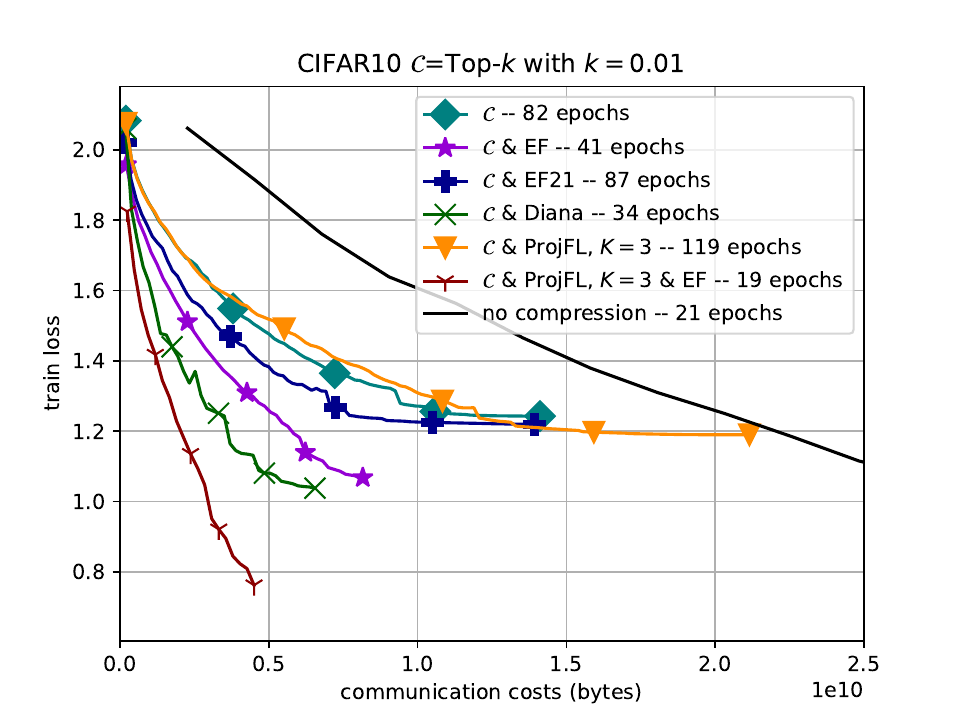} 
    \includegraphics[scale=.39]{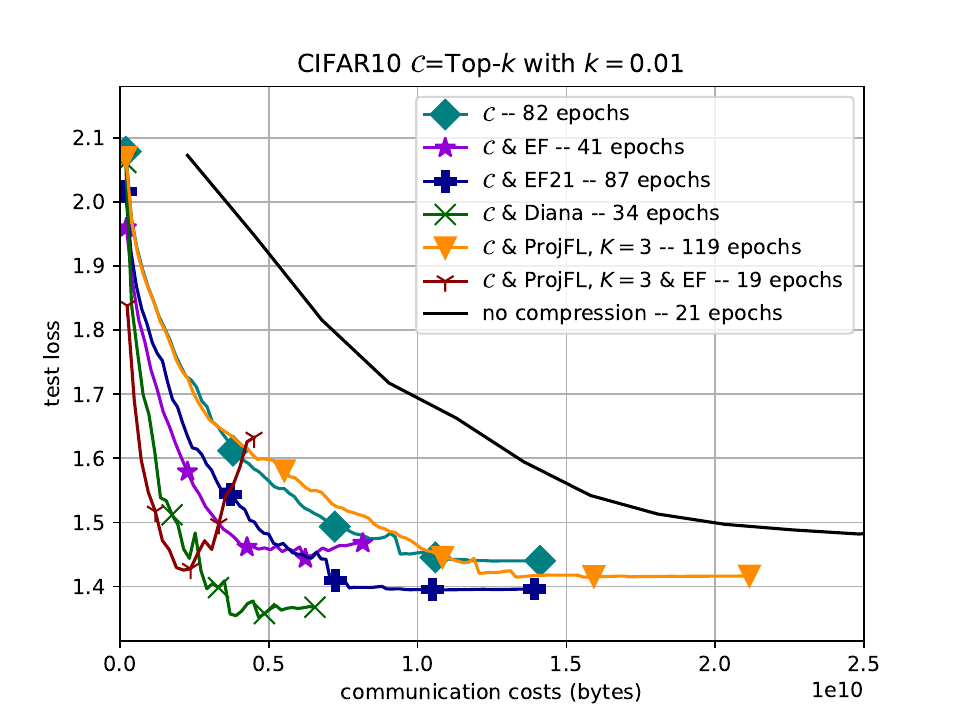} 
\end{figure}

\begin{figure}
    \caption{Comparison of Algorithms \ref{fed-avg-proj-var2} and \ref{fed-avg-proj-EF} with \texttt{FedAvg} with compression, 
    \texttt{EF}, \texttt{EF21}, and \texttt{DIANA} for $M=100$ clients.}
    \label{fig:100c}
    \includegraphics[scale=.39]{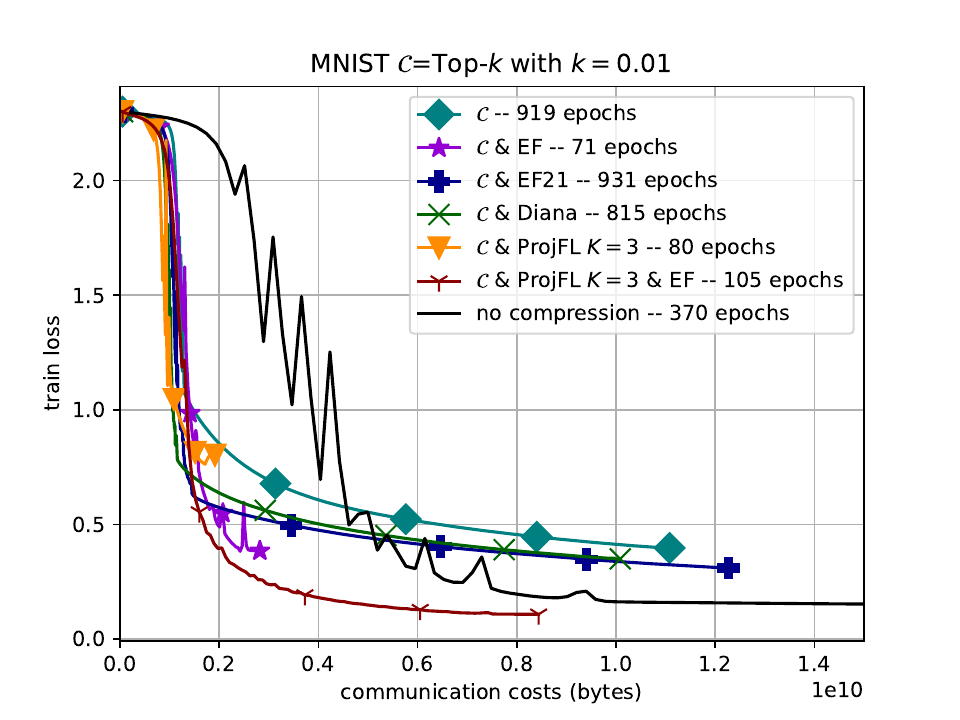} 
    \includegraphics[scale=.39]{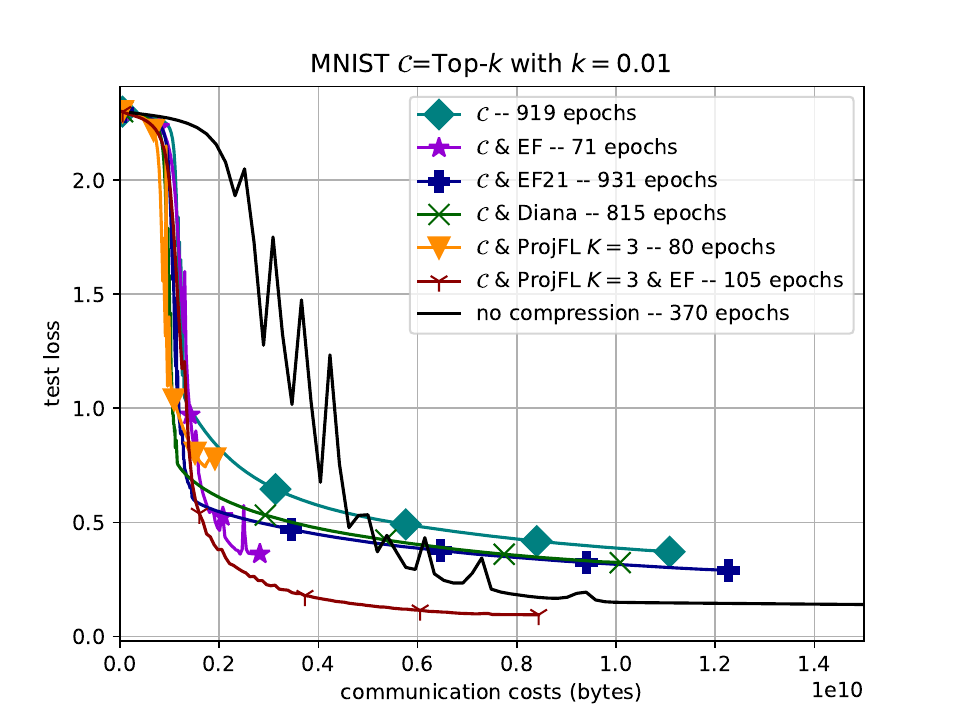} \\
    \includegraphics[scale=.39]{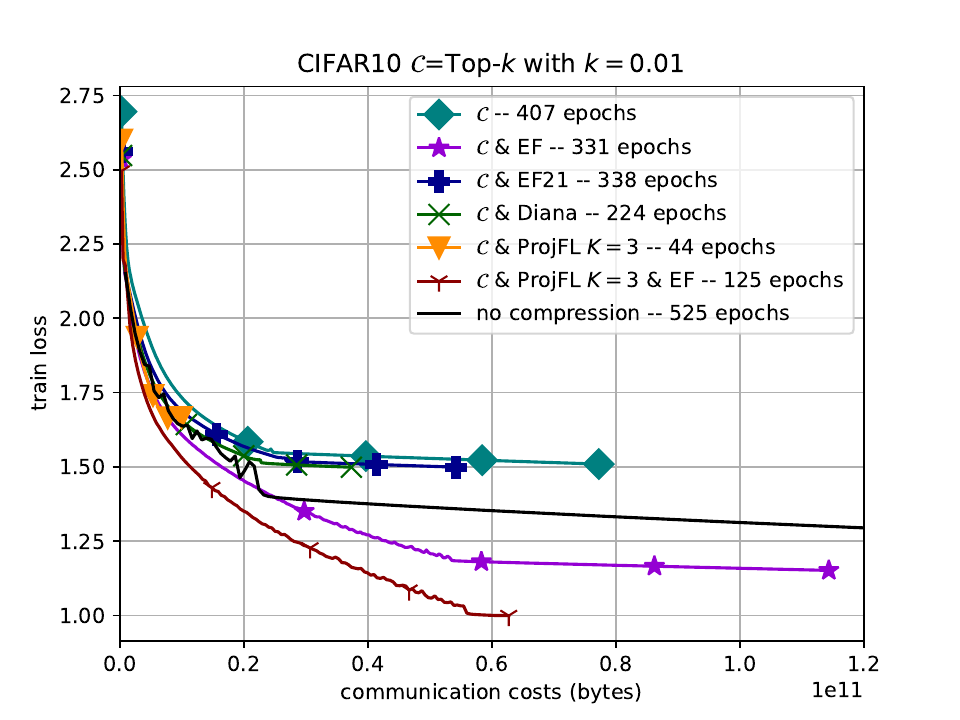} 
    \includegraphics[scale=.39]{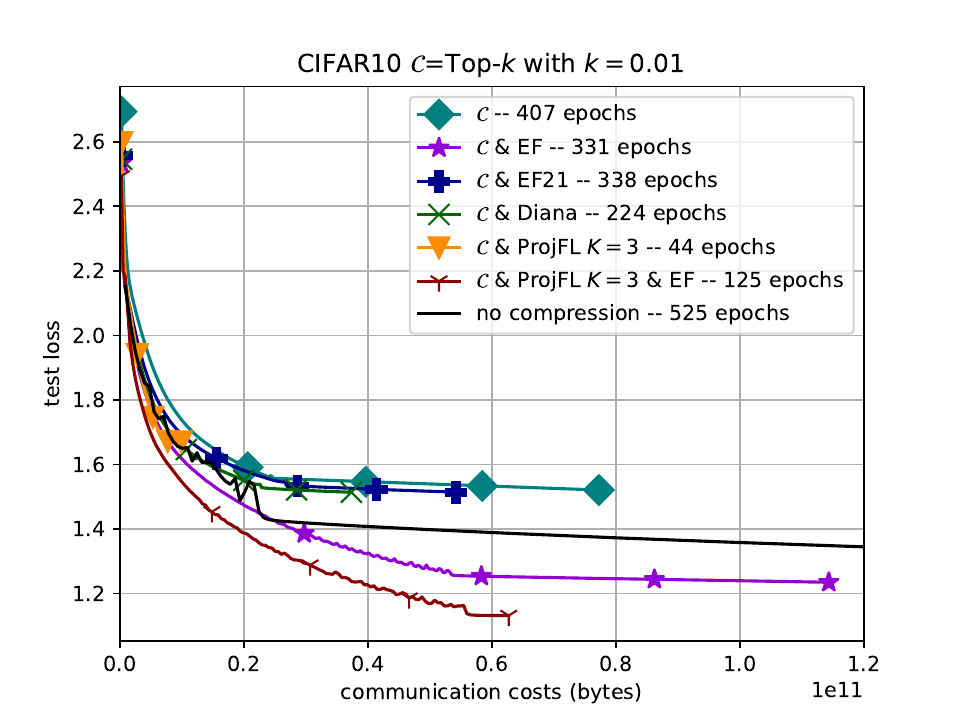} 
\end{figure}

\begin{figure}
    \caption{Comparison of Algorithms \ref{fed-avg-proj-var2} and \ref{fed-avg-proj-EF} with \texttt{FedAvg} with compression, 
    \texttt{EF}, \texttt{EF21}, and \texttt{DIANA} for $M=1000$ clients. We consider here the uplink communication cost only. }
    \label{fig:1000c}
    \includegraphics[scale=.39]{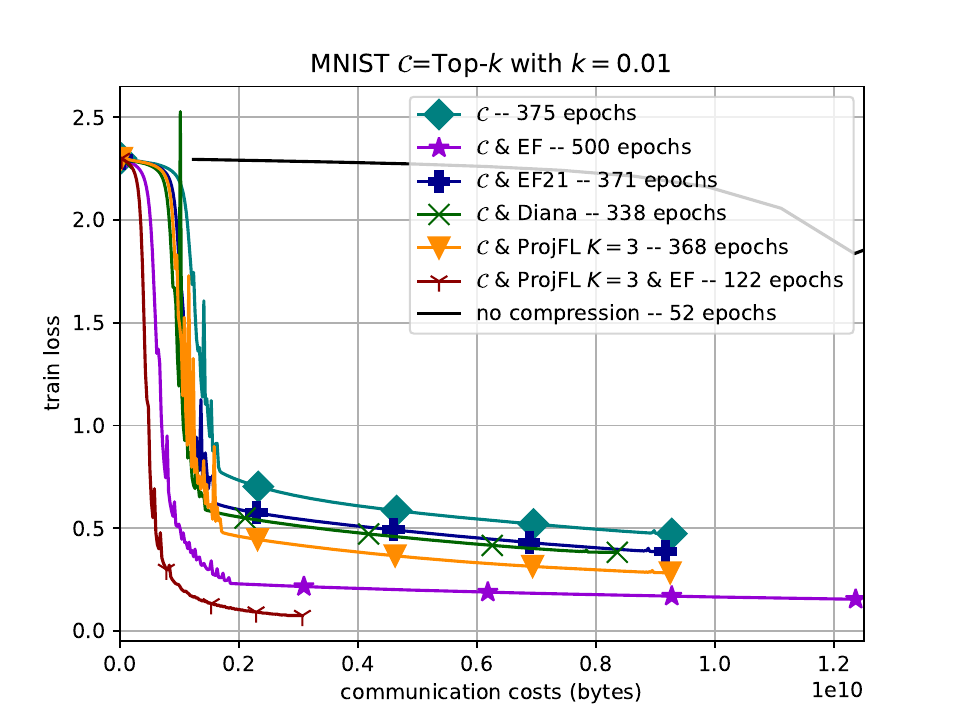} 
    \includegraphics[scale=.39]{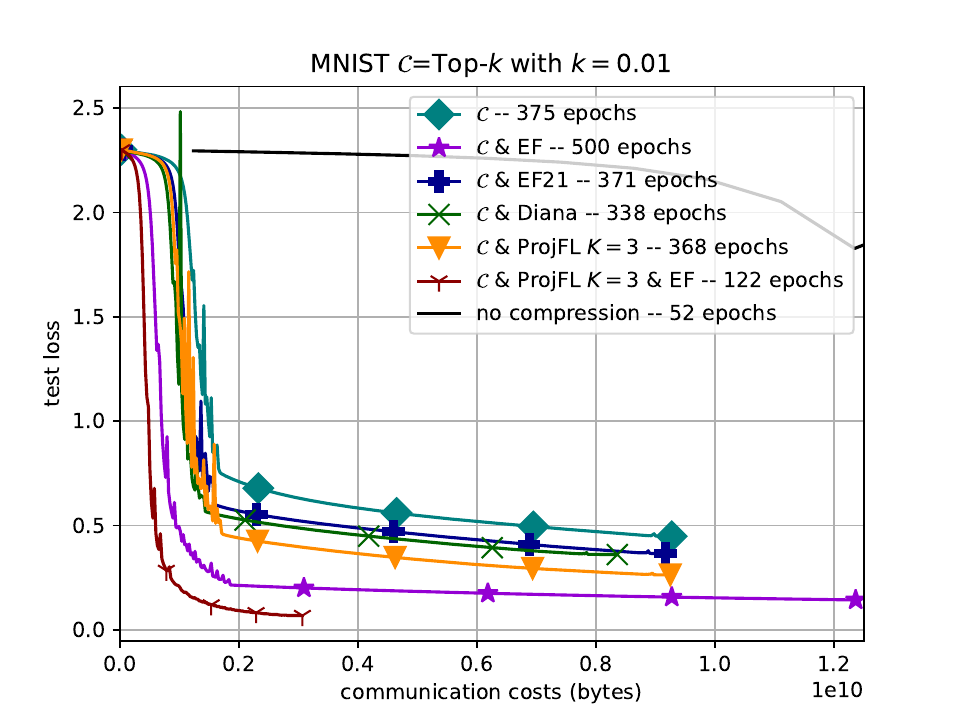} \\
    \includegraphics[scale=.39]{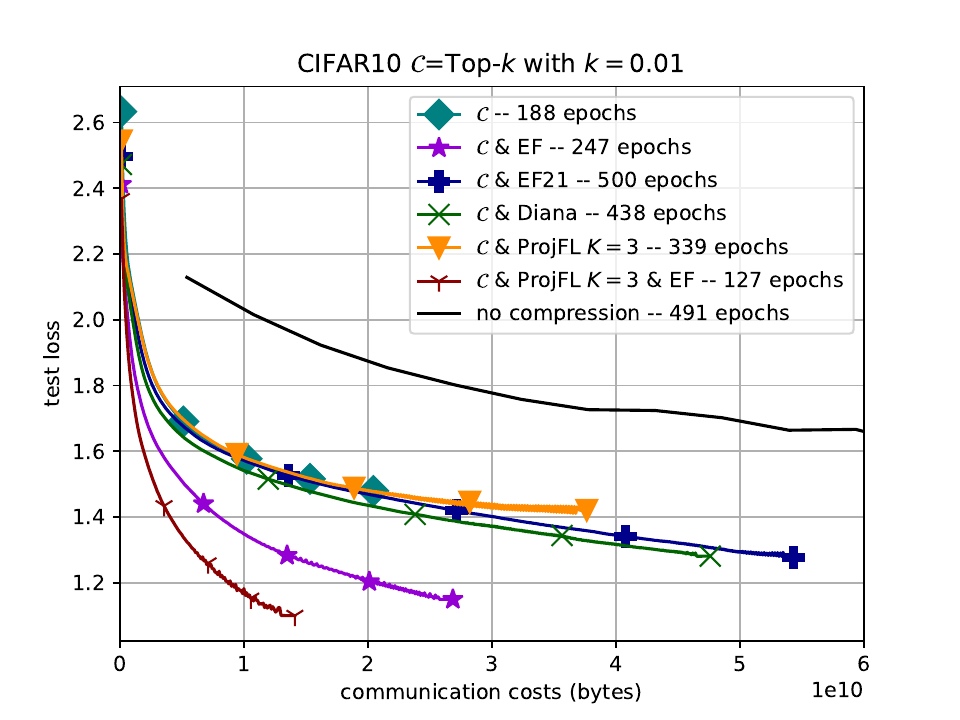} 
    \includegraphics[scale=.39]{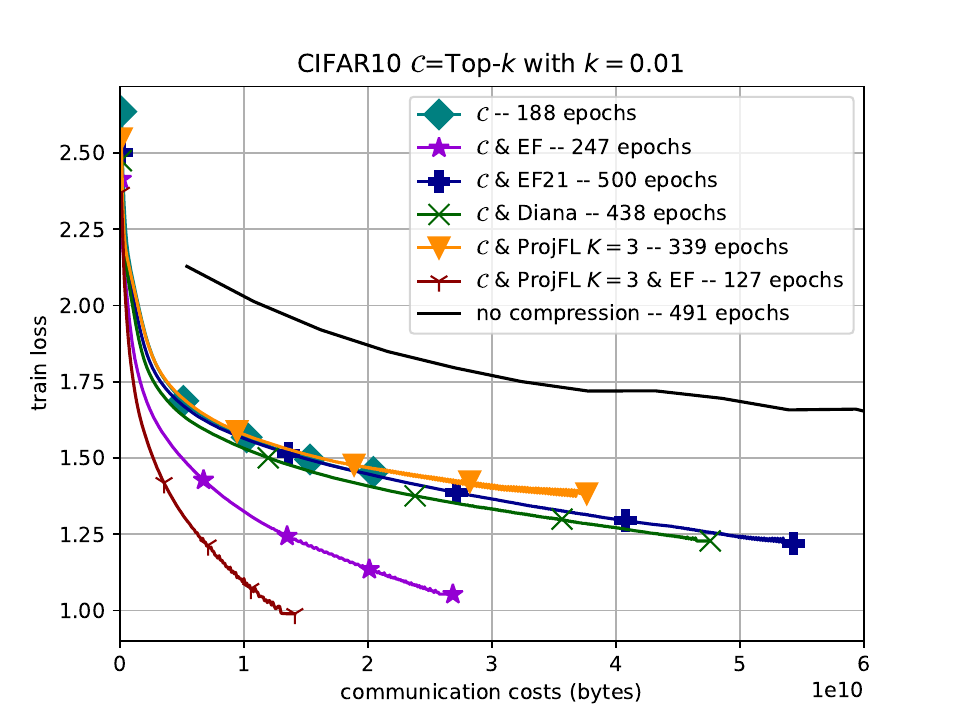} 
\end{figure}

\paragraph{General comments: }
Regardless of the algorithm, the first observation we can make is that as the number of clients increases, both communication costs and losses tend to rise.
The first effect is straightforward: communication cost grows linearly with the number of clients $M$. This is why we report the uplink communication cost for $M=1000$ clients in Figure~\ref{fig:1000c}, rather than the total communication cost.
The increase in losses can be explained by the fact that, in datasets with a finite number of instances—such as MNIST or CIFAR-10—the effective batch size per client decreases as the number of clients increases, resulting in noisier stochastic gradients.
This reduction amplifies the variance of the gradient estimates, an effect that is further exacerbated by compression. These observations are consistent with the noise terms appearing in the convergence rates of all the considered algorithms (including ours; see the right-hand side of the bounds in Theorems~\ref{thm:conv-sgd} and~\ref{thm:EF_strong_conv}).  

\paragraph{On the MNIST dataset:}
Algorithm~\ref{fed-avg-proj-var2} outperforms both \texttt{EF21} and \texttt{DIANA} when using 3 or 1000 clients.
Its performance with 10 clients remains competitive, albeit slightly below the best-performing methods.

Across these experiments, \texttt{EF21} and \texttt{DIANA} exhibit very similar behavior overall, although \texttt{DIANA} performs noticeably better in the 10-client configuration.
When error feedback is incorporated, the \texttt{EF} algorithm demonstrates consistently competitive results regardless of the number of clients.

Importantly, Algorithm~\ref{fed-avg-proj-EF} achieves the best overall trade-off between accuracy and communication efficiency in both training and testing.
For comparable test accuracy, it requires up to 8× less communication than \texttt{EF} in Figure~\ref{fig:3c} and 6× less in Figure~\ref{fig:10c} than \texttt{DIANA}.

\paragraph{On the CIFAR-10 dataset:}
\texttt{ProjFL} delivers slightly better performance than \texttt{Top-$k$} across all experiments, except for the 100-client setting, where it attains the lowest accuracy among the compared methods---consistent with the trend observed in the MNIST analysis.

The observations made for \texttt{EF21} and \texttt{DIANA} on the MNIST dataset also hold for CIFAR-10: both algorithms display very similar overall behavior, though \texttt{DIANA} performs noticeably better in the 10-client configuration.

The \texttt{EF} algorithm continues to offer competitive performance in terms of both accuracy and communication cost, except when $M=10$.
Once again, Algorithm~\ref{fed-avg-proj-EF} demonstrates a clear advantage, achieving strong results with significantly fewer training epochs, thereby reducing local computation time. 

One might be surprised by the increase in test loss in Figure~\ref{fig:3c} and Figure~\ref{fig:10c} at the end of training for Algorithm~\ref{fed-avg-proj-EF}. This rise is due to overfitting, as the training loss continues to decrease monotonically (see Appendix~\ref{sec:appx_num_exp}). The reason the algorithm was not stopped earlier lies in the early stopping criterion (specifically the patience), which was tuned based on the behavior of the baseline \texttt{FedAvg}, as previously discussed in \ref{subsubsec:hyperparam}. Naturally, if we had optimized early stopping specifically for Algorithm~\ref{fed-avg-proj-EF}, more favorable stopping conditions would have been chosen.

\section{Conclusion and Discussion\label{sec:conclusion}}

This work contributes to the growing body of research on communication-efficient FL by proposing two theoretically grounded and practically effective algorithms. 
Our algorithms provably improve convergence speed while requiring the transmission of only one additional scalar per iteration. 

We established convergence guarantees for a variety of smooth objectives, ranging from strongly convex to non-convex settings. Furthermore, we conducted extensive experiments on large-scale neural networks to evaluate the empirical performance of our methods. 

We conclude by outlining three promising directions for future work.  
First, our approach could be combined with acceleration techniques, such as those proposed in \cite{pmlr-v119-li20g,NEURIPS2023_9602d22a}, to further improve convergence.  
Second, instead of projecting onto the one-dimensional subspace spanned by the average of the last $K$ descent directions, one could consider projections onto the full $K$-dimensional subspace generated by these directions.  
Third, for a large number of clients, one could apply bidirectional compression, as done in \cite{pmlr-v97-tang19d}.

\section*{Acknowledgements }
This work has been partly supported by the FLUTE project, EC grant 101095382, and French State support under the France 2030 program with the reference ANR-23-PEIA-005 (REDEEM project). This work was partially carried out while A.D. was a postdoctoral researcher at Inria Lille – Nord Europe.

\newpage
\bibliography{biblioFL}

\newpage

\begin{appendices}

\tableofcontents

This appendix is organized as follows.
In Section~\ref{sec:proof_conv_sgd}, we present the proof of Theorem~\ref{thm:conv-sgd}.
In Section~\ref{appx:proof_thm2}, we prove Theorem~\ref{thm:EF_strong_conv}.
In Section~\ref{sec:appx_num_exp}, we provide implementation details and additional numerical evaluations.

\section{Proof of Theorem~\ref{thm:conv-sgd}\label{sec:proof_conv_sgd}}

This section is devoted to the proof of Theorem~\ref{thm:conv-sgd}.
We first recall the following property satisfied $\mu$-strongly convex and $L$-smooth functions. 
\begin{proposition}[{\cite[Theorem 2.1.12]{nesterov2018lectures}}]\label{prop:nestero_mu_strong_conv}
For any $\mu$-strongly convex and $L$-smooth function $f:\R^d\to \R$  (see Definitions~\ref{def:mu-strong_conv} and~\ref{def:L-smooth}), it holds, for any $x,y\in\R^d$, 
$$\langle\nabla f(x)-\nabla f(y),x-y\rangle\ge \frac{\mu L}{\mu +L}\|x-y\|^2+\frac{1}{\mu+L}\|\nabla f(x)-\nabla f(y)\|^2.$$
\end{proposition}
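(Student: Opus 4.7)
The plan is to deduce the inequality from the classical co-coercivity property for convex functions with Lipschitz gradient, applied to a carefully shifted auxiliary function. Concretely, I would introduce
$$
\phi(x) \;=\; f(x) \;-\; \frac{\mu}{2}\|x\|^2,
$$
and first verify two things about $\phi$: (i) $\phi$ is convex, because the $\mu$-strong convexity of $f$ (Definition~\ref{def:mu-strong_conv}) is exactly equivalent to $f-\tfrac{\mu}{2}\|\cdot\|^2$ being convex; (ii) $\nabla\phi = \nabla f - \mu\,\mathrm{Id}$ is $(L-\mu)$-Lipschitz. The latter point needs a short argument: from $\mu$-strong convexity and $L$-smoothness one gets, for all $x,y$,
$$
\mu\|x-y\|^2 \;\le\; \langle \nabla f(x)-\nabla f(y),\,x-y\rangle \;\le\; L\|x-y\|^2,
$$
and a quick computation (or invocation of the standard equivalence between $L$-smoothness of a convex function and co-coercivity, specialized here) yields $\|\nabla\phi(x)-\nabla\phi(y)\|\le (L-\mu)\|x-y\|$.

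Next I would apply the co-coercivity (or Baillon–Haddad) inequality to the convex, $(L-\mu)$-smooth function $\phi$: for all $x,y\in\mathbb{R}^d$,
$$
\langle \nabla\phi(x)-\nabla\phi(y),\,x-y\rangle \;\ge\; \frac{1}{L-\mu}\,\|\nabla\phi(x)-\nabla\phi(y)\|^2.
$$
Substituting $\nabla\phi(z)=\nabla f(z)-\mu z$ on both sides, the left-hand side becomes $\langle \nabla f(x)-\nabla f(y),x-y\rangle - \mu\|x-y\|^2$, while the right-hand side expands to
$$
\frac{1}{L-\mu}\Bigl(\|\nabla f(x)-\nabla f(y)\|^2 \;-\; 2\mu\,\langle \nabla f(x)-\nabla f(y),\,x-y\rangle \;+\; \mu^2\|x-y\|^2\Bigr).
$$
Multiplying through by $L-\mu$, collecting the $\langle \nabla f(x)-\nabla f(y),x-y\rangle$ terms on the left (their coefficient becomes $L-\mu+2\mu=L+\mu$) and the $\|x-y\|^2$ terms on the right (coefficient $\mu(L-\mu)+\mu^2 = \mu L$), then dividing by $\mu+L$, produces exactly the claimed inequality.

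The main subtlety is the degenerate case $\mu=L$: here $\phi$ is affine, the co-coercivity step is vacuous, and the claim reduces to $\langle\nabla f(x)-\nabla f(y),x-y\rangle = \tfrac12\|x-y\|^2 + \tfrac{1}{2L}\|\nabla f(x)-\nabla f(y)\|^2$, which follows from the fact that $\mu=L$ forces $f$ to be a quadratic of the form $\tfrac{L}{2}\|x\|^2+\langle b,x\rangle+c$, so both sides are explicitly $L\|x-y\|^2$. The other potential obstacle is justifying the Baillon–Haddad step itself if one does not want to cite it: this can be done by noting that for a convex $L$-smooth $\phi$, the function $\psi(z)=\phi(z)-\langle \nabla\phi(y),z\rangle$ is convex with minimum at $y$, applying the descent lemma to $\psi$ at $x$ and its gradient-step image $x-\tfrac{1}{L-\mu}\nabla\psi(x)$, and symmetrizing in $x,y$; this is a routine two-line argument once set up, so I expect it to be the most delicate but not genuinely hard part of the proof.
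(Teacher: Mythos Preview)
Your argument is correct and is essentially the standard proof (the one given in Nesterov's \emph{Lectures on Convex Optimization}, Theorem~2.1.12). The paper does not supply its own proof of this proposition; it simply cites Nesterov, so there is nothing further to compare.

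One minor remark on the step where you claim $\nabla\phi$ is $(L-\mu)$-Lipschitz: the cleanest justification is not via the monotonicity bounds you wrote but directly from the descent lemma. Since $f$ is $L$-smooth, $f(y)\le f(x)+\langle\nabla f(x),y-x\rangle+\tfrac{L}{2}\|x-y\|^2$; subtracting $\tfrac{\mu}{2}\|\cdot\|^2$ from both sides and simplifying gives $\phi(y)\le\phi(x)+\langle\nabla\phi(x),y-x\rangle+\tfrac{L-\mu}{2}\|x-y\|^2$, and for a convex function this upper quadratic bound is equivalent to $(L-\mu)$-Lipschitzness of the gradient. This is presumably what you meant by ``a quick computation,'' but it is worth stating explicitly since the monotonicity inequalities alone do not obviously yield the Lipschitz bound.
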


\subsection{Proof of Theorem~\ref{thm:conv-sgd}: convex case}

\begin{proof}[{Proof of item~\ref{thm:noEF:itemSconv} of Theorem~\ref{thm:conv-sgd}}]
Let $t\in\mathbb N$ and $w^*=\argmin f$ be the unique minimizer (by strong convexity).  
We have 
\begin{align*}
\|w_{t+1}-w^* \|^2 = \|w_t-w^*\|^2 - 2\eta\Big\langle w_t-w^*,\frac1M\sum_{i=1}^M\mathsf D_{t+1}^i\Big\rangle + \eta^2\Big\|\frac1M\sum_{i=1}^M\mathsf D_{t+1}^i\Big\|^2.
\end{align*}
By Assumptions~\ref{as:compressor} and~\ref{as:stoc_gradient}, we have
\[
\mathbb{E}[\mathcal{C}((g_{t+1}^i)^\perp)] = \mathbb{E}[(g_{t+1}^i)^\perp].
\]
Therefore, using Assumption~\ref{as:stoc_gradient} again, it follows that
\[
\mathbb{E}[\mathsf{D}_{t+1}^i] = \mathbb{E}[g_{t+1}^i] = \mathbb{E}[\nabla f_i(w_t)].
\]
Conditioning on $w_t$, we obtain, using~\ref{as:stoc_gradient},
\begin{align*}
\mathbb E\left[\|w_{t+1}-w^* \|^2 \right]= \mathbb E\left[\|w_t-w^*\|^2\right] - 2\eta\mathbb E\left[\langle w_t-w^*,\nabla f(w_t)\rangle\right] + \eta^2\mathbb E\Big[\Big\|\frac1M\sum_{i=1}^M\mathsf D_{t+1}^i\Big\|^2\Big].
\end{align*} 
Using Proposition~\ref{prop:nestero_mu_strong_conv} (since $\nabla f(w^*)=0$), it becomes 
\begin{align}\label{eq_thmSGD:eq1}
\mathbb E\left[\|w_{t+1}-w^* \|^2 \right]&\le \left(1-2\eta\frac{\mu L}{\mu+L}\right)\mathbb E\left[\|w_t-w^*\|^2\right] - 2\frac{\eta}{\mu+L}\mathbb E\left[\|\nabla f(w_t)\|\right]\nonumber\\
&\quad + \eta^2\mathbb E\Big[\Big\|\frac1M\sum_{i=1}^M\mathsf D_{t+1}^i\Big\|^2\Big].
\end{align} 
Using again~\ref{as:stoc_gradient} and the fact that $\mathbb E[\mathsf D_{t+1}^i] = \mathbb E[\nabla f_i(w_t)]$, we have 
\begin{align}\label{eq_thmSGD:last_term}
\mathbb E\Big[\Big\|\frac 1M\sum_{i=1}^M \mathsf D_{t+1}^i \Big\|^2\Big] &= \mathbb E[\|\nabla f(w_t)\|^2] + \mathbb E\Big[\Big\|\frac 1M\sum_{i=1}^M\mathsf D_{t+1}^i-\nabla f_i(w_t) \Big\|^2\Big] \nonumber\\
&=  \mathbb E[\|\nabla f(w_t)\|^2] + \frac{1}{M^2}\sum_{i=1}^M\mathbb E[\|\mathsf D_{t+1}^i-\nabla f_i(w_t) \|^2].
\end{align}
Using~\ref{as:compressor} and~\ref{as:stoc_gradient},
\begin{align*}
\mathbb E[\|\mathsf D_{t+1}^i-\nabla f_i(w_t) \|^2]
&= \mathbb E[\|\mathsf D_{t+1}^i\|^2] - \mathbb E[ \|\nabla f_i(w_t) \|^2]\\
&=\mathbb E[\|\alpha_{t+1}^i\bar{\mathsf D}_t^i+\mathcal C((g_{t+1}^i)^\perp)\|^2] - \mathbb E[\|\nabla f_i(w_t)\|^2] \\
&=\mathbb E[\|\alpha_{t+1}^i\bar{\mathsf D}_t^i\|^2]+\mathbb E[\|\mathcal C((g_{t+1}^i)^\perp)\|^2] - \mathbb E[\|\nabla f_i(w_t)\|^2] \\
&\le \mathbb E[\|\alpha_{t+1}^i\bar{\mathsf D}_t^i\|^2]+\beta\mathbb E[\|(g_{t+1}^i)^\perp\|^2] - \mathbb E[\|\nabla f_i(w_t)\|^2]\\
&\le \beta\mathbb E[\|\alpha_{t+1}^i\bar{\mathsf D}_t^i\|^2]+\beta\mathbb E[\|(g_{t+1}^i)^\perp\|^2] - \mathbb E[\|\nabla f_i(w_t)\|^2] \quad (\beta\ge 1)\\
&=\beta \mathbb E[\|g_{t+1}^i\|^2] - \mathbb E[\|\nabla f_i(w_t)\|^2]\le (\beta-1) \mathbb E[\|\nabla f_i(w_t)\|^2]+\beta\sigma^2.
\end{align*}
Going back to~\eqref{eq_thmSGD:last_term}, we have  
\begin{align*}
\mathbb E\Big[\Big\|\frac 1M\sum_{i=1}^M \mathsf D_{t+1}^i \Big\|^2\Big] \le  \mathbb E[\|\nabla f(w_t)\|^2] + \frac{\beta-1}{M^2}\sum_{i=1}^M\mathbb E[\|\nabla f_i(w_t)\|^2] + \frac{\beta\sigma^2}{M}.
\end{align*}
By~\ref{as:bounded_g_diss}, 
\begin{equation}\label{eq:bound_E|D|}
\mathbb E\Big[\Big\|\frac 1M\sum_{i=1}^M \mathsf D_{t+1}^i \Big\|^2\Big]\le (1+b\frac{\beta-1}{M})\mathbb E[\|\nabla f(w_t)\|^2] + a\frac{\beta-1}{M}+ \frac{\beta\sigma^2}{M}.
\end{equation}
Going back to \eqref{eq_thmSGD:eq1} we have
\begin{align*}
\mathbb E[\|w_{t+1}-w^*\|^2] &\le \left(1-2\eta\frac{\mu L}{\mu+L}\right)\mathbb E[\|w_{t}-w^*\|^2]\\
&\quad +\eta\Big(\eta\Big(1+b\frac{\beta-1}{M}\Big)-\frac{2}{\mu+L}\Big)  \mathbb E[\|\nabla f(w_t)\|^2]\\
&\quad + \eta^2 a\frac{\beta-1}{M}+ \eta^2\frac{\beta\sigma^2}{M}.
\end{align*}
Since $\eta\Big(1+b\frac{\beta-1}{M}\Big)\le\frac{2}{\mu+L}$, we obtain 
\begin{align*}
\mathbb E[\|w_{t+1}-w^*\|^2] &\le \left(1-2\eta\frac{\mu L}{\mu+L}\right)\mathbb E[\|w_{t}-w^*\|^2]+ \eta^2 a\frac{\beta-1}{M}+ \eta^2\frac{\beta\sigma^2}{M}.
\end{align*}
Hence, noticing\footnote{Note that our assumption $\eta\le (1+b\frac{\beta-1}{M})^{-1}\frac{2}{\mu+L}$ implies $\eta<\frac{\mu+L}{2\mu L}$ as soon as $b(\beta-1)>0$ or $L>\mu$.} that $\eta< \frac{\mu+L}{2\mu L}$, we have  
$$\mathbb E[\|w_{t}-w^*\|^2] \le \Big(1-2\eta\frac{\mu L}{\mu+L}\Big)^t\mathbb E[\|w_{0}-w^*\|^2] + \eta\frac{\mu+L}{2\mu LM}(a(\beta-1)+\beta\sigma^2). $$
The proof is complete.
\end{proof}

\begin{proof}[Proof of item~\ref{thm:noEF:itemconv} of Theorem~\ref{thm:conv-sgd}]
Let $t\in\mathbb N$. 
Denote by $w^*$ any optimal point of $f$. We have 
\begin{equation*}
\|w_{t+1}-w^* \|^2 = \|w_t-w^*\|^2-2\eta\Big\langle w_t-w^*, \frac{1}{M}\sum_{i=1}^M\mathsf D_{t+1}^i\Big\rangle+\eta^2\Big\|\frac{1}{M}\sum_{i=1}^M\mathsf D_{t+1}^i\Big\|^2. 
\end{equation*}
By \eqref{eq:bound_E|D|},~\ref{as:compressor} and~\ref{as:stoc_gradient}, we have, 
\begin{align*}
\mathbb E[\|w_{t+1}-w^* \|^2] &\le\mathbb E[ \|w_t-w^*\|^2]-2\eta\mathbb E\Big[\Big\langle w_t-w^*,\nabla f(w_t)\Big\rangle\Big]\\ &\quad +\eta^2(1+b\frac{\beta-1}{M})\mathbb E[\|\nabla f(w_t)\|^2] + \frac{\eta^2}{M}(a(\beta-1)+\beta\sigma^2)
\end{align*} 
Since $f$ is convex, we have
$$-2\langle w_t-w^*,\nabla f(w_t)\rangle\le -2(f(w_t)-f^*).$$
Hence, using also Proposition~\ref{prop:ineg-L-smooth:normgrad}, 
\begin{align*}
\mathbb E[\|w_{t+1}-w^* \|^2] &\le\mathbb E[ \|w_t-w^*\|^2]-2\eta\mathbb E[f(w_t)-f^*] \\
&\quad +2L\eta^2(1+b\frac{\beta-1}{M})\mathbb E[f(w_t)-f^*] + \frac{\eta^2}{M}(a(\beta-1)+\beta\sigma^2)\\
&= \mathbb E[ \|w_t-w^*\|^2]+2\eta(L\eta(1+b\frac{\beta-1}{M})-1)\mathbb E[f(w_t)-f^*] + \frac{\eta^2}{M}(a(\beta-1)+\beta\sigma^2).
\end{align*}
Using that $L\eta(1+b\frac{\beta-1}{M})\le 1/2$, we obtain 
\begin{align*}
\mathbb E[\|w_{t+1}-w^* \|^2]\le \mathbb E[ \|w_t-w^*\|^2]-\eta\mathbb E[f(w_t)-f^*] +2 + \frac{\eta^2}{M}(a(\beta-1)+\beta\sigma^2). 
\end{align*}
Let $T\ge 1$. Summing over $t$, we have, by telescopic sum, 
\begin{align*}
\frac{\eta}{T+1}\sum_{t=0}^{T}\mathbb E[f(w_t)-f^*] \le \frac{1}{T+1}\mathbb E[ \|w_0-w^*\|^2] + \frac{\eta^2}{M}(a(\beta-1)+\beta\sigma^2). 
\end{align*}
The proof is complete since $\mathbb E[f(w^{\mathrm{out}})]-f^* = \frac{1}{T+1}\sum_{t=0}^{T}\mathbb E[f(w_t)-f^*]$.
\end{proof}

\subsection{Proof of Theorem~\ref{thm:conv-sgd}: non-convex case}

\begin{proof}[Proof of item~\ref{thm:noEF:itemnoconv} of Theorem~\ref{thm:conv-sgd}]
Let $t\in\mathbb N$. 
Since $f$ is $L$-smooth, we have, by \eqref{prop-2Lf}, 
$$f(w_{t+1})\le f(w_t) + \langle\nabla f(w_t),w_{t+1}-w_t\rangle + \frac L2\|w_{t+1}-w_t\|^2.$$ 
By~\ref{as:compressor} and~\ref{as:stoc_gradient}, $\mathbb E[\mathsf D_{t+1}^i|w_t] =\nabla f_i(w_t)$ for all $i\in[M]$. Hence, taking the expectation we obtain 
$$\mathbb E[f(w_{t+1})]\le\mathbb E[ f(w_t)] -\eta\mathbb E[ \|\nabla f(w_t)\|^2] + \frac {L\eta^2}{2}\mathbb E\Big[\Big\|\frac{1}{M}\sum_{i=1}^M\mathsf D_{t+1}^i\Big\|^2\Big]$$
By \eqref{eq:bound_E|D|},
$$\mathbb E[f(w_{t+1})]\le\mathbb E[ f(w_t)] -\eta\mathbb E[ \|\nabla f(w_t)\|^2] + \frac {L\eta^2}{2}\Big(1+b\frac{\beta-1}{M}\Big)\mathbb E[ \|\nabla f(w_t)\|^2]+\frac {L\eta^2}{2M}(a(\beta-1)+\beta\sigma^2)$$
Since $L\eta(1+b\frac{\beta-1}{M})\le 1$, we obtain 
$$ \frac\eta 2\mathbb E[ \|\nabla f(w_t)\|^2]\le\mathbb E[f(w_{t})-f^*]-\mathbb E[f(w_{t+1})-f^*]+\frac {L\eta^2}{2M}(a(\beta-1)+\beta\sigma^2) $$
Let $T\ge 1$. Summing over $t$,
$$  \frac\eta 2\sum_{t=0}^T\mathbb E[ \|\nabla f(w_t)\|^2]\le\mathbb E[f(w_0)-f^*]+(T+1)\frac {L\eta^2}{2M}(a(\beta-1)+\beta\sigma^2) $$
Hence, 
$$  \frac{1}{T+1}\sum_{t=0}^T\mathbb E[ \|\nabla f(w_t)\|^2]\le\frac{2}{(T+1)\eta}\mathbb E[f(w_0)-f^*]+\frac {L\eta}{M}(a(\beta-1)+\beta\sigma^2),  $$
which concludes the proof since $\mathbb E[ \|\nabla f(w^{\mathrm{out}})\|^2]=\frac{1}{T+1}\sum_{t=0}^T\mathbb E[ \|\nabla f(w_t)\|^2]$. 
\end{proof}

\section{Proof of Theorem~\ref{thm:EF_strong_conv}\label{appx:proof_thm2}}

We start this section with following lemma, which controls the second moment of the compression error. 

\begin{lemma}\label{lem:bound_e_t_projFLEF}
Assume~\ref{as:compressorB}-\ref{as:bounded_g_diss}-\ref{as:stoc_gradient} and that each $f_i$ is differentiable. 
Then, for all $t\in \mathbb N$ and $\eta>0$,   
\begin{equation}\label{proof-pEF:lem:eq0}
\mathbb E[\|e_{t+1}\|^2]\le \frac{2(1-\delta)b\eta^2}{\delta}\sum_{s=0}^t\left(1-\frac{\delta}{2}\right)^{t-s}\mathbb E[\|\nabla f(w_s)\|^2] + \frac{2(1-\delta)\eta^2}{\delta}\left(\frac{2a}{\delta}+\sigma^2\right).
\end{equation}
Moreover, for any sequence $(\theta_t)_{t\ge 0}$ satisfying $0<\theta_{t}\le \theta_{t+1}\le (1+\delta/4)\theta_t$ for all $t\in \mathbb N$, it holds, for all $T\ge 1$,  
\begin{equation}\label{proof-PEF:eq0w}
\sum_{t=0}^T\theta_{t}\mathbb E[\|e_{t}\|^2]\le\frac{8b\eta^2}{\delta^2}\sum_{t=0}^{T-1} \theta_t\mathbb E[\|\nabla f(w_t)\|^2] + \frac{2(1-\delta)\eta^2}{\delta}\left(\frac{2a}{\delta}+\sigma^2\right)\sum_{t=1}^T\theta_{t}.
\end{equation}
\end{lemma}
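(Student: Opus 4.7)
The plan is to analyze the per-client recursion for $e_{t+1}^i$ given in line~10 of Algorithm~\ref{fed-avg-proj-EF}, then average over clients and iterate in time.

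First, fix a client $i$ and write $e_{t+1}^i = \tilde e_{t+1}^i - \mathcal C(\tilde e_{t+1}^i)$ with $\tilde e_{t+1}^i := \eta(g_{t+1}^i)^\perp + e_t^i$. By~\ref{as:compressorB} applied conditionally on $\tilde e_{t+1}^i$, $\mathbb E[\|e_{t+1}^i\|^2 | \tilde e_{t+1}^i] \le (1-\delta)\|\tilde e_{t+1}^i\|^2$. The crucial move is to take a conditional expectation over the stochastic noise \emph{before} invoking Young's inequality: decomposing $g_{t+1}^i = \nabla f_i(w_t) + \xi_{t+1}^i(w_t)$ with $\mathbb E[\xi_{t+1}^i(w_t)| w_t] = 0$, linearity of the orthogonal projection onto the hyperplane perpendicular to $\bar{\mathsf D}_t^i$ gives $(g_{t+1}^i)^\perp = (\nabla f_i(w_t))^\perp + (\xi_{t+1}^i(w_t))^\perp$; hence
$$\mathbb E[\|\tilde e_{t+1}^i\|^2 | \mathcal F_t] \le \|\eta(\nabla f_i(w_t))^\perp + e_t^i\|^2 + \eta^2\sigma^2,$$
using orthogonality and~\ref{as:stoc_gradient} to control the variance contribution.

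Next, Young's inequality $\|a+b\|^2 \le (1+\lambda)\|a\|^2 + (1+1/\lambda)\|b\|^2$ with $\lambda := \delta/(2(1-\delta))$, chosen so that $(1-\delta)(1+\lambda) = 1-\delta/2$, combined with the previous display and the bound $(2-\delta)(1-\delta)/\delta \le 2(1-\delta)/\delta$, yields the one-step per-client recursion
$$\mathbb E[\|e_{t+1}^i\|^2] \le (1-\delta/2)\mathbb E[\|e_t^i\|^2] + \frac{2(1-\delta)\eta^2}{\delta}\mathbb E[\|\nabla f_i(w_t)\|^2] + (1-\delta)\eta^2\sigma^2.$$
Averaging over $i\in[M]$, using Jensen to pass from $\frac{1}{M}\sum_i\mathbb E[\|e_{t+1}^i\|^2]$ to an upper bound on $\mathbb E[\|e_{t+1}\|^2]$, and invoking~\ref{as:bounded_g_diss} on $\frac{1}{M}\sum_i\|\nabla f_i(w_t)\|^2 \le a + b\|\nabla f(w_t)\|^2$, one obtains a scalar recursion driven by $\mathbb E[\|\nabla f(w_t)\|^2]$, $a$, and $\sigma^2$. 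Unrolling from $e_0 = 0$ and pulling the $s$-independent constants $a$ and $\sigma^2$ out of the geometric sum $\sum_{s=0}^t(1-\delta/2)^{t-s} \le 2/\delta$ recovers~\eqref{proof-pEF:lem:eq0}. The key observation is that the pre-conditioning step left the $\sigma^2$ coefficient at $(1-\delta)\eta^2$ rather than $2(1-\delta)\eta^2/\delta$, so the subsequent multiplication by $2/\delta$ produces $2(1-\delta)\eta^2\sigma^2/\delta$ instead of the looser $4(1-\delta)\eta^2\sigma^2/\delta^2$.

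For~\eqref{proof-PEF:eq0w}, I would multiply~\eqref{proof-pEF:lem:eq0} by $\theta_{t+1}$, sum over $t$, and swap the order of summation. Iterating $\theta_{t+1} \le (1+\delta/4)\theta_t$ gives $\theta_t \le (1+\delta/4)^{t-s}\theta_s$, so the inner sum becomes a geometric series in $(1+\delta/4)(1-\delta/2) \le 1-\delta/4$, bounded by $4/\delta$; combining this with the prefactor $2(1-\delta)b\eta^2/\delta$ and using $(1+\delta/4)\le 2$ and $(1-\delta)\le 1$ produces the announced coefficient $8b\eta^2/\delta^2$. The constant part is bounded directly by $\sum_{t=1}^T\theta_t$ times $\frac{2(1-\delta)\eta^2}{\delta}\bigl(\frac{2a}{\delta}+\sigma^2\bigr)$. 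The main obstacle is the tight bookkeeping in the first step: decomposing $g_{t+1}^i$ into its deterministic and mean-zero stochastic parts, and conditioning on $\mathcal F_t$ before applying Young's inequality, is essential to avoid inflating the $\sigma^2$ constant by an extra $1/\delta$ factor when the recursion is unrolled; once this is done, the remainder is essentially geometric-series bookkeeping.
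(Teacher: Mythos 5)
Your proposal follows essentially the same route as the paper's proof: the per-client recursion obtained from \ref{as:compressorB}, conditioning on the past \emph{before} Young's inequality so the projected noise enters additively with coefficient $(1-\delta)\eta^2\sigma^2$, the Young parameter chosen so that $(1-\delta)(1+\lambda)=1-\delta/2$, averaging over clients with \ref{as:bounded_g_diss} and Jensen applied to $e_t=\frac1M\sum_i e_t^i$, unrolling the geometric recursion from $e_0=0$ to get \eqref{proof-pEF:lem:eq0}, and then weighting by $\theta_{t+1}\le(1+\delta/4)^{t+1-s}\theta_s$ and summing the series with ratio $(1+\delta/4)(1-\delta/2)\le 1-\delta/4$, bounded by $4/\delta$. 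The only slip is the final constant count for \eqref{proof-PEF:eq0w}: bounding the leftover factors separately by $(1+\delta/4)\le 2$ and $(1-\delta)\le 1$ gives $\frac{2(1-\delta)b\eta^2}{\delta}\cdot(1+\delta/4)\cdot\frac{4}{\delta}\le\frac{16b\eta^2}{\delta^2}$, i.e.\ twice the announced coefficient; to land exactly on $\frac{8b\eta^2}{\delta^2}$, bound the product $(1-\delta)(1+\delta/4)\le 1$ instead (as the paper does), after which your computation yields \eqref{proof-PEF:eq0w} verbatim.
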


\begin{proof}
Let $t\in\mathbb N$ and $i\in [M]$.  By~\ref{as:compressorB},
\begin{equation}\label{proof-PEF:eq7'}
\mathbb E_{\mathcal C}[\|e_{t+1}^i\|^2]\le(1-\delta)\|\eta(g_{t+1}^i)^\perp+e_t^i\|^2.
\end{equation}
Defining $(\nabla f_i(w_t))^\perp$ and $(\xi_{t+1}^i(w_t))^\perp$  such that $(\nabla f_i(w_t))^\perp\cdot\bar{\mathsf D}_t^i =0 $ and $(\xi_{t+1}^i(w_t))^\perp \cdot\bar{\mathsf D}_t^i = 0$,  we have 
\begin{align}\label{proof-PEF:eq8}
\mathbb E[\|\eta(g_{t+1}^i)^\perp+e_t^i\|^2| w_t] &= \mathbb E[\|\eta(\nabla f_i(w_t))^\perp+\eta(\xi_{t+1}^i(w_t))^\perp+e_t^i\|^2| w_t]\nonumber\\
&= \|\eta(\nabla f_i(w_t))^\perp+e_t^i\|^2 + \eta^2\mathbb E[\|\xi_{t+1}^i(w_t)\|^2|w_t]\nonumber \\
&\le (1+\beta)\|e_t^i\|^2+ \Big(1+\frac{1}{\beta}\Big)\eta^2 \|(\nabla f_i(w_t))^\perp\|^2 +\eta^2\sigma^2,\quad \forall\beta>0,
\end{align}
where, for the last inequality, we used~\ref{as:bounded_g_diss} and the inequality $\|\mathsf a+\mathsf b\|^2\le (1+\beta)\|\mathsf a\|^2+ (1+1/\beta)\|\mathsf b\|^2$, $\mathsf a,\mathsf b\in\mathbb R^d, \beta>0$. Since $\|(\nabla f_i(w_t))^\perp\|\le \|\nabla f_i(w_t)\|$, it follows from \eqref{proof-PEF:eq8} that 
\begin{equation}\label{proof-PEF:eq9}
\mathbb E[\|\eta(g_{t+1}^i)^\perp+e_t^i\|^2]\le (1+\beta)\mathbb E[\|e_t^i\|^2]+ \Big(1+\frac{1}{\beta}\Big)\eta^2 \mathbb E[\|\nabla f_i(w_t)\|^2] +\eta^2\sigma^2,\quad \forall\beta>0.
\end{equation}
By \eqref{proof-PEF:eq7'} and \eqref{proof-PEF:eq9}, we have
\begin{equation}\label{proof-PEF:eq10}
\mathbb E[\|e_{t+1}^i\|^2]\le (1-\delta)(1+\beta)\mathbb E[\|e_t^i\|^2]+ (1-\delta)\Big(1+\frac{1}{\beta}\Big)\eta^2 \mathbb E[\|\nabla f_i(w_t)\|^2] +(1-\delta)\eta^2\sigma^2,\quad \forall\beta>0.
\end{equation}
Let us define the auxiliary sequence $U_t = \frac 1M\sum_{i=1}^M\|e_{t}^i\|^2$. We have, by \eqref{proof-PEF:eq10}, for any $\beta>0$,  
\begin{align*}
\underbrace{\mathbb E[U_{t+1}]}_{\mathsf b_{t+1}} \le \underbrace{(1-\delta)(1+\beta)}_{\mathsf a}\underbrace{\mathbb E[U_t]}_{\mathsf b_t} + \underbrace{(1-\delta)\Big(1+\frac1\beta\Big)\frac{\eta^2}{M}\sum_{i=1}^M\mathbb E[\|\nabla f_i(w_t)\|^2]}_{\mathsf c_t} + \underbrace{(1-\delta)\eta^2\sigma^2}_{\mathsf d}. 
\end{align*}
Since the recursion $\mathsf b_{t+1}\le \mathsf a\mathsf b_t+\mathsf c_t+\mathsf d$ leads to $\mathsf b_{t+1}\le\mathsf a^{t+1}\mathsf b_0+\sum_{s=0}^t\mathsf a^{t-s}\mathsf c_s+\mathsf d\sum_{s=0}^t\mathsf a^s$, we obtain, using also that $U_0=0$,  
\begin{align*}
\mathbb E[U_{t+1}]&\le (1-\delta)\Big(1+\frac1\beta\Big)\frac{\eta^2}{M}\sum_{s=0}^t[(1-\delta)(1+\beta)]^{t-s}\sum_{i=1}^M\mathbb E[\|\nabla f_i(w_s)\|^2] \\
&\quad + (1-\delta)\eta^2\sigma^2\sum_{s=0}^t[(1-\delta)(1+\beta)]^s.
\end{align*}
Consider now $\beta$ such that $1+\frac1\beta\le\frac2\delta$ and $(1-\delta)(1+\beta)\le 1-\frac\delta 2$.
Using also that $\sum_{s=0}^t(1-\delta/2)^s\le\sum_{s=0}^\infty(1-\delta/2)^s= \frac{2}{\delta}$, we have, 
\begin{align*}
\mathbb E[U_{t+1}]\le \frac{2(1-\delta)\eta^2}{\delta M}\sum_{s=0}^t\left(1-\frac{\delta}{2}\right)^{t-s}\sum_{i=1}^M\mathbb E[\|\nabla f_i(w_s)\|^2] + \frac{2(1-\delta)\eta^2\sigma^2}{\delta}.
\end{align*}
Using~\ref{as:bounded_g_diss}, we obtain 
\begin{align*}
\mathbb E[U_{t+1}]\le \frac{2(1-\delta)b\eta^2}{\delta}\sum_{s=0}^t\left(1-\frac{\delta}{2}\right)^{t-s}\mathbb E[\|\nabla f(w_s)\|^2] + \frac{4(1-\delta)a\eta^2}{\delta^2}  + \frac{2(1-\delta)\eta^2\sigma^2}{\delta}.
\end{align*}
Hence, using Jensen's inequality,
\begin{equation*}
\mathbb E[\|e_{t+1}\|^2]\le \mathbb E[U_{t+1}]\le \frac{2(1-\delta)b\eta^2}{\delta}\sum_{s=0}^t\left(1-\frac{\delta}{2}\right)^{t-s}\mathbb E[\|\nabla f(w_s)\|^2] + \frac{2(1-\delta)\eta^2}{\delta}\left(\frac{2a}{\delta}+\sigma^2\right).
\end{equation*}
This proves \eqref{proof-pEF:lem:eq0}. 
Now, let $(\theta_t)_{t\ge 0}$ be as in the statement of the lemma. 
Let $t\in \mathbb N$. We have, since $\theta_{t+1}\le (1+\frac{\delta}{4})^{t+1-s}\theta_s$ for any $s\in \{0,\dots,t+1\}$, 
\begin{align*}
\theta_{t+1}\mathbb E[\|e_{t+1}\|^2]&\le \frac{2(1-\delta)b\eta^2}{\delta}\Big(1+\frac{\delta}{4}\Big)\sum_{s=0}^t\left[\left(1+\frac{\delta}{4}\right)\left(1-\frac{\delta}{2}\right)\right]^{t-s}\theta_s\mathbb E[\|\nabla f(w_s)\|^2] + \\
&\quad \frac{2(1-\delta)\eta^2}{\delta}\left(\frac{2a}{\delta}+\sigma^2\right)\theta_{t+1}.
\end{align*}
Since $(1-\delta)(1+\delta/4)\le 1$ and $(1+\delta/4)(1-\delta/2)\le 1-\delta/4$, 
\begin{equation*}
\theta_{t+1}\mathbb E[\|e_{t+1}\|^2]\le \frac{2b\eta^2}{\delta}\sum_{s=0}^t\left(1-\frac{\delta}{4}\right)^{t-s}\theta_s\mathbb E[\|\nabla f(w_s)\|^2] + \frac{2(1-\delta)\eta^2}{\delta}\left(\frac{2a}{\delta}+\sigma^2\right)\theta_{t+1}.
\end{equation*}
Let $T\ge 1$.
Summing over $t$ and using that $e_0=0$, we obtain 
\begin{equation*}
\sum_{t=0}^T\theta_{t}\mathbb E[\|e_{t}\|^2]\le\frac{2b\eta^2}{\delta}\sum_{s=0}^{T-1} \theta_s\mathbb E[\|\nabla f(w_s)\|^2] \sum_{t=s}^{T-1}\left(1-\frac{\delta}{4}\right)^{t-s} + \frac{2(1-\delta)\eta^2}{\delta}\left(\frac{2a}{\delta}+\sigma^2\right)\sum_{t=1}^T\theta_{t}.
\end{equation*}
Since $\sum_{t=0}^\infty(1-\delta/4)^t= 4/\delta$, we obtain 
\begin{equation*}
\sum_{t=0}^T\theta_{t}\mathbb E[\|e_{t}\|^2]\le\frac{8b\eta^2}{\delta^2}\sum_{t=0}^{T-1} \theta_t\mathbb E[\|\nabla f(w_t)\|^2] + \frac{2(1-\delta)\eta^2}{\delta}\left(\frac{2a}{\delta}+\sigma^2\right)\sum_{t=1}^T\theta_{t}.
\end{equation*}
The proof is complete. 
\end{proof}

We are  now in position to prove Theorem~\ref{thm:EF_strong_conv}. We first recall a useful result on $L$-smooth functions. 

\begin{proposition}\label{prop:ineg-L-smooth:normgrad}
Let $f:\R^d\to \R_+$ be $L$-smooth. Then, it holds 
\begin{equation*}
    \|\nabla f(x) \|^2\le 2Lf(x),\quad \forall x\in\R^d.
\end{equation*}
\end{proposition}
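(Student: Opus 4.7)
The plan is to derive the inequality from the standard \emph{descent lemma} that follows from $L$-smoothness, combined with the sign hypothesis $f\ge 0$. First I would recall that $L$-smoothness of $f$ (Definition~\ref{def:L-smooth}) implies the quadratic upper bound
$$f(y)\le f(x)+\langle\nabla f(x),y-x\rangle+\frac{L}{2}\|y-x\|^2, \qquad \forall x,y\in\R^d.$$
This is obtained by writing $f(y)-f(x)=\int_0^1\langle\nabla f(x+s(y-x)),y-x\rangle\,ds$, adding and subtracting $\langle\nabla f(x),y-x\rangle$, and bounding the remainder with the Lipschitz property~\eqref{ineg:defLsmooth} and Cauchy--Schwarz.

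Next, given $x\in\R^d$, I would evaluate this inequality at the specific choice $y=x-\frac{1}{L}\nabla f(x)$, which is the one-step gradient-descent point with optimal step size. A direct computation gives
$$f(y)\le f(x)-\frac{1}{L}\|\nabla f(x)\|^2+\frac{1}{2L}\|\nabla f(x)\|^2 = f(x)-\frac{1}{2L}\|\nabla f(x)\|^2.$$
Finally, I would use the assumption that $f$ takes values in $\R_+$, so $f(y)\ge 0$, which turns the display above into
$$0\le f(x)-\frac{1}{2L}\|\nabla f(x)\|^2,$$
i.e., $\|\nabla f(x)\|^2\le 2Lf(x)$.

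There is no real obstacle here; the only point that requires attention is that both ingredients are used in an essential way: the descent lemma provides the quadratic upper bound, and the non-negativity of $f$ (implicit in the assumption $f:\R^d\to\R_+$, which means one is measuring from the global lower bound $0$) is what allows us to drop the left-hand side. Note that without the sign condition one would only recover an inequality of the form $\|\nabla f(x)\|^2\le 2L(f(x)-\inf f)$, which is exactly the same statement relative to the infimum. The selected test point $y=x-\nabla f(x)/L$ is canonical precisely because it minimizes the quadratic upper bound in $y$, yielding the tightest possible constant $2L$.
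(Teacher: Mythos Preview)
Your proof is correct and follows essentially the same approach as the paper: both use the descent lemma $f(y)\le f(x)+\langle\nabla f(x),y-x\rangle+\frac{L}{2}\|y-x\|^2$, evaluate it at the minimizer $y=x-\nabla f(x)/L$ of the right-hand side, and invoke $f\ge 0$ to conclude. The paper is slightly terser but the argument is identical.
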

\begin{proof}
It holds\footnote{See the proof of \cite[Theorem 2.1.5]{nesterov2018lectures}.}, for all $x,y\in\R^d$, 
\begin{equation}\label{prop-2Lf}
0\le f(y)\le f(x)+\langle\nabla f(x),y-x\rangle+\frac L2\|x-y\|^2:=\varphi_x(y)
\end{equation} 
Fix $x\in\R^d$. The function $\varphi_x$ attains its minimum at $y= x-\nabla f(x)/L$.
Evaluating \eqref{prop-2Lf} with this values yields the desired result. 
\end{proof}

\subsection{Proof of Theorem~\ref{thm:EF_strong_conv}: convex case}
Let us introduce $\tilde w_t = w_t - e_t$ where $e_t=\frac1M\sum_{i=1}^Me_t^i$, for any $t\in\mathbb N$. 
\begin{proof}[Proof of item~\ref{thm:EF_strong_conv:itemSconv} of Theorem~\ref{thm:EF_strong_conv}.]
The proof is divided into two steps. The first step consists in the derivation of \eqref{proof-PEF:eq7}. On the second step, we apply the bound of Lemma~\ref{lem:bound_e_t_projFLEF} to conclude the proof. \\
\medskip

\noindent\textbf{Step 1. }
Let $t\in\mathbb N$. We have 
$$\| \tilde w_{t+1}-w^*\|^2 = \|\tilde w_{t}  - w^*\|^2-2\eta\langle g_{t+1}, w_t-w^*\rangle+\eta^2\|g_{t+1}\|^2+2\eta\langle g_{t+1}, w_t-\tilde w_t\rangle.$$
Hence, 
\begin{equation*}
\mathbb E[\| \tilde w_{t+1}-w^*\|^2] = \mathbb E[\|\tilde w_{t}  - w^*\|^2] -2\eta \mathbb E[\langle \nabla f(w_t), w_t-w^*\rangle] +\eta^2\mathbb E[\|g_{t+1}\|^2] + 2\eta \mathbb E[\langle\nabla f(w_t), w_t-\tilde w_t\rangle].
\end{equation*}
Using~\ref{as:stoc_gradient} it holds 
\begin{equation}\label{eq:boundg_t+1}
\mathbb E[\|g_{t+1}\|^2]\le \mathbb E[\|\nabla f(w_t)\|^2] + \frac{\sigma^2}{M}.
\end{equation}
Thus, 
\begin{align}\label{proof-PEF:eq1}
\mathbb E[\| \tilde w_{t+1}-w^*\|^2] &= \mathbb E[\|\tilde w_{t}  - w^*\|^2] -2\eta \mathbb E[\langle \nabla f(w_t), w_t-w^*\rangle] +\eta^2\mathbb E[\|\nabla f(w_t)\|^2] + \frac{\eta^2\sigma^2}{M}\nonumber\\
&\quad + 2\eta \mathbb E[\langle\nabla f(w_t), w_t-\tilde w_t\rangle].
\end{align}
By $\mu$-strong convexity of $f$, 
\begin{equation}\label{proof-PEF:eq2}
-2\langle\nabla f(w_t), w_t-w^*\rangle\le-\mu\|w_t-w^*\|^2-2(f(w_t)-f^*). 
\end{equation} 
Moreover since $f$ is $L$-smooth, we have by Proposition~\ref{prop:ineg-L-smooth:normgrad},
\begin{equation}\label{proof-PEF:eq0}
\|\nabla f(w)\|^2\le 2L(f(w)-f^*), \quad \forall w\in\mathbb R^d.
\end{equation}
Using $2\langle\mathsf a,\mathsf b\rangle\le 2L\|\mathsf a\|^2 + \|\mathsf b\|^2/(2L),$ we have 
\begin{equation}\label{proof-PEF:eq3}
2\langle\nabla f(w_t),w_t-\tilde w_t\rangle\le \frac{1}{2L}\|\nabla f(w_t)\|^2 + 2L\|w_t-\tilde w_t\|^2\le f(w_t)-f^* + 2L\|w_t-\tilde w_t\|^2.
\end{equation}
Plugging \eqref{proof-PEF:eq2} and \eqref{proof-PEF:eq3} into \eqref{proof-PEF:eq1}, 
\begin{align}\label{proof-PEF:eq4}
\mathbb E[\| \tilde w_{t+1}-w^*\|^2]&\le  \mathbb E[\|\tilde w_{t}  - w^*\|^2] - \eta\mu \mathbb E[\|w_t-w^*\|^2] -2\eta \mathbb E[f(w_t)-f^*]\nonumber\\
& +\eta^2\mathbb E[\|\nabla f(w_t)\|^2] + \frac{\eta^2\sigma^2}{M} + \eta \mathbb E[f(w_t)-f^*] + 2\eta L\mathbb E[\|w_t-\tilde w_t\|^2].
\end{align}
Using $\|\mathsf a+\mathsf b\|^2\le 2\|\mathsf a\|^2+2\|\mathsf b\|^2$, we have $-\|w_t-w^*\|^2 \le -\frac12\|\tilde w_t-w^*\|^2+ \|w_t-\tilde w_t\|^2$. Hence, \eqref{proof-PEF:eq4} becomes 
\begin{align}\label{proof-PEF:eq5}
\mathbb E[\| \tilde w_{t+1}-w^*\|^2]&\le \Big(1-\frac{\eta\mu}{2}\Big) \mathbb E[\|\tilde w_{t}  - w^*\|^2] +\eta(2L+\mu) \mathbb E[\|w_t-\tilde w_t\|^2]\nonumber\\
&\quad-\eta \mathbb E[f(w_t)-f^*] +\eta^2\mathbb E[\|\nabla f(w_t)\|^2] + \frac{\eta^2\sigma^2}{M}.
\end{align}
Using \eqref{proof-PEF:eq0} again, \eqref{proof-PEF:eq5} becomes 
\begin{align*}
\mathbb E[\| \tilde w_{t+1}-w^*\|^2]&\le \Big(1-\frac{\eta\mu}{2}\Big) \mathbb E[\|\tilde w_{t}  - w^*\|^2] +\eta(2L+\mu) \mathbb E[\|w_t-\tilde w_t\|^2]\nonumber\\
&\quad-\eta(1-2L\eta) \mathbb E[f(w_t)-f^*]  + \frac{\eta^2\sigma^2}{M}.
\end{align*}
Since $\eta\le\frac{1}{4L}$, we obtain 
\begin{align}\label{proof-PEF:eq7}
\mathbb E[\| \tilde w_{t+1}-w^*\|^2]&\le \Big(1-\frac{\eta\mu}{2}\Big) \mathbb E[\|\tilde w_{t}  - w^*\|^2] +\eta(2L+\mu) \mathbb E[\|w_t-\tilde w_t\|^2]\nonumber\\
&\quad -\frac{\eta}{2} \mathbb E[f(w_t)-f^*]  + \frac{\eta^2\sigma^2}{M}.
\end{align}

\noindent\textbf{Step 2. } Let $s_t = \mathbb E[f(w_t)]-f^* $ and $r_t=\mathbb E[\|\tilde w_t-w^*\|^2]$. 
From \eqref{proof-PEF:eq7}, we have, for any $\theta_t>0$, 
$$\frac{\eta}{2}\theta_ts_t\le (1-\frac{\eta\mu}{2}) \theta_tr_t- \theta_tr_{t+1} +\eta(2L+\mu)\theta_t \mathbb E[\|e_t\|^2] + \frac{\eta^2\sigma^2}{M}\theta_t.$$
Let $T\ge 1$. Summing over $t$, we obtain,
$$\sum_{t=0}^T\frac{\eta}{2}\theta_ts_t\le \sum_{t=0}^T\left[\Big(1-\frac{\eta\mu}{2}\Big) \theta_tr_t- \theta_tr_{t+1}\right] +\eta(2L+\mu)\sum_{t=0}^T\theta_t \mathbb E[\|e_t\|^2] + \frac{\eta^2\sigma^2}{M}\sum_{t=0}^T\theta_t.$$
Let $\theta_t = (1-\frac{\eta\mu}{2})^{-t}$.  Since $\eta\le \frac{\delta}{L(4+\delta)}\le\frac{2\delta}{\mu(4+\delta)}$ (because $L\ge \mu$ for any $L$-smooth and $\mu$-strongly convex function), we have $\frac{\theta_{t+1}}{\theta_t}\le \frac{1}{1-\frac{\eta\mu}{2}}\le 1+\frac{\delta}{4}$. Hence, the sequence $(\theta_t)_{t\ge 0}$  satisfies the assumption of Lemma~\ref{lem:bound_e_t_projFLEF}. By \eqref{proof-PEF:eq0w}, 
\begin{align*}
\sum_{t=0}^T\frac{\eta}{2}\theta_ts_t&\le\sum_{t=0}^T\left[ \Big(1-\frac{\eta\mu}{2}\Big) \theta_tr_t- \theta_tr_{t+1}\right]+\eta(2L+\mu)\frac{8b\eta^2}{\delta^2}\sum_{t=0}^{T-1} \theta_t\mathbb E[\|\nabla f(w_t)\|^2] \\
&\quad + \eta(2L+\mu)\frac{2(1-\delta)\eta^2}{\delta}\left(\frac{2a}{\delta}+\sigma^2\right)\sum_{t=1}^T\theta_{t}  + \frac{\eta^2\sigma^2}{M}\sum_{t=0}^T\theta_t. 
\end{align*}
Since $\mathbb E[\|\nabla f(w_t)\|^2]\le 2Ls_t$ (by \eqref{proof-PEF:eq0}) and since  $\eta$ is such that $(2L+\mu)\frac{16bL\eta^2}{\delta^2}\le \frac 25$,  
we have 
\begin{align*}
\sum_{t=0}^T\frac{\eta}{2}\theta_ts_t&\le\sum_{t=0}^T\left[ \Big(1-\frac{\eta\mu}{2}\Big) \theta_tr_t- \theta_tr_{t+1}\right] +\frac{2}{5}\eta\sum_{t=0}^{T-1} \theta_ts_t \\
&\quad + \eta(2L+\mu)\frac{2(1-\delta)\eta^2}{\delta}\left(\frac{2a}{\delta}+\sigma^2\right)\sum_{t=1}^T\theta_{t}  + \frac{\eta^2\sigma^2}{M}\sum_{t=0}^T\theta_t. 
\end{align*}
Denoting $\Theta_T = \sum_{t=0}^T\theta_t$, we obtain 
\begin{align*}
\frac{1}{\Theta_T}\sum_{t=0}^T\theta_ts_t &\le\frac{10}{\eta \Theta_T}\sum_{t=0}^T\left[ \Big(1-\frac{\eta\mu}{2}\Big) \theta_tr_t- \theta_tr_{t+1}\right]+ 10(2L+\mu)\frac{2(1-\delta)\eta^2}{\delta}\left(\frac{2a}{\delta}+\sigma^2\right)  + 10\frac{\eta\sigma^2}{M}. 
\end{align*}
Since $(1-\frac{\eta\mu}{2})\theta_t=\theta_{t-1}$ (which also holds for $t=0$), we have, by telescopic sum, 
\begin{align*}
\frac{1}{\Theta_T}\sum_{t=0}^T\theta_ts_t &\le\frac{10}{\eta \Theta_T}\sum_{t=0}^T\left[ \theta_{t-1}r_t- \theta_tr_{t+1}\right]+ 10(2L+\mu)\frac{2(1-\delta)\eta^2}{\delta}\left(\frac{2a}{\delta}+\sigma^2\right) + 10\frac{\eta\sigma^2}{M}\\
&\le \frac{10}{\eta \Theta_T}\theta_{-1}r_0 +10(2L+\mu)\frac{2(1-\delta)\eta^2}{\delta}\left(\frac{2a}{\delta}+\sigma^2\right) + 10\frac{\eta\sigma^2}{M}.
\end{align*}
Now, we use that $\Theta_T\ge \theta_T$ to obtain 
\begin{equation*}
\frac{1}{\Theta_T}\sum_{t=0}^T\theta_ts_t\le \frac{10}{\eta}\Big(1-\frac{\eta\mu}{2}\Big)^{T+1} \mathbb E[\|w_0-w^*\|^2]+10(2L+\mu)\frac{2(1-\delta)\eta^2}{\delta}\left(\frac{2a}{\delta}+\sigma^2\right) + 10\frac{\eta\sigma^2}{M}.
\end{equation*}
This completes the proof since $\frac{1}{\Theta_T}\sum_{t=0}^T\theta_ts_t = \mathbb E[f(w^{\mathrm{out}})]-f^*$. 
\end{proof}

\begin{proof}[Proof of item~\ref{thm:EF_strong_conv:itemconv} of Theorem~\ref{thm:EF_strong_conv}.]
The proof follows the proof of item  \ref{thm:EF_strong_conv:itemSconv} of Theorem \ref{thm:EF_strong_conv}. In particular, the computations of Step 1 are still valid when $\mu=0$. In Step 2, we apply Lemma~\ref{lem:bound_e_t_projFLEF} with $\theta_t=1$. 
\end{proof}
\subsection{Proof of Theorem \ref{thm:EF_strong_conv}: non-convex case}

\begin{proof}[Proof of item \ref{thm:EF_strong_conv:itemnoconv} of Theorem \ref{thm:EF_strong_conv}.]
The proof is divided into two steps. The first step consists in the derivation of \eqref{proof-non_conv_eq1}. On the second step, we apply the bound of Lemma~\ref{lem:bound_e_t_projFLEF} to conclude the proof. \\
\medskip

\noindent\textbf{Step 1.}
Let $t\in\mathbb N$. 
By \eqref{prop-2Lf}, 
\begin{align*}
f(\tilde w_{t+1}) &\le  f(\tilde w_t) + \langle\nabla f(\tilde w_t), \tilde w_{t+1}-\tilde w_t\rangle +\frac L2 \|\tilde w_{t+1} - \tilde w_t\|^2 \\
&= f(\tilde w_t) - \eta\langle\nabla f(\tilde w_t), g_{t+1}\rangle +\frac L2\eta^2 \| g_{t+1} \|^2 
\end{align*}
Taking the expectancy and using  \eqref{eq:boundg_t+1}, 
\begin{align*}
\mathbb E[f(\tilde w_{t+1})] &\le  \mathbb E[f(\tilde w_t)] - \eta\mathbb E[\langle\nabla f(\tilde w_t),\nabla f(w_t)\rangle] +\frac L2\eta^2 \mathbb E[\| g_{t+1} \|^2]\\
&\le  \mathbb E[f(\tilde w_t)] - \eta\mathbb E[\langle\nabla f(\tilde w_t),\nabla f(w_t)\rangle] + \frac L2\eta^2\left(\mathbb E[\|\nabla f(w_t)\|^2]+\frac{\sigma^2}{M} \right)
\end{align*}
Using $\langle\mathsf a,\mathsf b\rangle\le\|\mathsf a\|^2/2 + \|\mathsf b\|^2/2,$ 
\begin{align*}
\mathbb E[f(\tilde w_{t+1})]&\le  \mathbb E[f(\tilde w_t)] - \eta\mathbb E[\langle\nabla f(\tilde w_t)-\nabla f(w_t)+\nabla f(w_t),\nabla f(w_t)\rangle] + \frac L2\eta^2\left(\mathbb E[\|\nabla f(w_t)\|^2]+\frac{\sigma^2}{M} \right)\\
&\le  \mathbb E[f(\tilde w_t)] +\frac{\eta}{2}(L\eta-1)\mathbb E[\|\nabla f(w_t)\|^2] + \frac{\eta}{2}\mathbb E[\| \nabla f(\tilde w_t)-\nabla f(w_t)\|^2]+\frac{\eta^2L\sigma^2}{2M}\\
&\le \mathbb E[f(\tilde w_t)] +\frac{\eta}{2}(L\eta-1)\mathbb E[\|\nabla f(w_t)\|^2] + \frac{\eta L^2}{2}\mathbb E[\| e_t\|^2]+\frac{\eta^2L\sigma^2}{2M}
\end{align*}
where we used \eqref{ineg:defLsmooth} for the last inequality.  
Since $\eta\le \frac{1}{2L}$ we obtain 
\begin{equation}\label{proof-non_conv_eq1}
\mathbb E[f(\tilde w_{t+1})]\le \mathbb E[f(\tilde w_t)] -\frac{\eta}{4}\mathbb E[\|\nabla f(w_t)\|^2] + \frac{\eta L^2}{2}\mathbb E[\| e_t\|^2]+\frac{\eta^2L\sigma^2}{2M}.
\end{equation}

\noindent\textbf{Step 2. }
Let $T\ge 1$. 
From \eqref{proof-non_conv_eq1}, we have 
\begin{equation*}
\frac{\eta}{4}\sum_{t=0}^T\mathbb E[\|\nabla f(w_t)\|^2]\le \sum_{t=0}^T\Big[ \mathbb E[f(\tilde w_t)-f^*]-\mathbb E[f(\tilde w_{t+1})-f^*]\Big]  + \frac{\eta L^2}{2}\sum_{t=0}^T\mathbb E[\| e_t\|^2]+(T+1)\frac{\eta^2L\sigma^2}{2M}.
\end{equation*}
Let us now apply Lemma \ref{lem:bound_e_t_projFLEF} with weights $\theta_t=1$: 
\begin{align*}
\frac{\eta}{4}\sum_{t=0}^T\mathbb E[\|\nabla f(w_t)\|^2]&\le \mathbb E[f(\tilde w_0)-f^*] +\frac{4b\eta^3 L^2}{\delta^2}\sum_{t=0}^{T-1}\mathbb E[\|\nabla f(w_t)\|^2]\\
&\quad+  \frac{(1-\delta)T\eta^3L^2}{\delta}\left(\frac{2a}{\delta}+\sigma^2\right)+(T+1)\frac{\eta^2L\sigma^2}{2M}.
\end{align*}
Since $\frac{4b\eta^2 L^2}{\delta^2}\le\frac18$,
\begin{align*}
\frac{\eta}{8}\sum_{t=0}^T\mathbb E[\|\nabla f(w_t)\|^2]\le \mathbb E[f(\tilde w_0)-f^*] +  \frac{(1-\delta)T\eta^3L^2}{\delta}\left(\frac{2a}{\delta}+\sigma^2\right)+(T+1)\frac{\eta^2L\sigma^2}{2M}.
\end{align*}
Hence, 
\begin{align*}
\frac{1}{T+1}\sum_{t=0}^T\mathbb E[\|\nabla f(w_t)\|^2]\le \frac{8}{(T+1)\eta}\mathbb E[f(\tilde w_0)-f^*] +  \frac{8(1-\delta)\eta^2L^2}{\delta}\left(\frac{2a}{\delta}+\sigma^2\right)+\frac{8\eta L\sigma^2}{2M}.
\end{align*}
\end{proof}

\section{Experimental details and Additional experiments\label{sec:appx_num_exp}}

In this section we start by giving implementation details in Subsection \ref{sec:app_exp_details} and then in the following subsections provide further experiments.

\subsection{Experimental details\label{sec:app_exp_details}}

\subsubsection{Hardware and software.}
All experiments were conducted on an internal cluster machine equipped with an Intel Xeon Gold 5320 CPU (104 cores, 2.20 GHz), 500 GB of RAM, and a 
single NVIDIA A30 GPU with 24 GB of memory (driver version 545.23.08, CUDA version 12.3). 
The software environment consisted of Python 3.9.19 and PyTorch 2.3, running on Debian GNU/Linux 12 (Bookworm).
Experimental run took approximately 1 hour on average with 3 clients on 10 processes, with the most computationally intensive run requiring up to 48 hours.
All experiments were performed on our institutional infrastructure on CPU; no cloud computing resources were used. 
Reproducing the CIFAR-10 experiments with $3$ clients and a batch size of $128$ requires $24$ GB of RAM.

\subsubsection{Hyperparameters}

\begin{itemize}
\item \textbf{Early stopping:} patience of 10 epochs and a minimum delta of 0.001.
\item \textbf{Batch size:} 128 per client.
\item \textbf{Learning rate scheduler:} PyTorch's \texttt{ReduceLROnPlateau}\footnote{\url{https://pytorch.org/docs/2.3/generated/torch.optim.lr_scheduler.ReduceLROnPlateau.html}} starting from 0.1, with a patience of 2 epochs, a decay factor of 0.5, and a minimum learning rate of 0.001.
\end{itemize}

\subsubsection{Algorithms parameter}

Based on the observed trade-offs between convergence speed, communication cost, and stability across both datasets, the values $k = 0.01$ and $K = 3$ appear to offer a good compromise. These settings strike a strong balance between training efficiency and robustness.
We note that although the algorithms differ in efficiency, they all converge to reasonable minima. Moreover, incorporating the Error Feedback mechanism consistently improves performance-an expected outcome given the use of biased compressors, as previously discussed.
On the CIFAR-10 experiments for selecting $K$ (see the two rightmost plots in Figure~\ref{fig:find_best_K}), one might be surprised by the increase in test loss at the end of training for Algorithm~\ref{fed-avg-proj-EF}. This rise is due to overfitting, as the training loss continues to decrease monotonically (see Appendix~\ref{sec:appx_num_exp}). The reason the algorithm was not stopped earlier lies in the early stopping criterion (specifically the patience). Naturally, had we optimized early stopping specifically for Algorithm~\ref{fed-avg-proj-EF}, more favorable stopping conditions would have been chosen. Note that the curves start at different points, as the x-axis value of the first point corresponds to the communication cost after the first epoch. 

\begin{figure}[H]
    \caption{Selection of $k$ (\textbf{Top-}$k$) and $K$ for Algorithms~\ref{fed-avg-proj-var2} and~\ref{fed-avg-proj-EF}, for $M=3$ clients. 
    }\label{fig:find_best_K}
    \begin{center}
    \includegraphics[scale=.39]{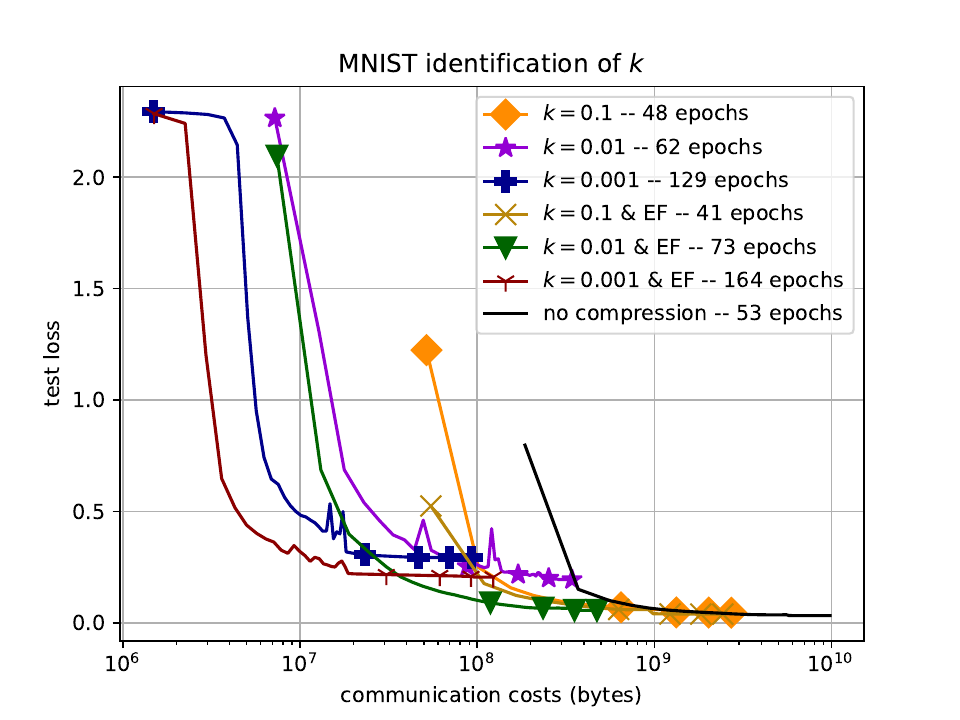}
    \includegraphics[scale=.39]{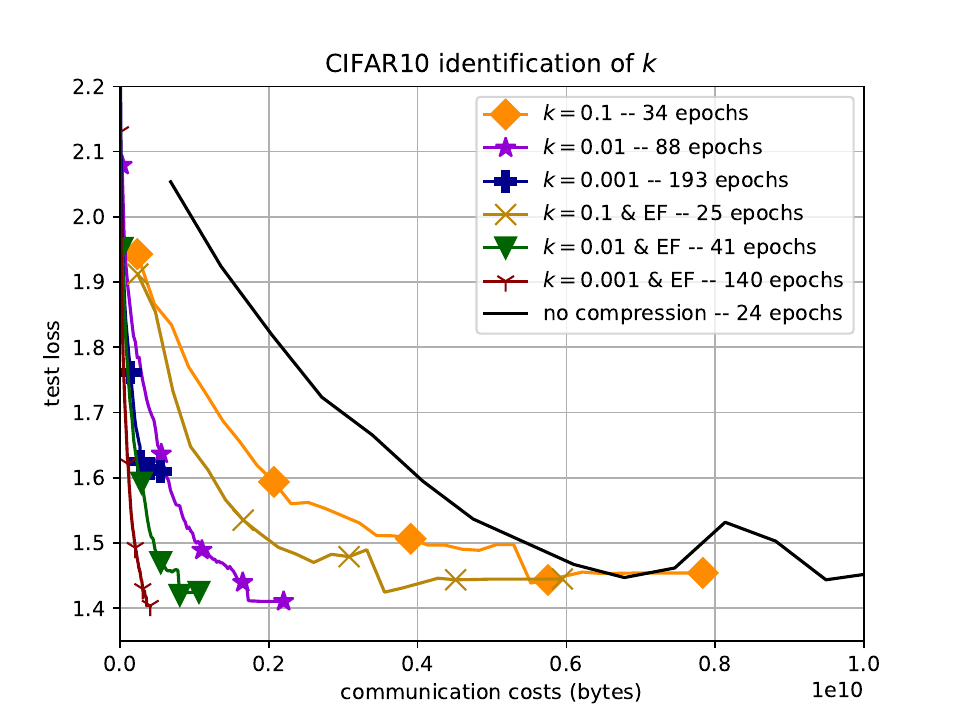}
    \includegraphics[scale=.39]{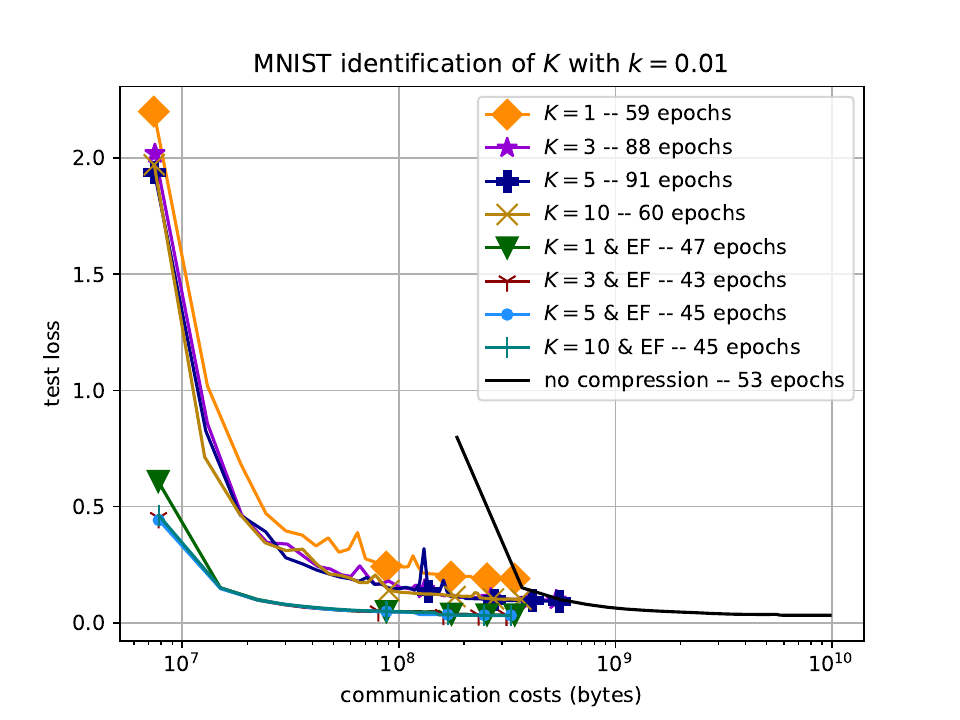} 
    \includegraphics[scale=.39]{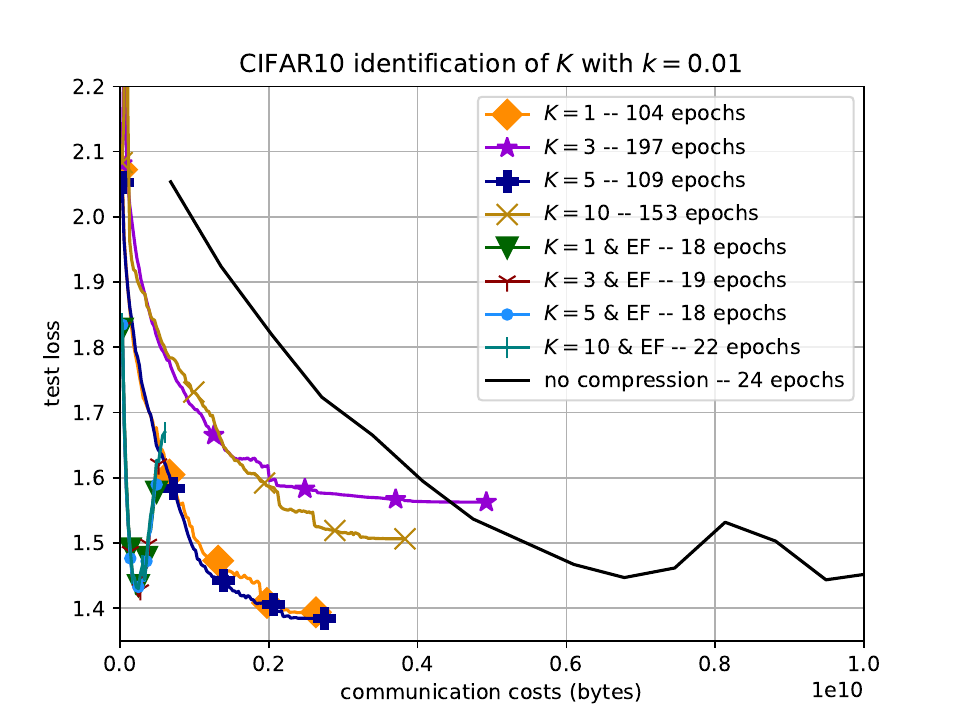} 
    \end{center}
\end{figure}

\subsubsection{Algorithms evaluated in our experiments.}

Algorithm~\ref{alg:fedavg_compr} corresponds to the standard \texttt{FedAvg} algorithm with gradient compression.  
Algorithm~\ref{fedavg-ef} is a variant that incorporates the compression error using the well-known Error Feedback mechanism. All our experiments are done setting $\zeta=0.75$.   
Algorithm~\ref{alg:ef21} implements the \texttt{EF21} algorithm from \cite{NEURIPS2021_231141b3}. In our experiments, we observed that \texttt{EF21} performs poorly on large models (see the analysis provided in Subsection~\ref{sec:ef21_diana_problem}). To address this limitation, we introduce a forgetting parameter $\gamma \in (0,1)$, which improves the robustness of the method (see Algorithm~\ref{alg:ef21_gamma}). 

Table~\ref{tab:ef21_ef21gamma} shows the first descent directions for Algorithms~\ref{alg:ef21} and \ref{alg:ef21_gamma}. This illustrates that in classical \texttt{EF21}, the compressed version of the initial gradients---and their propagation---persist across all descent directions $\mathsf D_k^i$, whereas in our modified version, their influence gradually vanishes at a rate governed by $\gamma$.

A similar effect is observed for \texttt{DIANA} (Algorithm~\ref{alg:diana}) and its modified counterpart with forgetting (Algorithm~\ref{alg:diana_with_gamma}).  Our experiments are done with hyperparemeters $\alpha=0.9$ and $\beta=0.1$ (see evaluation for different values of $\alpha$ and $\beta$ in Subsection \ref{sec:ef21_diana_problem}).

\begin{algorithm}[H]
\caption{\texttt{FedAvg} with compression}
\label{alg:fedavg_compr}
\begin{algorithmic}[1]
\State \textbf{Initialization: } $w_0\in\R^d$. 
\For{$t=0,\dots,T$}
\For{Each client $i$}
\State Receive $w_t$.
\State Compute Stochastic Gradient $g_{t+1}^i$.
\State Send $\mathcal C(g_{t+1}^i)$ to the Central Server. 
\EndFor
\State \textbf{Central Server:}
\State $w_{t+1}= w_t-\frac\eta M\sum_{i=1}^M\mathcal C(g_{t+1}^i)$.
\EndFor
\end{algorithmic}
\end{algorithm}

\begin{algorithm}[H]
\caption{\texttt{FedAvg with Error Feedback}}\label{fedavg-ef}
\begin{algorithmic}[1]
    \State \textbf{Initialization:} $w_0\in\R^d$, $e_0^i = 0 \in\mathbb R^d$, $\forall i\in[M]$. $\zeta\in(0,1]$. 
    \For{$t=0,\dots, T$}
	\For{each client $i$}
	\State Receive $w_t$  	  
    \State Compute Stochastic Gradient $g_{t+1}^i$.
    \State  $\tilde g_{t+1}^i=g_{t+1}^i + \zeta e_t^i$. \Comment{Add the previous compression error}
\State $e_{t+1}^i = \tilde g_{t+1}^i- \mathcal C(\tilde g_{t+1}^i)$ \Comment{Update the compression error}    
    \State Send $\mathcal C(\tilde g_{t+1}^i)$ to the Central Server 
    \EndFor 
    \State \textbf{Central Server:} 
    \State $w_{t+1} = w_t - \frac \eta M\sum_{i=1}^M\mathcal C(\tilde g_{t+1}^i)$
    \EndFor
\end{algorithmic}
\end{algorithm}

\begin{algorithm}[H]
\caption{\texttt{EF21} \cite{NEURIPS2021_231141b3}}
\label{alg:ef21}
\begin{algorithmic}[1]
\State \textbf{Initialization:} $\mathsf D_0^i=0\in\R^d$ 
	\For{$t=0,\dots, T$}
    \For{Each client $i\in [M]$}
    	\State Receive $w_t$
    	\State Compute Stochastic Gradient $g_{t+1}^{i}$
        \State $\mathsf M_{t+1}^i=\mathcal C(g_{t+1}^i-\mathsf D_t^i)$
        \State $\mathsf D_{t+1}^i = \mathsf D_t^i +\mathsf M_{t+1}^i$ \Comment{Update local direction of descent}
        \State Send $\mathsf M_{t+1}^i$  to the Central Server
    \EndFor
    \State\textbf{Central Server: }
    \State $\mathsf D_{t+1}^i = \mathsf D_t^i +\mathsf M_{t+1}^i$
    \State $w_{t+1} = w_t - \frac{\eta}{M}\sum_{i=1}^M\mathsf D_{t+1}^i$
    \EndFor
\end{algorithmic}
\end{algorithm}

\begin{algorithm}[H]
\caption{\texttt{EF21} with parameter $\gamma\in (0,1)$.  }
\label{alg:ef21_gamma}
\begin{algorithmic}[1]
\State \textbf{Initialization:} $\mathsf D_0^i=0\in\R^d$ 
	\For{$t=0,\dots, T$}
    \For{Each client $i\in [M]$}
    	\State Receive $w_t$
    	\State Compute Stochastic Gradient $g_{t+1}^{i}$
        \State $\mathsf M_{t+1}^i=\mathcal C(g_{t+1}^i-\gamma\mathsf D_t^i)$
        \State $\mathsf D_{t+1}^i = \gamma\mathsf D_t^i +\mathsf M_{t+1}^i$ \Comment{Update local direction of descent}
        \State Send $\mathsf M_{t+1}^i$  to the Central Server
    \EndFor
    \State\textbf{Central Server: }
    \State $\mathsf D_{t+1}^i = \gamma\mathsf D_t^i +\mathsf M_{t+1}^i$
    \State $w_{t+1} = w_t - \frac{\eta}{M}\sum_{i=1}^M\mathsf D_{t+1}^i$
    \EndFor
\end{algorithmic}
\end{algorithm}

\begin{algorithm}[H]
\caption{\texttt{DIANA} \cite{Mishchenko27092024}}
\label{alg:diana}
\begin{algorithmic}[1]
\State \textbf{Initialization:} $h_0^i=h_{0} =\mathsf D_0=0\in\R^d$
\For{$t = 0, \dots, T$}
    \For{Each client $i\in [M]$}
    	\State Receive $w_t$ 
        \State Compute Stochastic Gradient $g_{t+1}^{i}$
        \State $\mathsf M_{t+1}^i=\mathcal C(g_{t+1}^{i}- h_{t}^{i})$
        \State $h_{t+1}^i = h_{t}^{i} + \alpha\mathsf M_{t+1}^i$  \Comment{Update memory}
        \State Send $\mathsf M_{t+1}^i$ to the Central Server 
    \EndFor
    \State\textbf{Central Server: } 
    \State $\mathsf M_{t+1}= \frac{1}{M}\sum_{i=1}^{M} \mathsf M_{t+1}^i$
    \State $\mathsf D_{t+1} = \beta \mathsf D_{t} +  h_{t} +\mathsf M_{t+1}$ \Comment{Compute the descent direction with momentum when $\beta>0$}
    \State  $h_{t+1} = h_{t} + \alpha\mathsf M_{t+1}$ \Comment{Update memory}
    \State $w_{t+1} = w_t - \eta\mathsf D_{t+1}$
\EndFor
\end{algorithmic}
\end{algorithm}

\begin{algorithm}[H]
\caption{\texttt{DIANA} with parameter $\gamma\in (0,1)$.}
\label{alg:diana_with_gamma}
\begin{algorithmic}[1]
\State \textbf{Initialization:} $h_0^i=h_{0} =\mathsf D_0=0\in\R^d$
\For{$t = 0, \dots, T$}
    \For{Each client $i\in [M]$}
    	\State Receive $w_t$ 
        \State Compute Stochastic Gradient $g_{t+1}^{i}$
        \State $\mathsf M_{t+1}^i=\mathcal C(g_{t+1}^{i}-\gamma h_{t}^{i})$
        \State $h_{t+1}^i = \gamma h_{t}^{i} + \alpha\mathsf M_{t+1}^i$  \Comment{Update memory}
        \State Send $\mathsf M_{t+1}^i$ to the Central Server 
    \EndFor
    \State\textbf{Central Server: } 
    \State $\mathsf M_{t+1}= \frac{1}{M}\sum_{i=1}^{M} \mathsf M_{t+1}^i$
    \State $\mathsf D_{t+1} = \beta \mathsf D_{t} + \gamma h_{t} +\mathsf M_{t+1}$ \Comment{Compute the descent direction with momentum when $\beta>0$}
    \State  $h_{t+1} = \gamma h_{t} + \alpha\mathsf M_{t+1}$ \Comment{Update memory}
    \State $w_{t+1} = w_t - \eta\mathsf D_{t+1}$
\EndFor
\end{algorithmic}
\end{algorithm}

\subsubsection{Variability across runs.}
To assess the stability of our method, we report the test cross-entropy loss over 10 independent runs with different random seeds, measured at the end of training.

\begin{table}[ht]
    \centering
    \caption{Standard deviation of the test loss where $\mathcal{C}$=Top-$0.01$.}
    \label{tab:var_comp}
    \begin{tabular}{lccc}
    \toprule
    \textbf{Algorithm} & \textbf{MNIST} & \textbf{CIFAR-10} \\
    \midrule
    no compression                              & $0.0039$  & $0.0423$ \\
         Algorithm \ref{fed-avg-proj-var2}     & $0.0257$  & $0.0284$ \\
     Algorithm \ref{fed-avg-proj-EF} & $0.0045$  & $0.0149$ \\
    Algorithm \ref{alg:fedavg_compr}                              & $0.0201$  & $0.0211$ \\
    Algorithm \ref{fedavg-ef}                         & $0.0074$  & $0.0143$ \\
      Algorithm \ref{alg:ef21_gamma}                      & $0.0138$  & $0.0221$ \\
    Algorithm \ref{alg:diana_with_gamma}                      & $0.0140$  & $0.0358$ \\

    \bottomrule
    \end{tabular}
\end{table}

We observe that Algorithm \ref{fed-avg-proj-EF}  has a lower standard deviation compared to the other algorithms.

\begin{table}[h]
\caption{Comparison of the first descent direction between Algorithms \ref{alg:ef21} and \ref{alg:ef21_gamma}. }
\label{tab:ef21_ef21gamma}
\centering
\begin{tabular}{|c|c|c|}
\hline
Iteration $k$ & $\mathsf D_k^i$ (Alg. \ref{alg:ef21}) & $\mathsf D_k^i$ (Alg. \ref{alg:ef21_gamma}) \\
\hline
$1$ & $\mathcal C(g_1^i)$ & $\mathcal C(g_1^i)$ \\
\hline
$2$ & $\mathcal C(g_1^i)+ \mathcal C(g_2^i-\mathcal C(g_1^i))$ & $\gamma\mathcal C(g_1^i)+ \mathcal C(g_2^i-\gamma\mathcal C(g_1^i))$  \\
\hline
$3$ & \makecell{$\mathcal C(g_1^i)+ \mathcal C(g_2^i-\mathcal C(g_1^i))$\\ $+\mathcal C(g_3^i-\mathcal C(g_1^i)- \mathcal C(g_2^i-\mathcal C(g_1^i)))$} & \makecell{$\gamma^2\mathcal C(g_1^i)+ \gamma\mathcal C(g_2^i-\gamma\mathcal C(g_1^i))$ \\$+\mathcal C(g_3^i-\gamma^2\mathcal C(g_1^i)- \gamma\mathcal C(g_2^i-\gamma\mathcal C(g_1^i)))$} \\
\hline
\end{tabular}

\end{table}

\subsection{Overfitting on the Training Set for CIFAR-10}

As noticed in Figure \ref{fig:find_best_K}, \ref{fig:3c} and \ref{fig:10c}, we can see a rising loss on the test set.  This behavior is a  sign of overfitting. As can be seen, the training loss decreases monotonically, confirming that the model continues to fit the training data while generalization performance degrades.

\begin{figure}[H]
    \centering
    \caption{Selection of $K$ for Algorithms \ref{fed-avg-proj-var2} and \ref{fed-avg-proj-EF}, for $M=3$ clients on train set.}
    \label{fig4}
    \includegraphics[scale=.5]{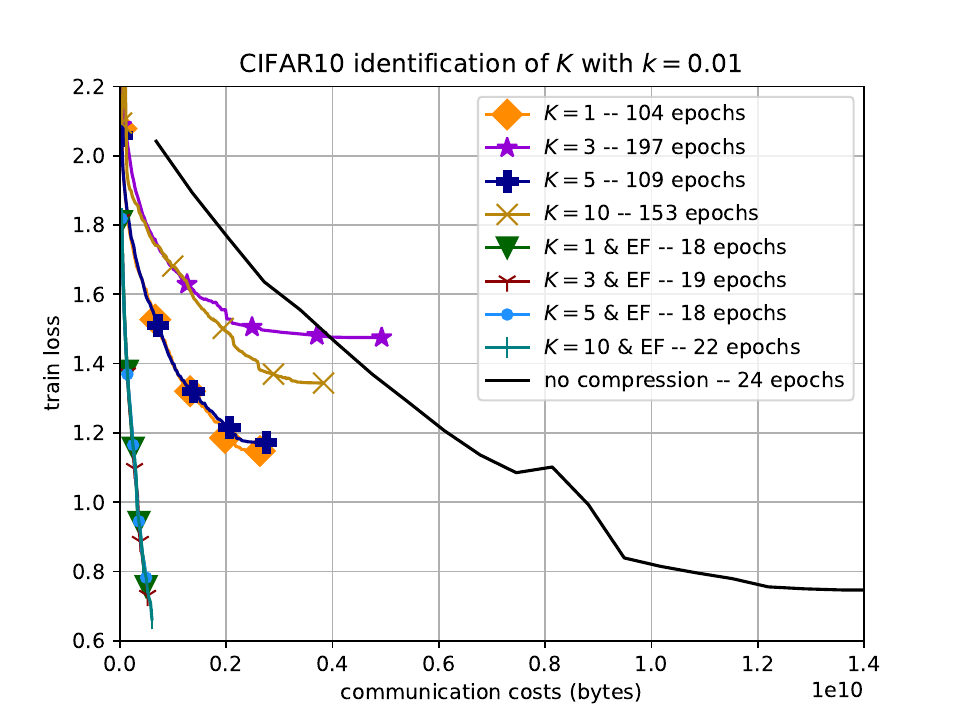} 
\end{figure}

\begin{figure}[H]
    \centering
    \caption{Comparison of Algorithms \ref{fed-avg-proj-var2} and \ref{fed-avg-proj-EF} with \texttt{FedAvg} with compression, \texttt{EF}, \texttt{EF21}, and \texttt{DIANA} for $M=3$ clients on train set.}
    \label{fig5}
    \includegraphics[scale=.5]{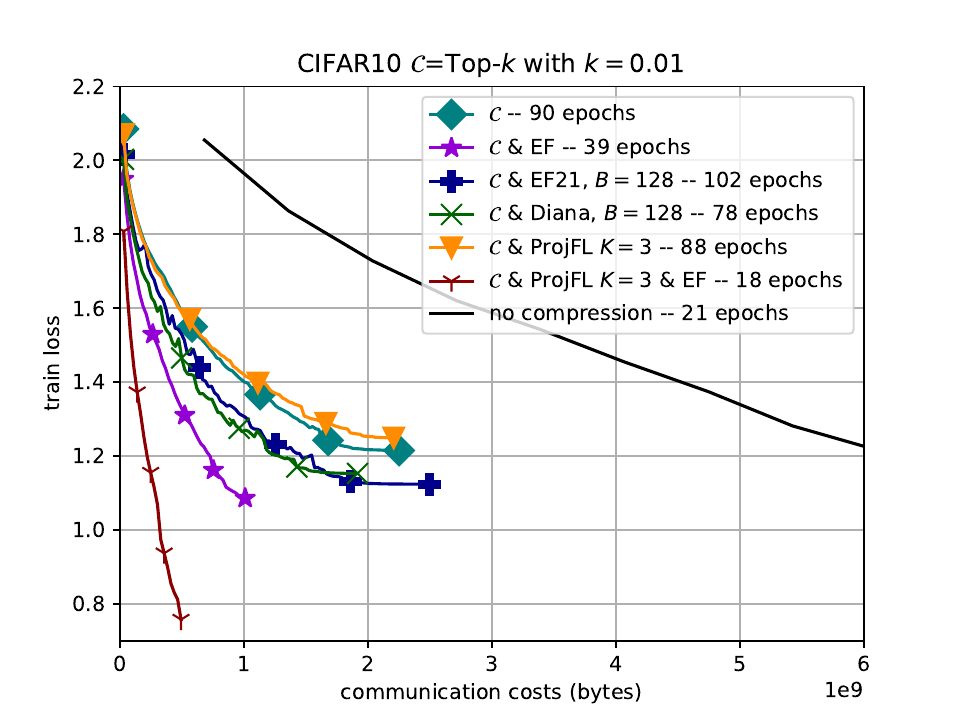} 
\end{figure}

\subsection{Additional Experiments with Uplink and Downlink Communication Costs}

In this subsection, we present the same experiments as in Figure~\ref{fig:3c}, but considering only the uplink communication cost (Figure~\ref{fig:uplink_cc}) or only the downlink communication cost (Figure~\ref{fig:downlink_cc}). 

Regarding the downlink communication cost, note that the central server only sends the difference $w_{t+1} - w_t$ (that is, only the values and indices of $w_{t+1}$ that differ from $w_t$).

\begin{figure}[H]
    \caption{Uplink communication cost with $M=3$ clients. The x-axis represents the total communication cost of the clients (\textit{i.e.}, 3 times the communication cost per client).}\label{fig:uplink_cc}
    \centering
    \includegraphics[scale=.39]{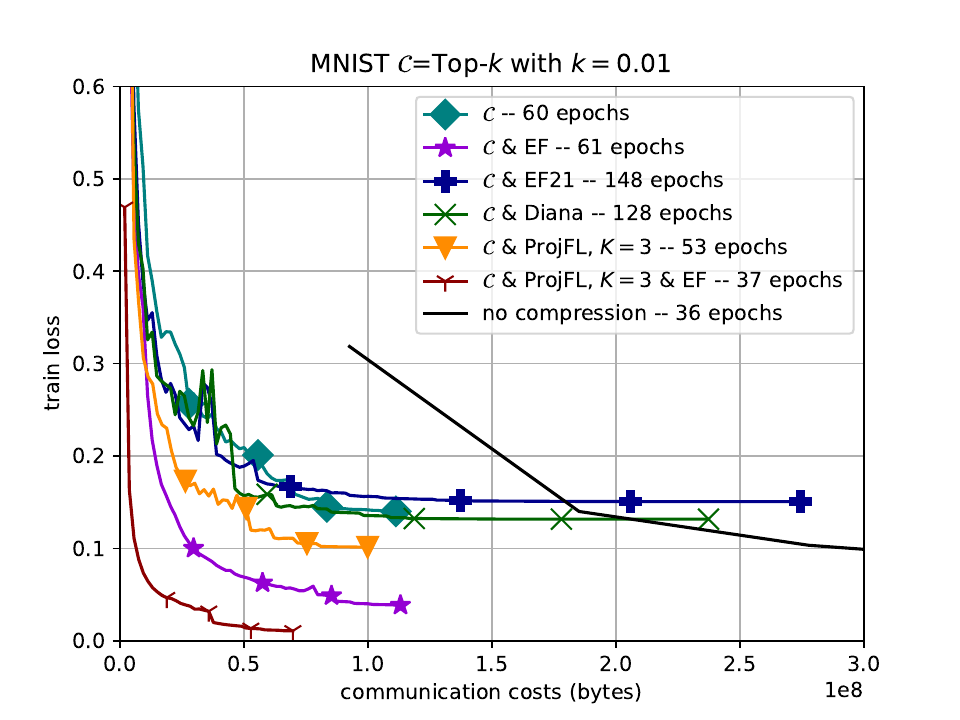} 
    \includegraphics[scale=.39]{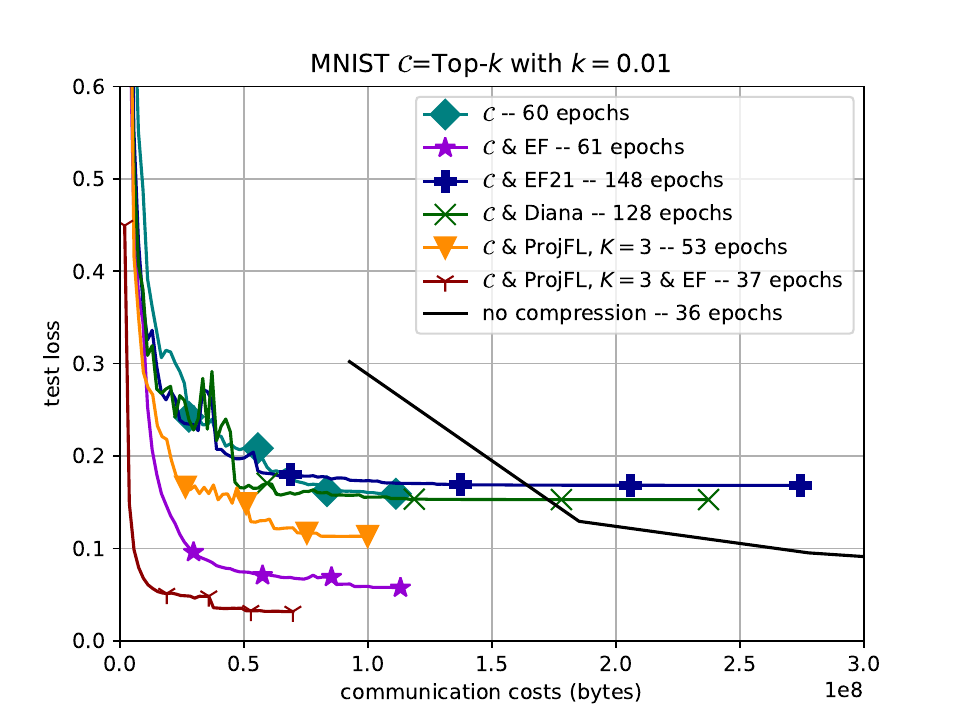}
    \includegraphics[scale=.39]{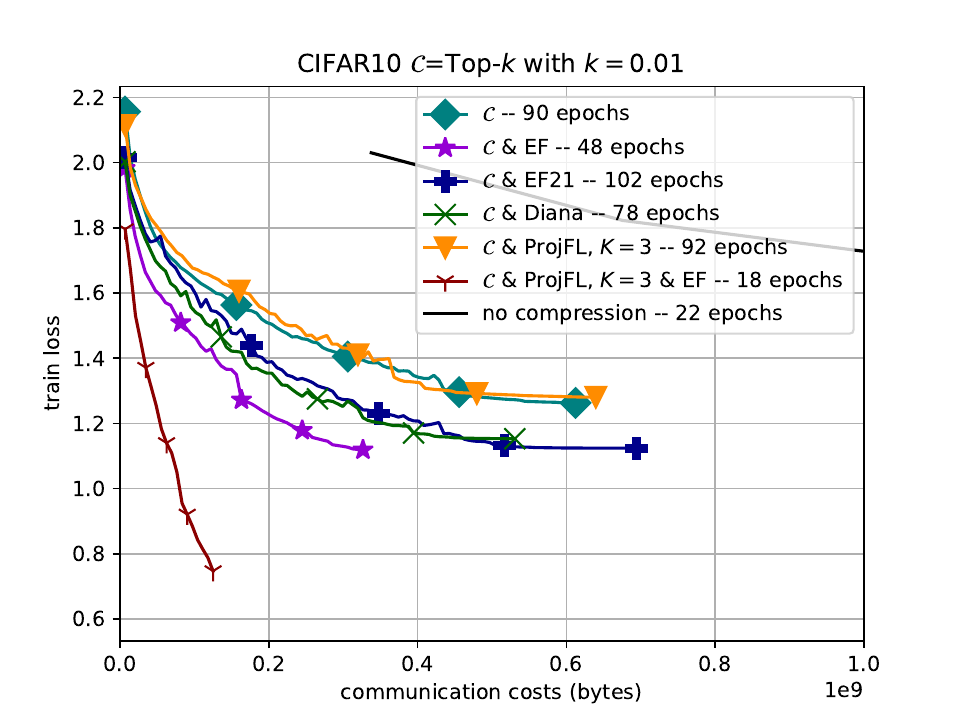} 
    \includegraphics[scale=.39]{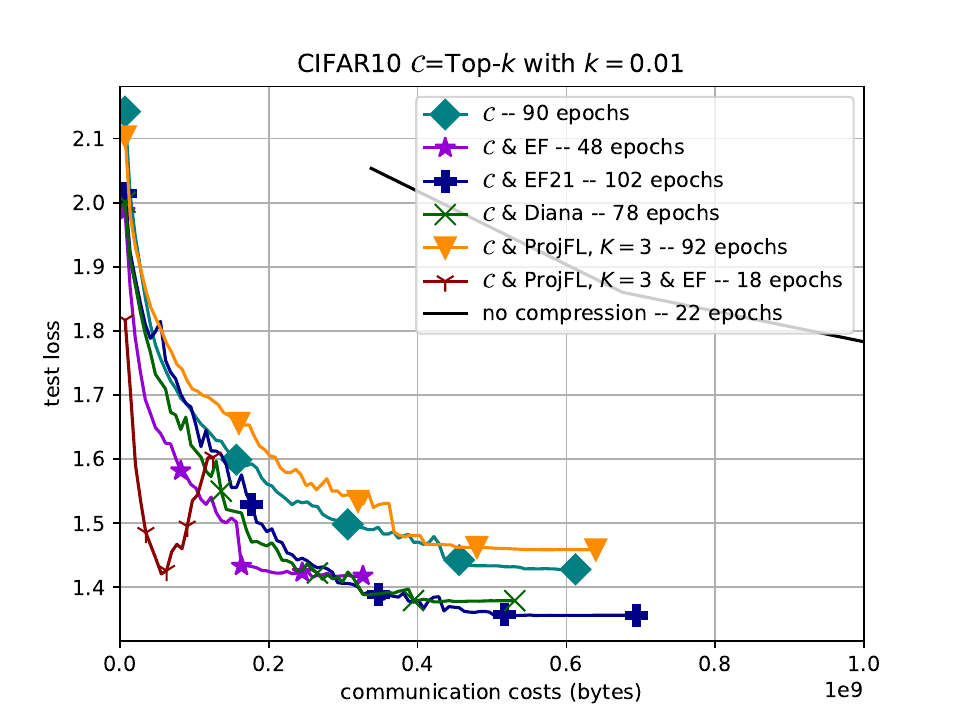}
\end{figure}

\begin{figure}[H]
    \caption{Downlink communication cost with $M=3$ clients.}\label{fig:downlink_cc}
    \centering
    \includegraphics[scale=.39]{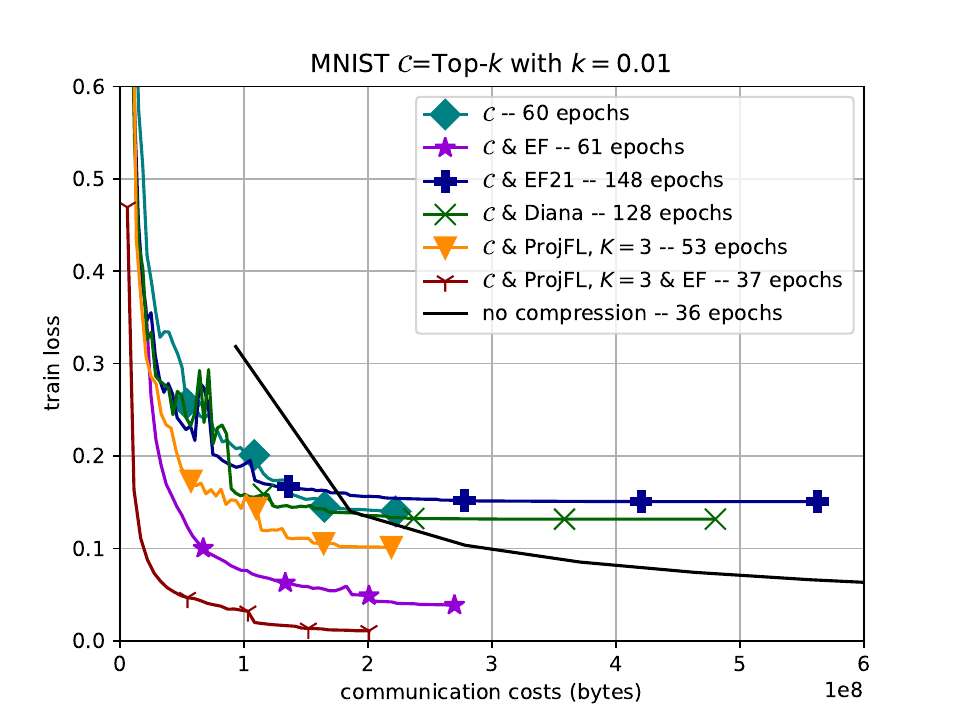} 
    \includegraphics[scale=.39]{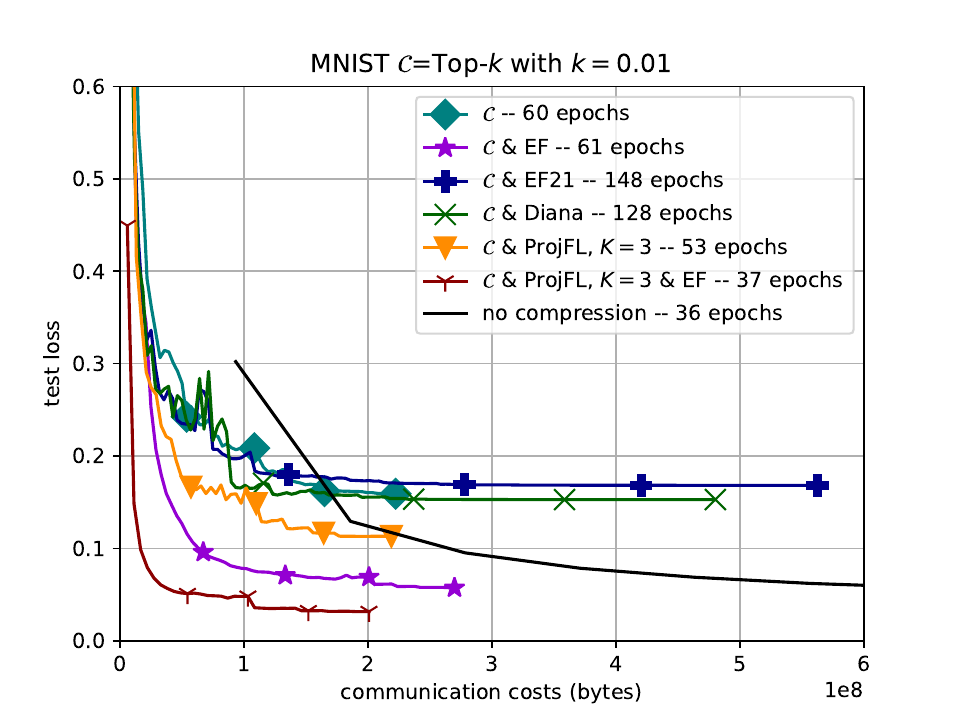} 
    \includegraphics[scale=.39]{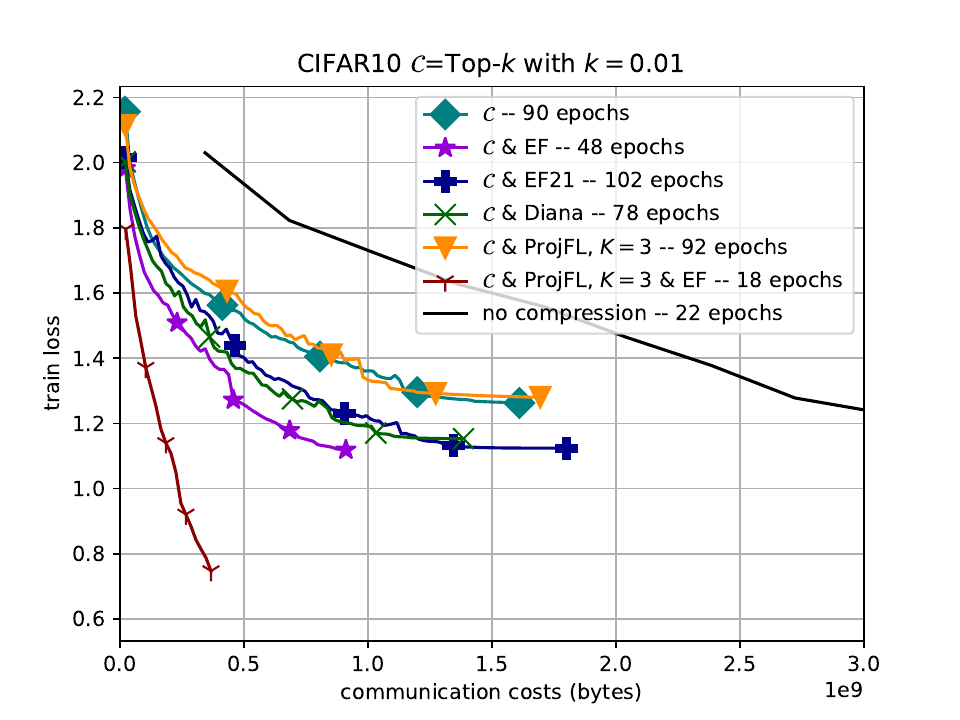} 
    \includegraphics[scale=.39]{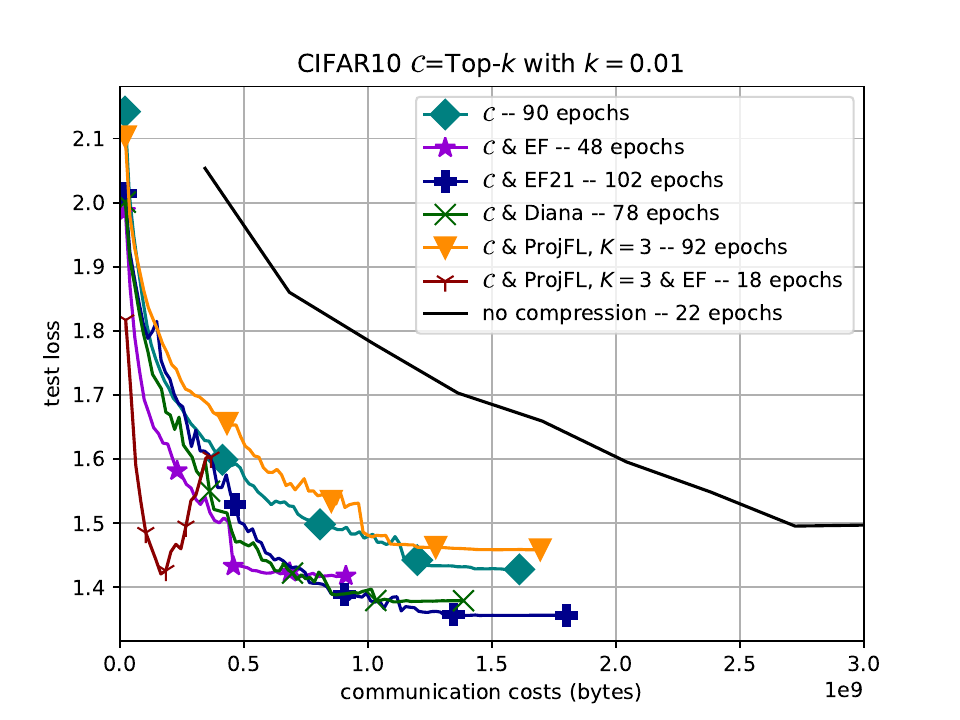}
\end{figure}

\subsection{Accuracy}

In Figure~\ref{fig:acc_mnist} (resp. Figure~\ref{fig:acc_cifar10}), we report both training and test accuracy for the MNIST (resp. CIFAR-10) dataset.

\begin{figure}[H]
    \caption{Accuracy on MNIST dataset with  $M=3$ clients. \textbf{First line:} Total communication cost. \textbf{Second line:} Uplink communication cost only. \textbf{Third line:} Donwlink communication cost only. }\label{fig:acc_mnist}
    \includegraphics[scale=.39]{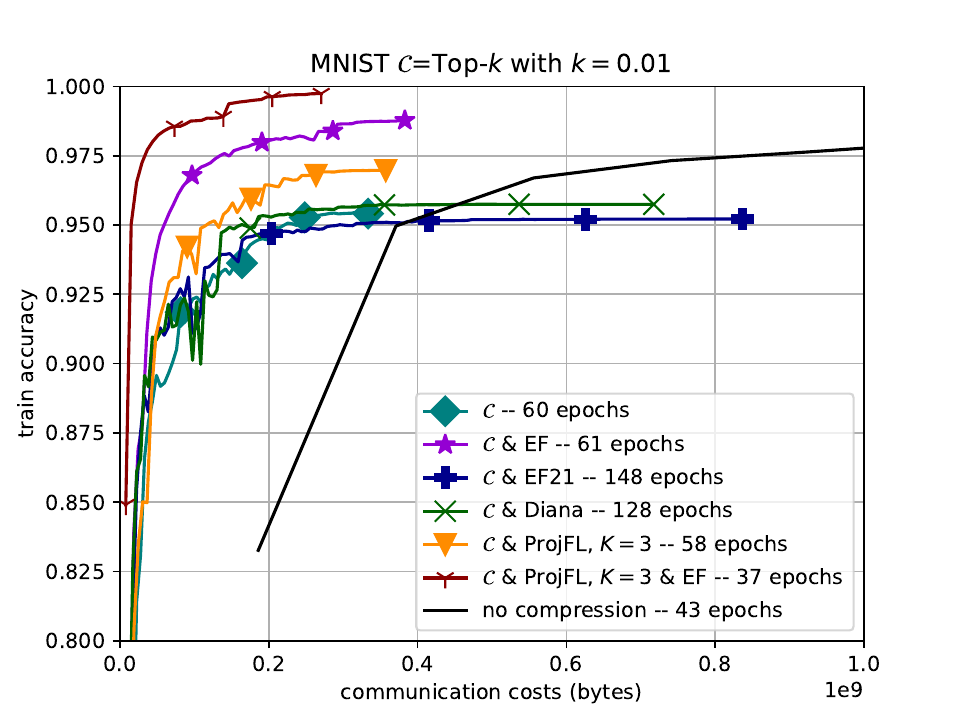}
    \includegraphics[scale=.39]{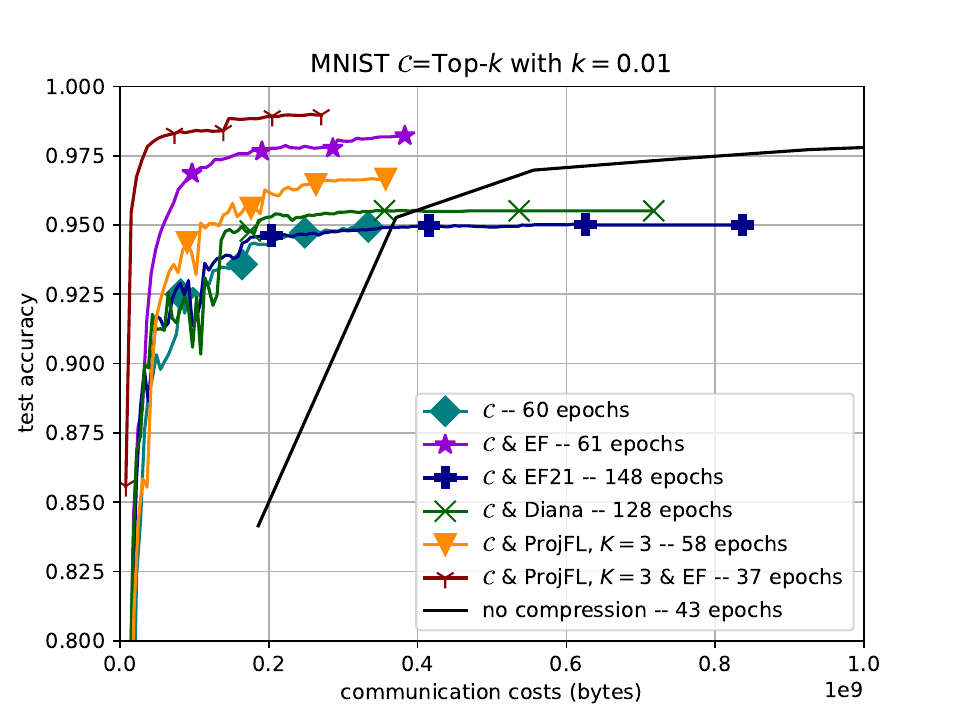} 
    \includegraphics[scale=.39]{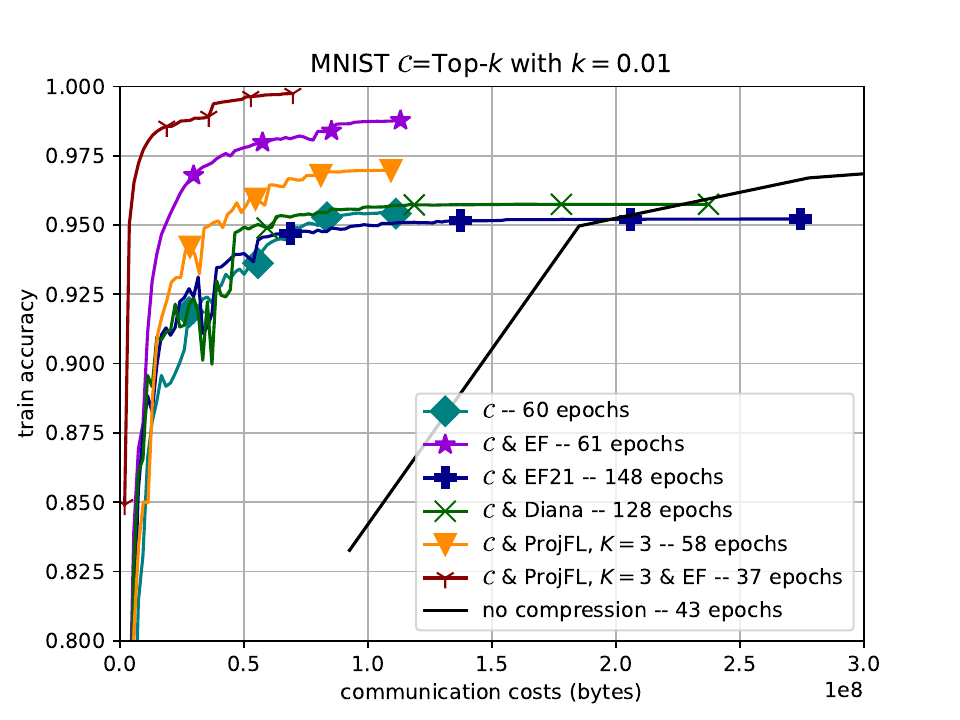} 
    \includegraphics[scale=.39]{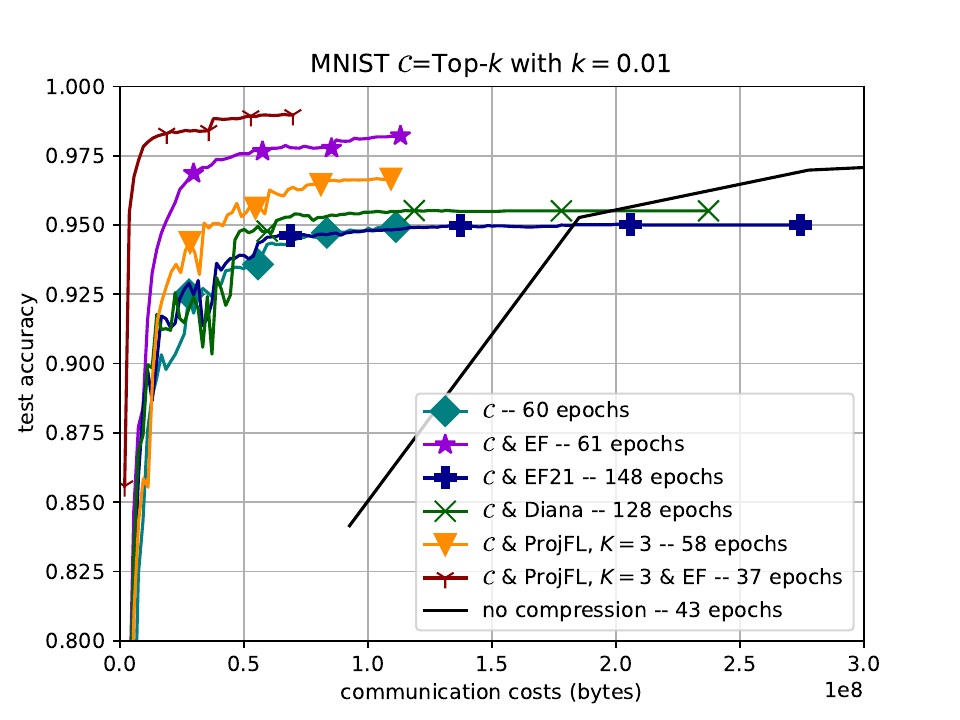} 
    \includegraphics[scale=.39]{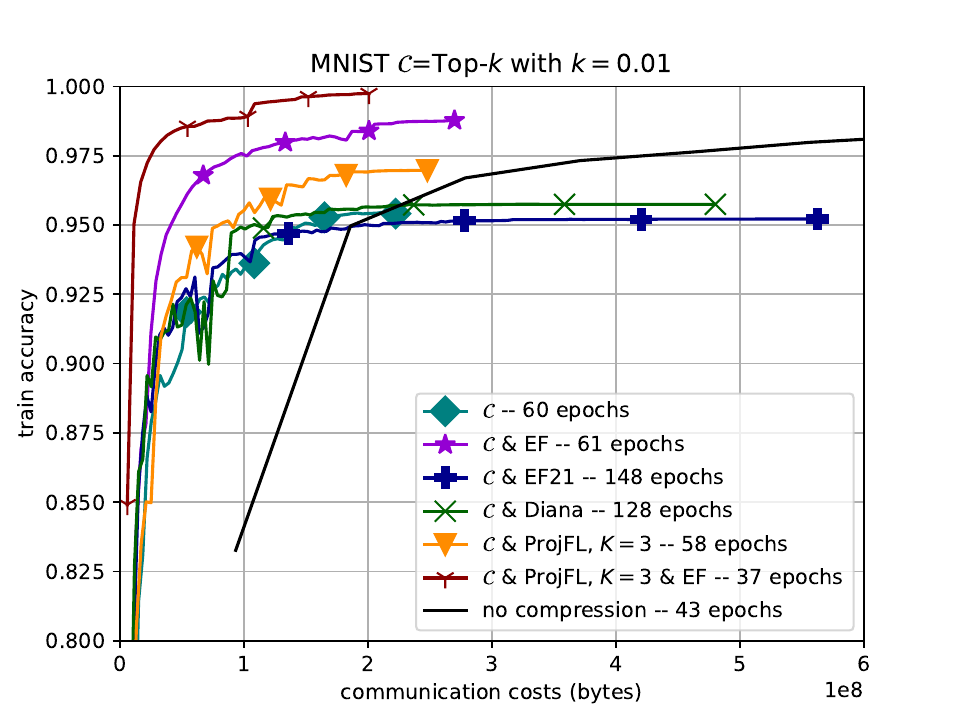} 
    \includegraphics[scale=.39]{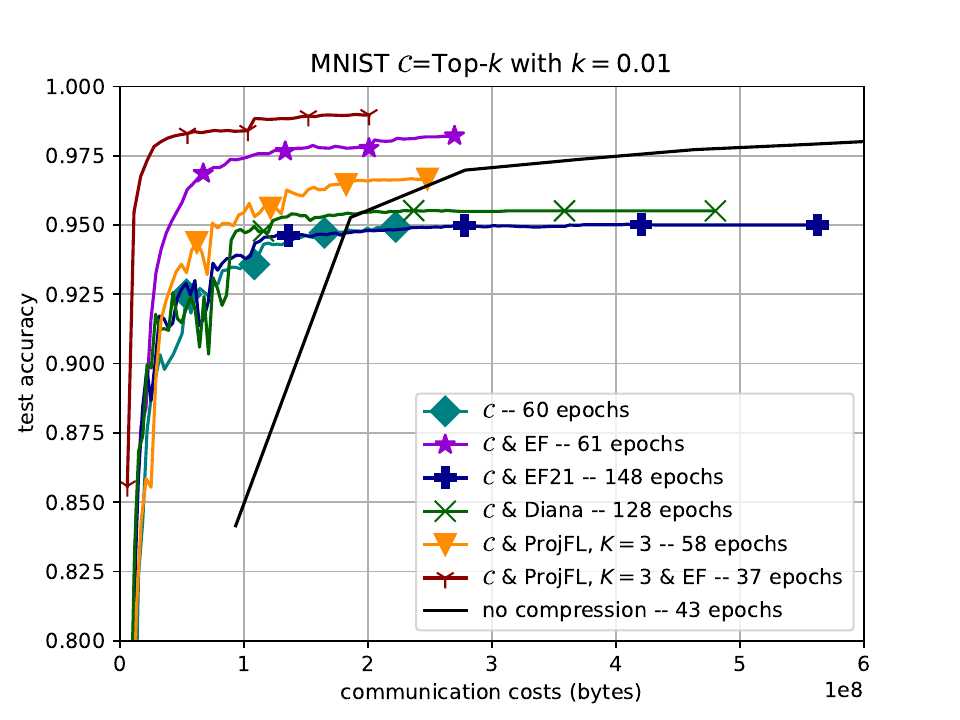}
\end{figure}

\begin{figure}
    \caption{Accuracy on CIFAR-10 dataset with  $M=3$ clients. \textbf{First line:} Total communication cost. \textbf{Second line:} Uplink communication cost only. \textbf{Third line:} Donwlink communication cost only.}\label{fig:acc_cifar10}
    \includegraphics[scale=.39]{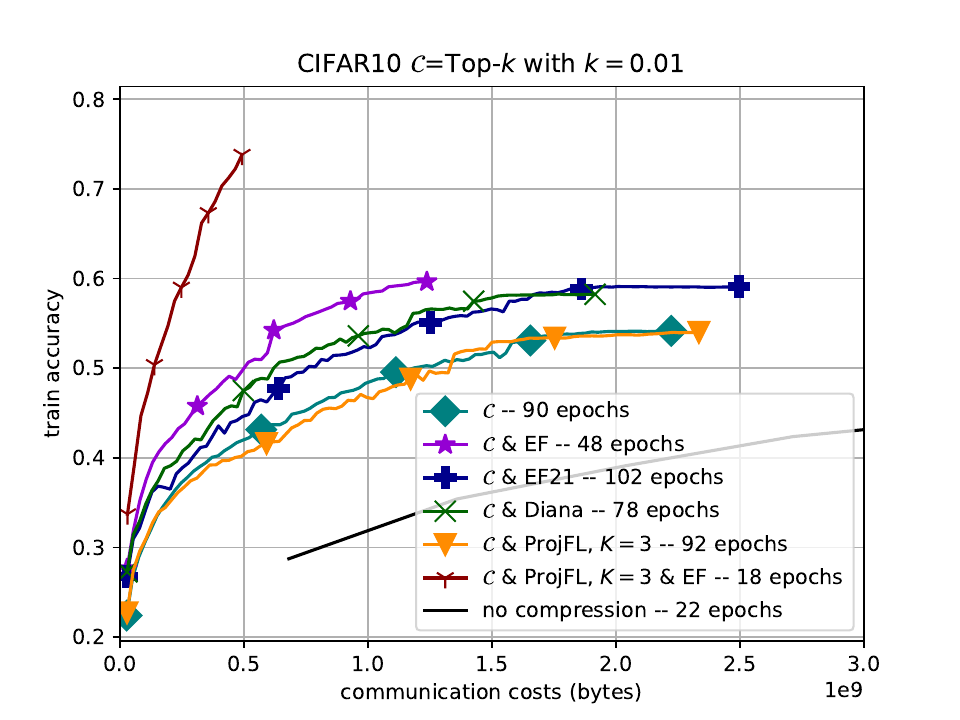}
    \includegraphics[scale=.39]{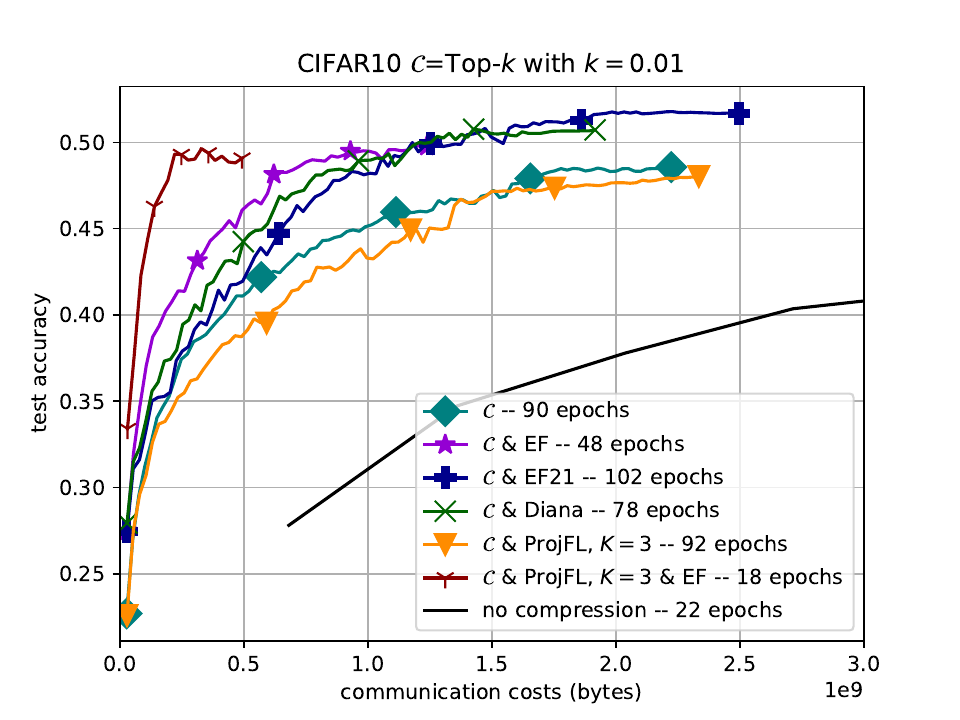} 
    \includegraphics[scale=.39]{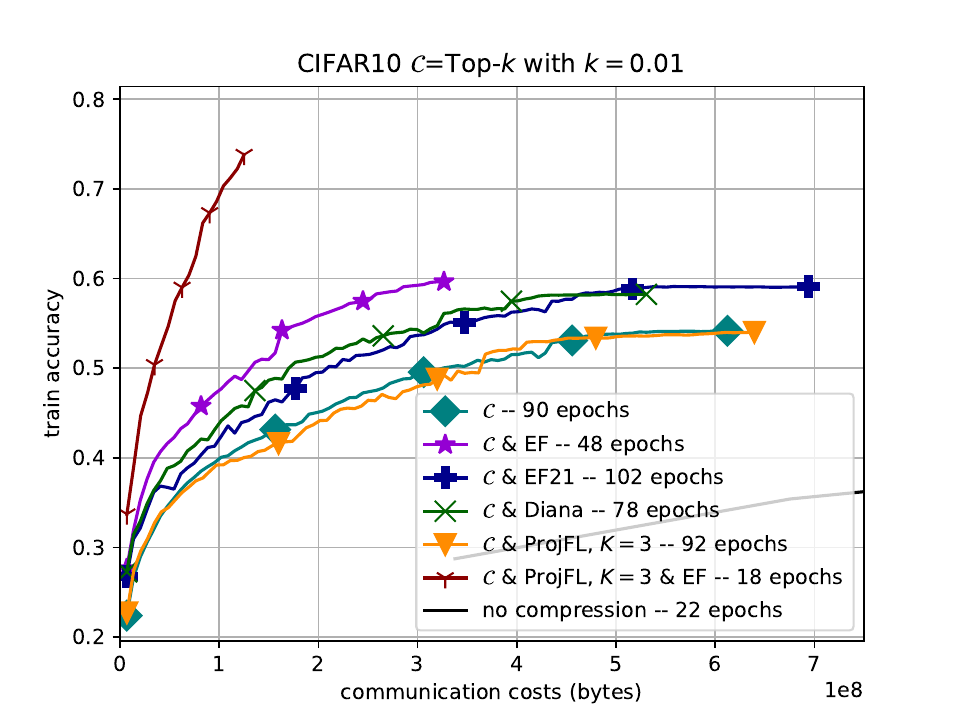} 
    \includegraphics[scale=.39]{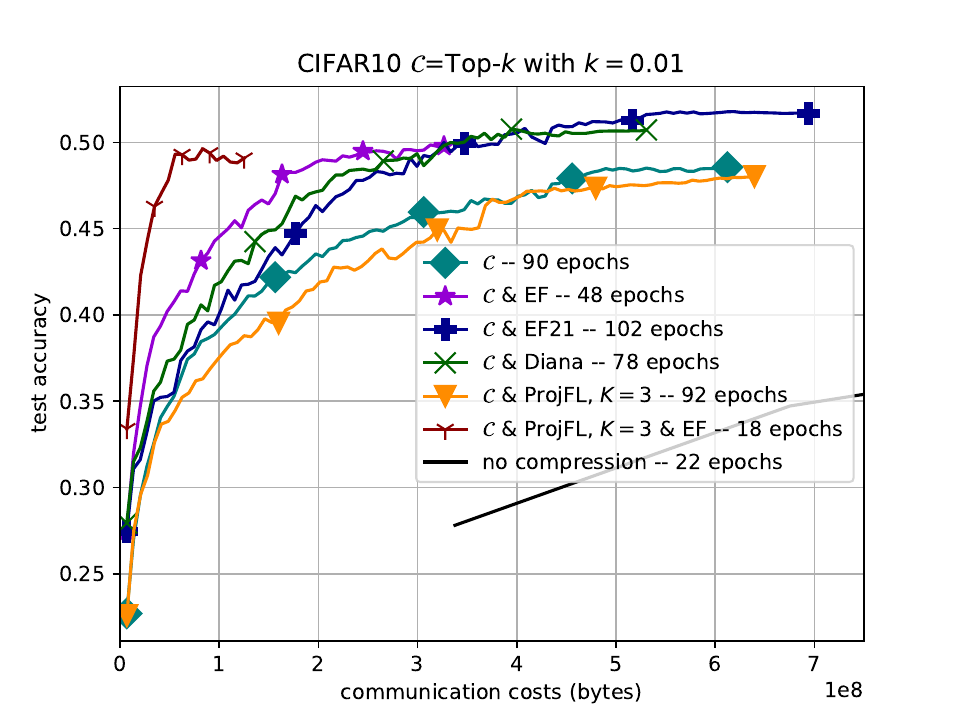}
    \includegraphics[scale=.39]{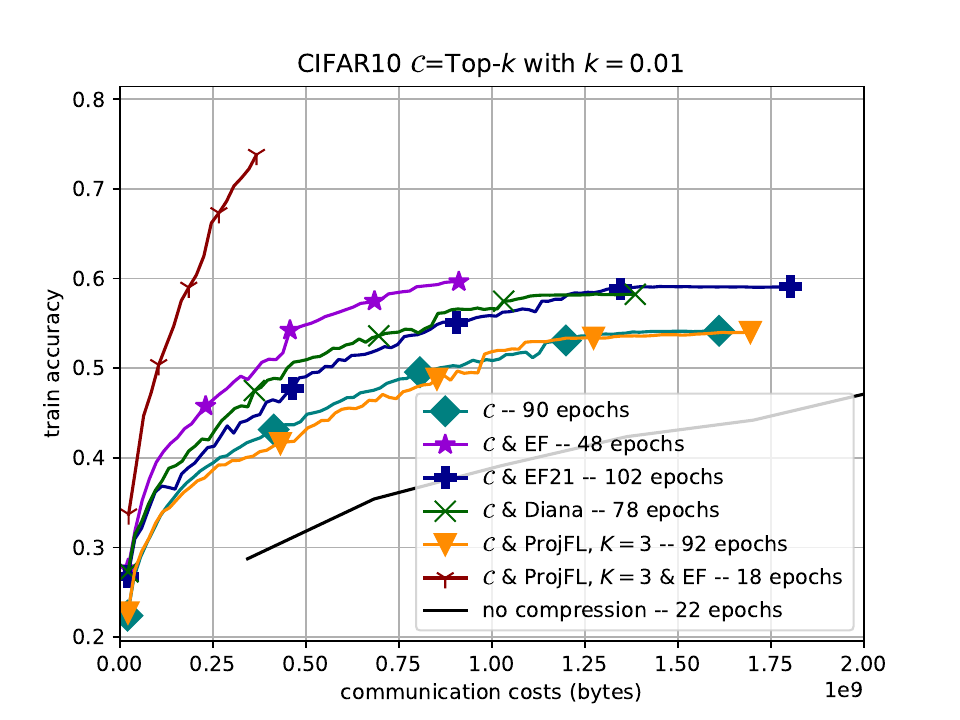}
    \includegraphics[scale=.39]{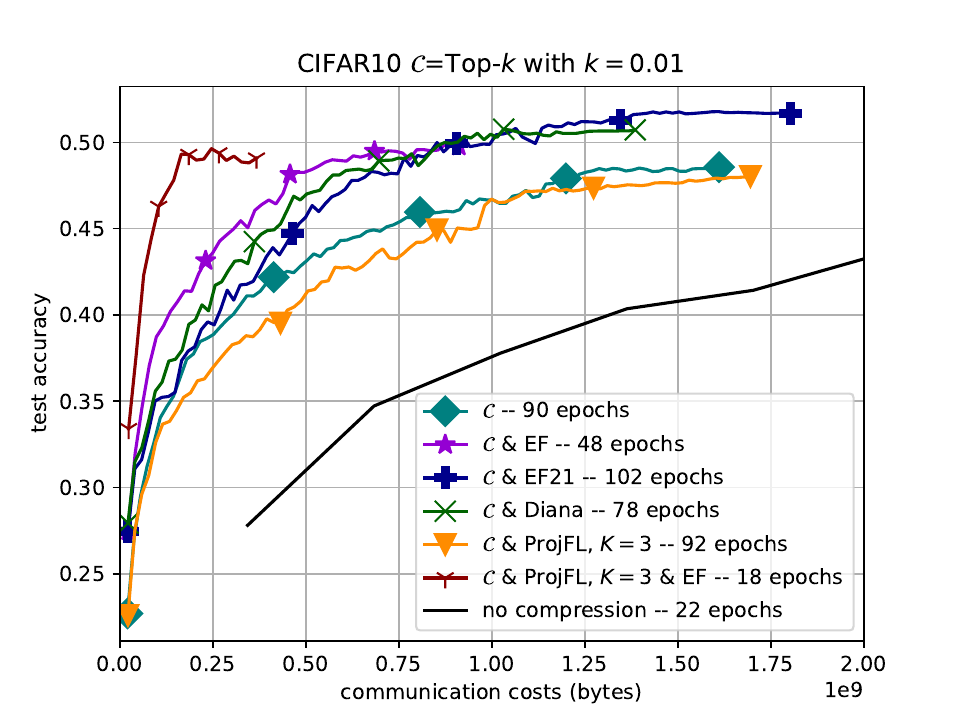} 
\end{figure}

\subsection{Evaluation of \texttt{EF21} and \texttt{DIANA} under different hyperparameters\label{sec:ef21_diana_problem}}

As already mentioned in Section~\ref{sec:numerics_main}, it was observed in \cite{NEURIPS2023_f0b1515b} that Algorithm~\ref{alg:ef21} is particularly sensitive to the batch size. However, on our models, none of the considered batch sizes yielded favorable results, as shown in Figure~\ref{fig11}. For this reason, all comparisons involving the \texttt{EF21} method in our experiments use Algorithm~\ref{alg:ef21_gamma} instead of Algorithm~\ref{alg:ef21}.

For \texttt{DIANA}, a similar observation holds. However, we also tuned the relevant hyperparameters of this algorithm, as shown in Figures~\ref{fig11} and~\ref{fig12}.

\begin{figure}
    \caption{\texttt{EF21} (Alg. \ref{alg:ef21}) under different batch sizes $\mathcal B$ with $M=3$ clients.}\label{fig11}
    \includegraphics[scale=.39]{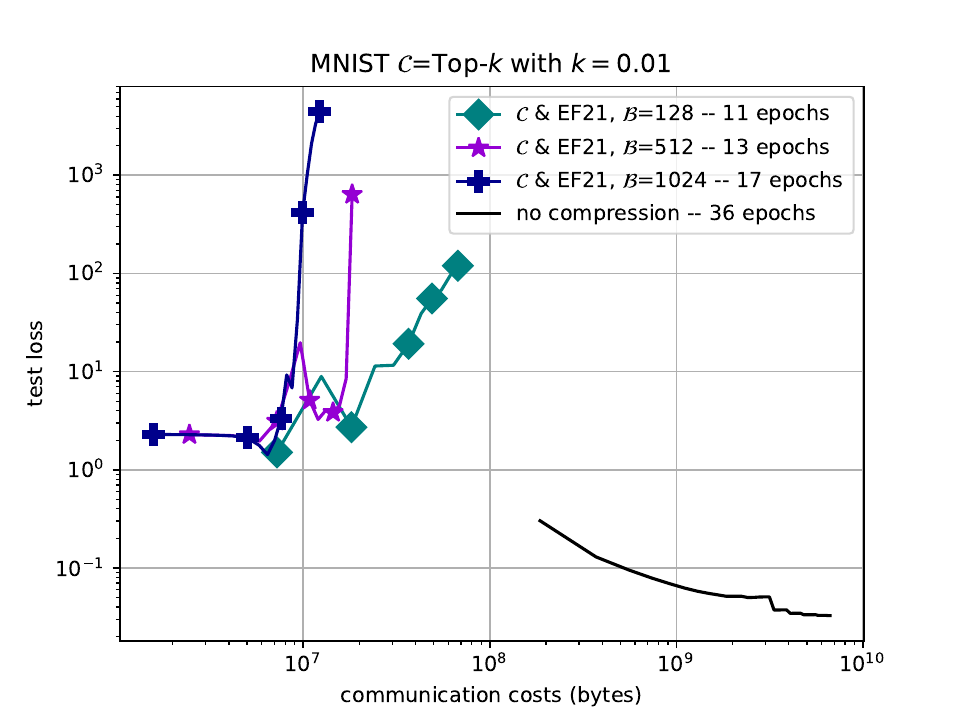} 
    \includegraphics[scale=.39]{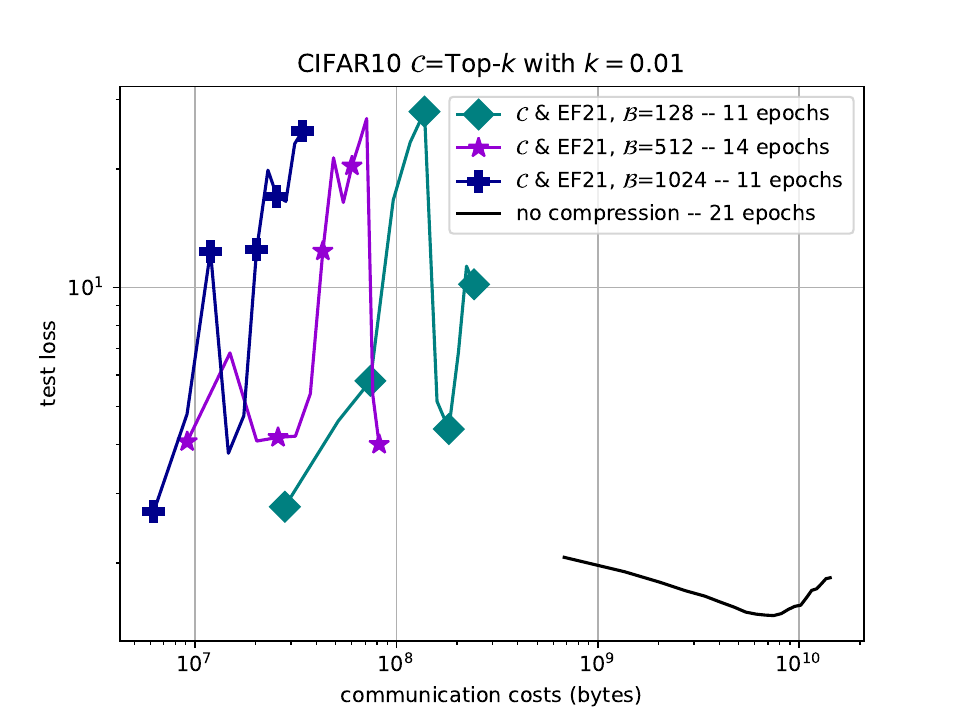} 
\end{figure}

\begin{figure}
    \caption{\texttt{DIANA} (Alg. \ref{alg:diana}) on MNIST under different batch sizes $\mathcal B$ and hyperparameters, with $M=3$ clients.}\label{fig12}
    \centering
    \includegraphics[scale=.5]{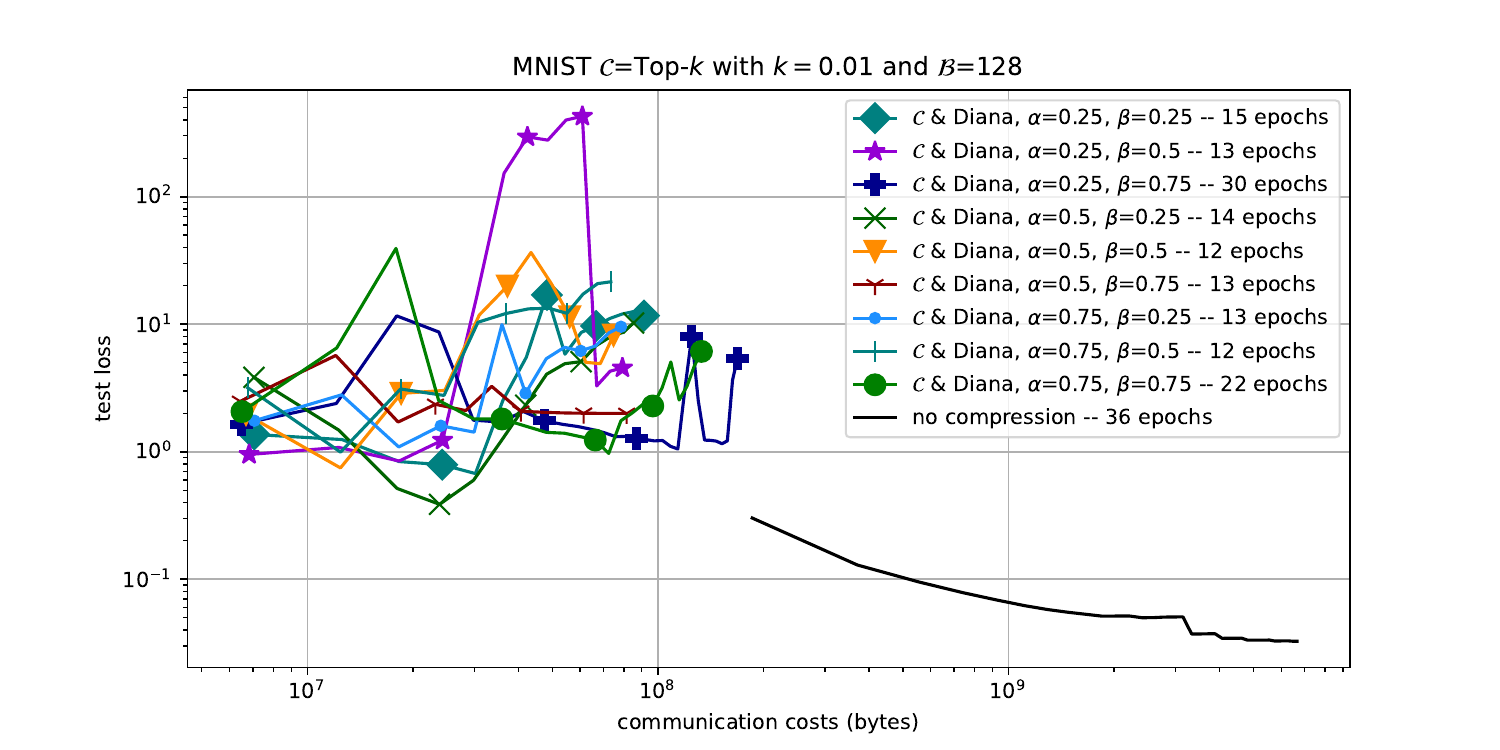} 
    \includegraphics[scale=.5]{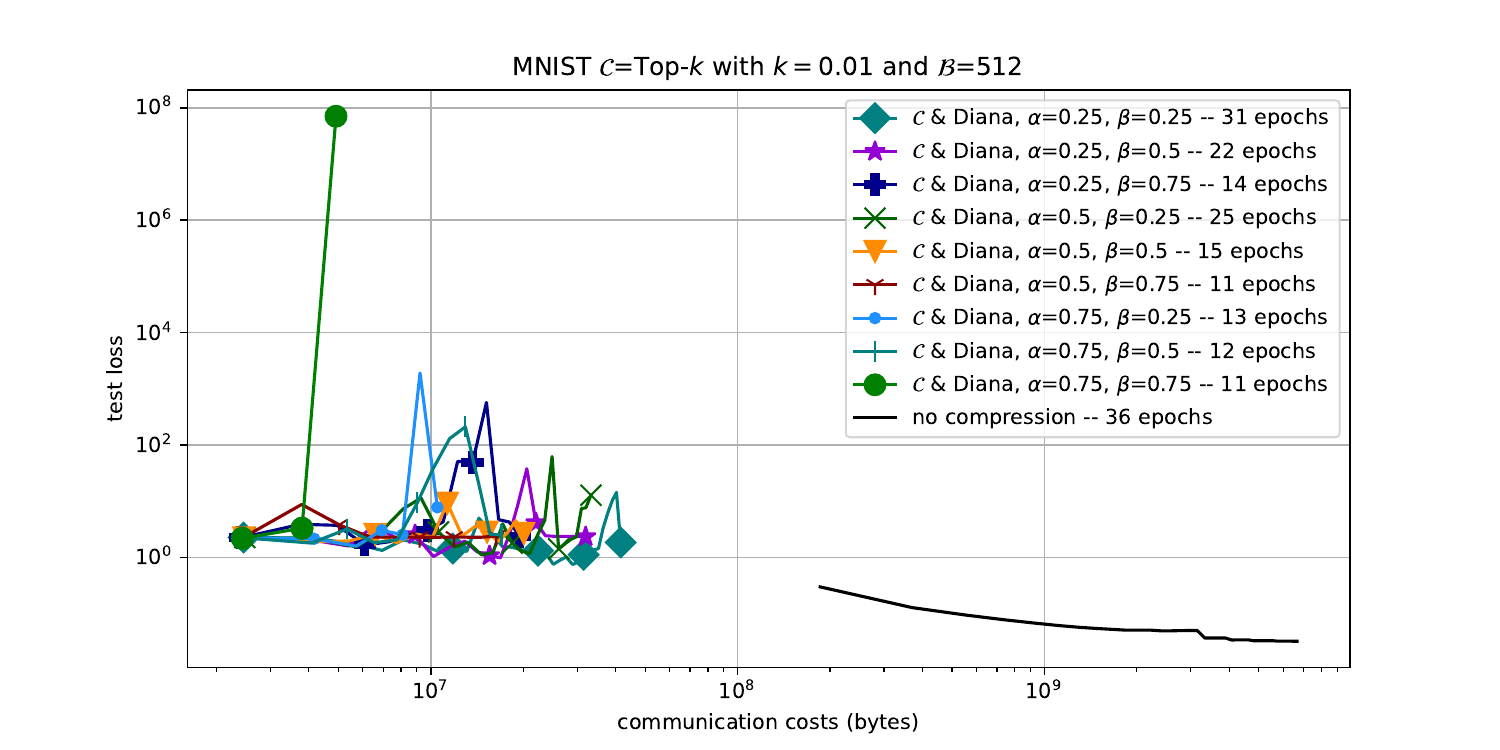} 
    \includegraphics[scale=.5]{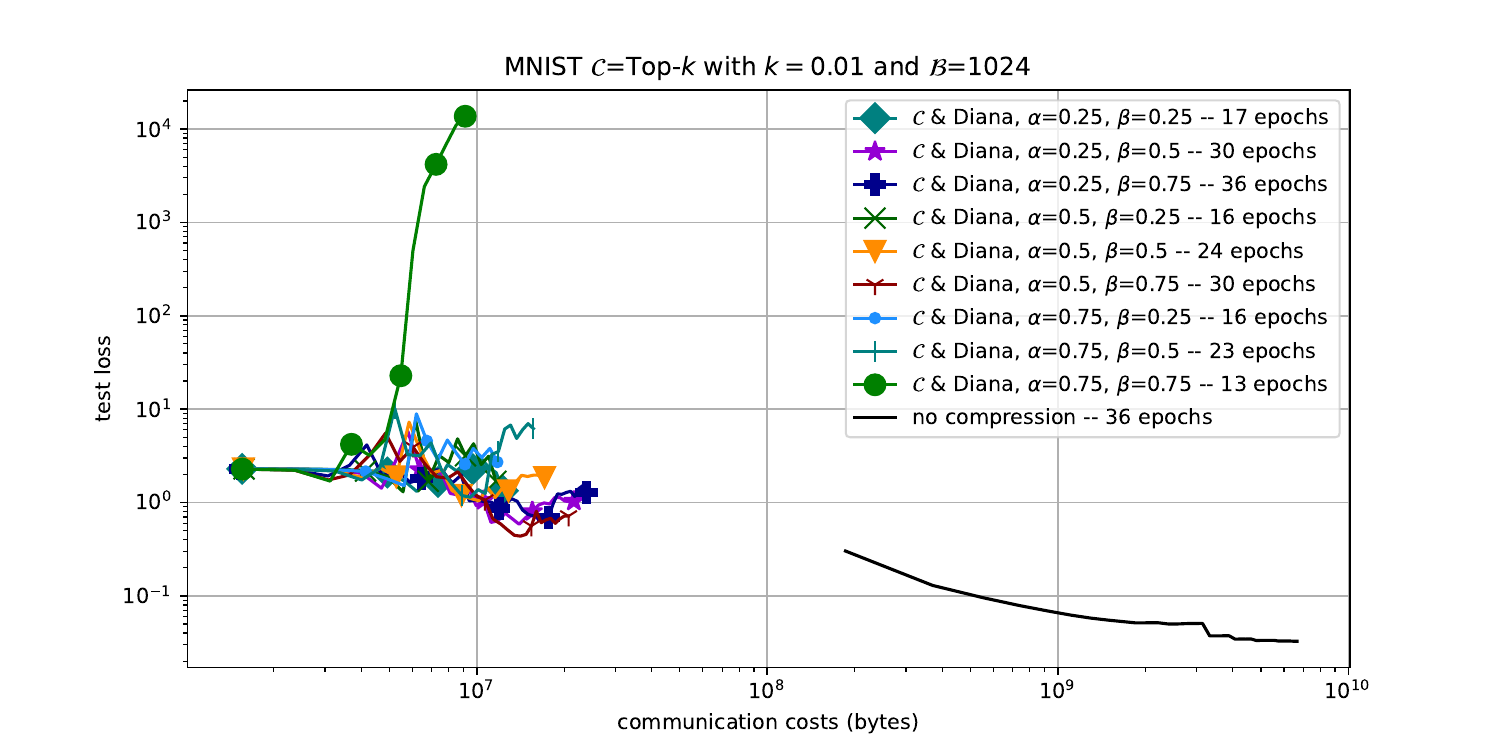}
\end{figure}

\begin{figure}
    \caption{\texttt{DIANA} (Alg. \ref{alg:diana}) on MNIST under different batch sizes $\mathcal B$ and hyperparameters, with $M=3$ clients.}
    \centering
    \includegraphics[scale=.5]{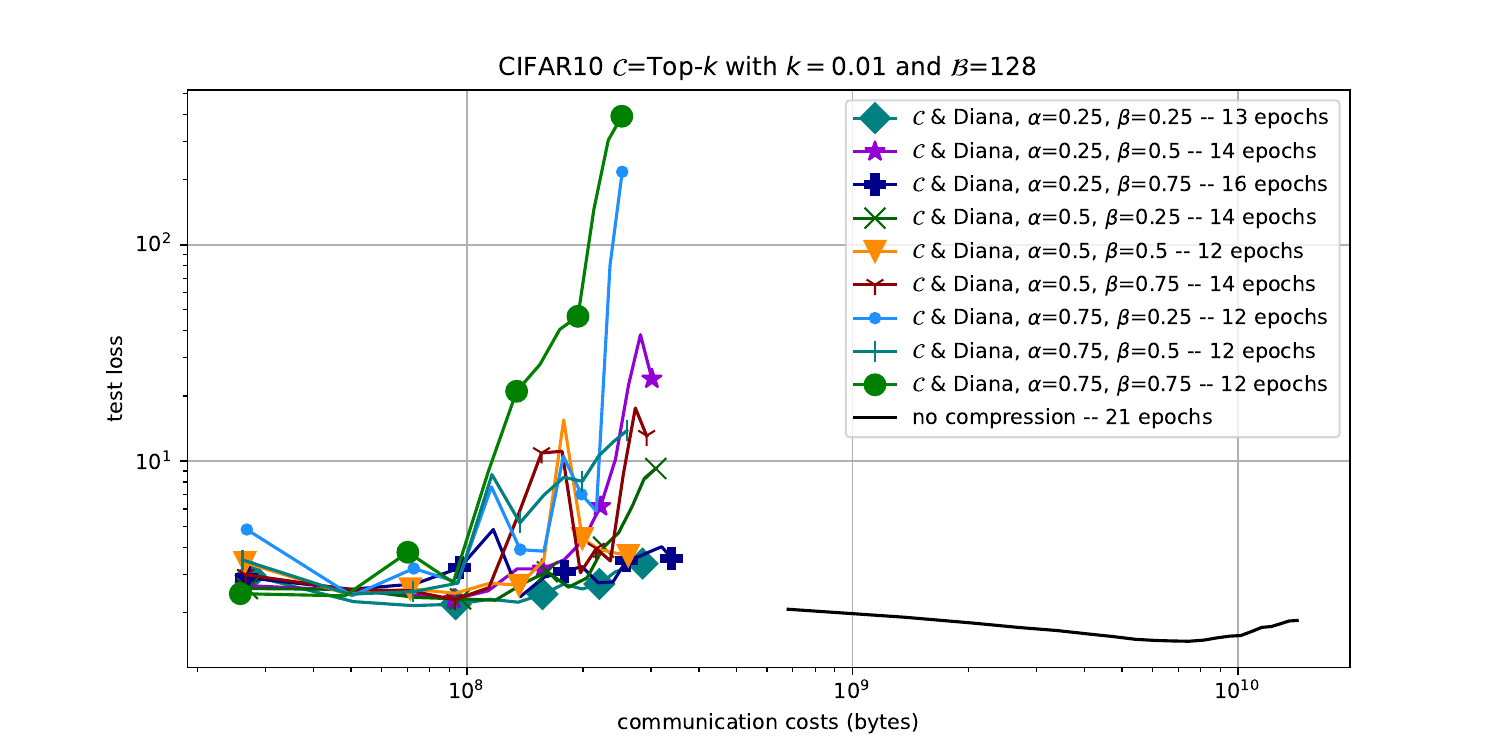} 
    \includegraphics[scale=.5]{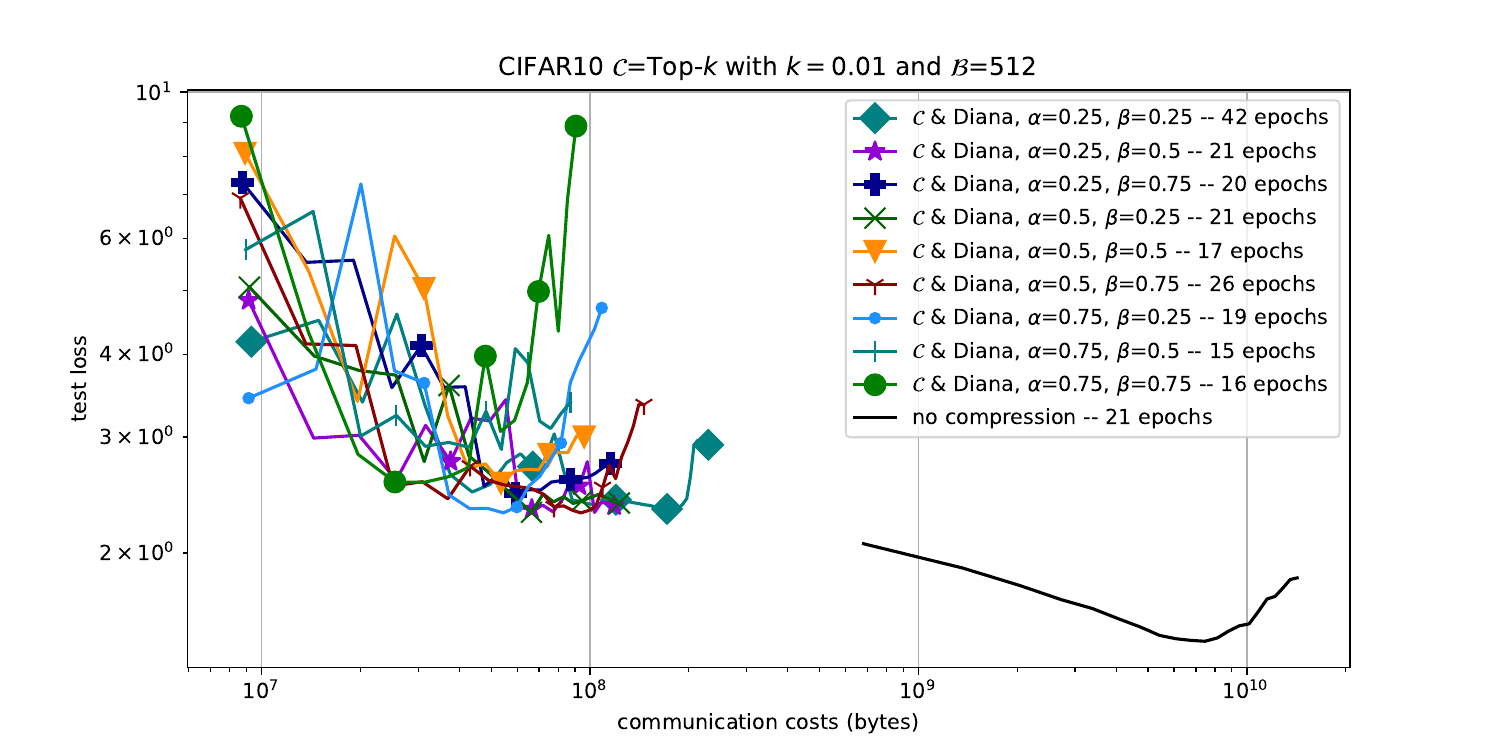} 
    \includegraphics[scale=.5]{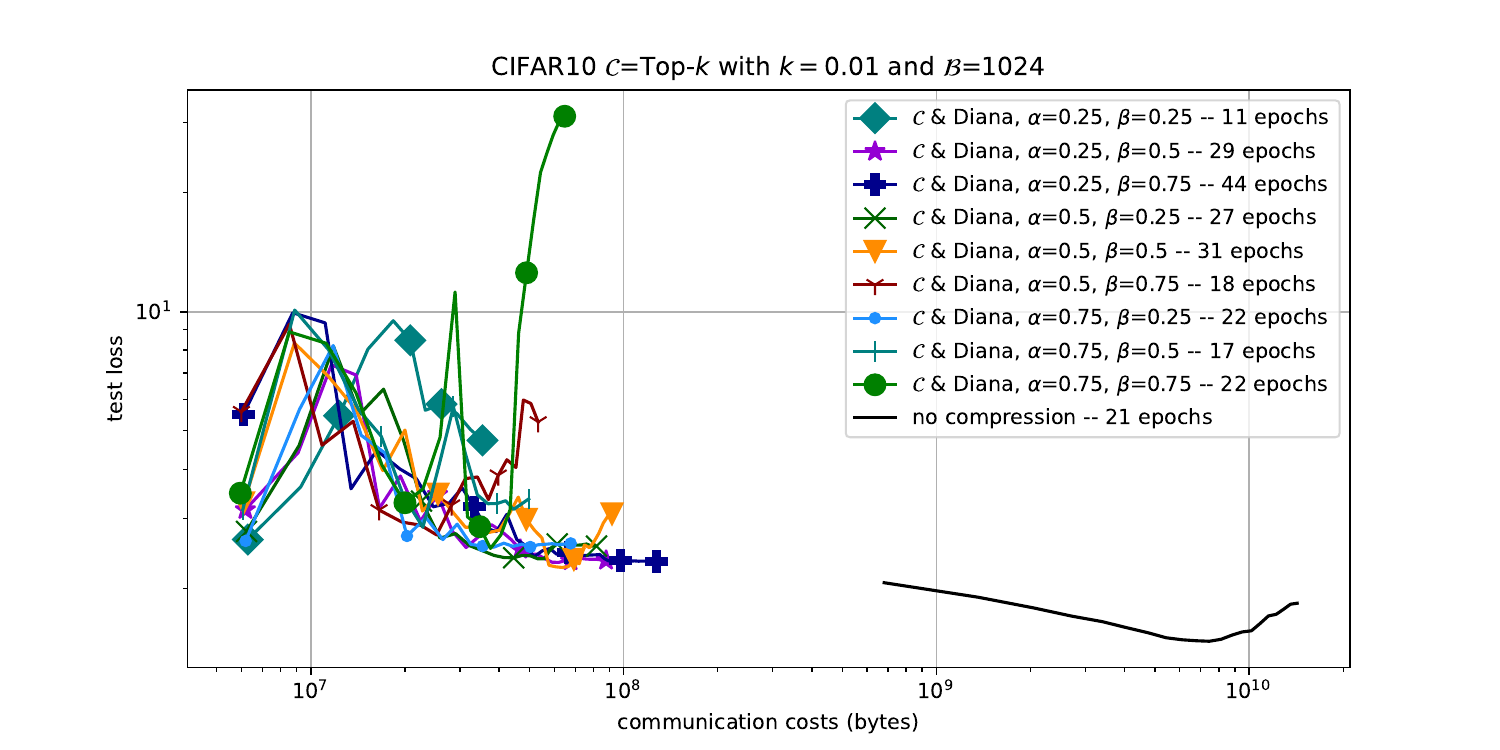}
\end{figure}

\end{appendices}
\end{document}